\definecolor{mydarkblue}{rgb}{0,0.08,0.45}
\theoremstyle{plain}
\newtheorem{theorem}{Theorem}[section]
\newtheorem{proposition}[theorem]{Proposition}
\newtheorem{lemma}[theorem]{Lemma}
\newtheorem{corollary}[theorem]{Corollary}
\theoremstyle{definition}
\newtheorem{definition}[theorem]{Definition}
\theoremstyle{remark}
\newtheorem{remark}[theorem]{Remark}
\DeclareMathOperator*{\argmax}{arg\,max}
\DeclareMathOperator*{\argmin}{arg\,min}
\DeclareMathOperator*{\mmd}{MMD}
\newcommand{\asxz}{\ \omega\text{-a.s.}}
\newcommand{\intxz}[1]{\int_{\mathcal{X} \times \mathcal{Z}} {#1}\  \omega(\mathrm{d}x \otimes \mathrm{d}z) }
\newcommand{\intx}[1]{\int_{\mathcal{X}} {#1} \omega(\mathrm{d}x) }
\newcommand{\avg}{\frac{1}{n} \sum_{i=1}^n}
\newcommand{\dd}{\mathrm{d}}
\title{Estimation Beyond Data Reweighting: \\ Kernel Method of Moments}
\author[$1$]{Heiner Kremer}
\author[$1$]{Yassine Nemmour}
\author[$1,2$]{Bernhard Sch{\"o}lkopf}
\author[$3$]{Jia-Jie Zhu}
\affil[$1$]{Max Planck Institute for Intelligent Systems, Tübingen, Germany}
\affil[$2$]{Eidgen\"ossische Technische Hochschule Zürich, Switzerland}
\affil[$3$]{Weierstrass Institute for Applied Analysis and Stochastics, Berlin, Germany}
\begin{document}
\floatpagestyle{plain}
\date{}
\maketitle

\begin{abstract}
    Moment restrictions and their conditional counterparts emerge in many areas of machine learning and statistics ranging from causal inference to reinforcement learning. 
    Estimators for these tasks, generally called \emph{methods of moments}, include the prominent \emph{generalized method of moments} (GMM) which has recently gained attention in causal inference.
    GMM is a special case of the broader family of \emph{empirical likelihood estimators} which are based on approximating a population distribution by means of minimizing a $\varphi$-divergence to an empirical distribution.
    However, the use of $\varphi$-divergences effectively limits the candidate distributions to reweightings of the data samples.
    We lift this long-standing limitation and provide a method of moments that goes beyond data reweighting. This is achieved by defining an empirical likelihood estimator based on maximum mean discrepancy which we term the \emph{kernel method of moments} (KMM). 
    We provide a variant of our estimator for conditional moment restrictions and show that it is asymptotically first-order optimal for such problems.
    Finally, we show that our method achieves competitive performance on several conditional moment restriction tasks.
\end{abstract}

\section{Introduction}
Many problems in machine learning, statistics, causal inference and economics can be formulated as (conditional) moment restrictions \citep{newey2003instrumental,angrist2008mostly}. Moment restrictions (MR) identify a parameter of interest by restricting the expectation over a so-called moment function to a fixed value. From a machine learning perspective, moment restrictions subsume empirical risk minimization since the corresponding first order conditions imply that the expectation of the gradient of the loss function vanishes.
A significantly harder problem is posed by conditional moment restrictions (CMR), which restrict the \emph{conditional} expectation of the moment function. In this case estimation effectively requires solving a continuum of unconditional moment restrictions \citep{BIERENS1982105}. 
A prominent CMR problem is instrumental variable (IV) regression \citep{newey2003instrumental}, where the expectation of the prediction residual conditioned on the instruments is required to be zero. The CMR formulation of IV regression is a powerful way to define estimators that avoid two-step procedures as, e.g., in the common two-stage least squares method \citep{angrist2008mostly}.
Other examples of CMR problems include variants of double machine learning \citep{chernozhukov2016double,chernozhukov2017double,chernozhukov2018double} and off-policy evaluation in reinforcement learning \citep{xu2021learning,chen2021instrumental,bennett2020efficient,bennett2021off}.
Perhaps the most popular approach to learning with moment restrictions is the generalized method of moments (GMM) of \citet{hansen}, which recently gained popularity in machine learning \citep{lewis2018adversarial,bennett2020variational}. 
GMM belongs to the wider family of generalized empirical likelihood (GEL) estimators of \citet{owen88,owen90,qin-lawless,smith1997alternative}. While moment restrictions are imposed with respect to a population distribution, in practice one usually only has access to an empirical sample from this distribution. GEL estimators are based on simultaneously finding the model parameters and an approximation of the population distribution by considering distributions with minimal distance to the empirical distribution for which the moment restrictions can be fulfilled exactly.
The various GEL estimators differ in the choice of $\varphi$-divergence used to define this distance. In this context, the continuous updating version of GMM \citep{hansen-finite-samples} can be interpreted as a GEL estimator with $\chi^2$-divergence.
However, the use of $\varphi$-divergences effectively restricts the set of candidate distributions to multinomial distributions on the empirical sample, i.e., reweightings of the data, which can be a crude approximation especially in the low sample regime.
In the present work, we define the first method of moments estimator that parts with this limitation by defining a GEL framework based on a fundamentally different notion of distributional distance, namely the maximum mean discrepancy (MMD). This allows us to consider arbitrary candidate distributions with support different from the empirical distribution. As in many cases the population distribution is continuous, this bears the potential to find better approximations thereof. 
In principle, our flexible framework even allows to evolve the class of candidate distributions over the course of the optimization and thus might benefit from developments in gradient flows and optimal transport. 
The practical benefit of our approach is demonstrated by competitive empirical performance. 
\paragraph{Our Contributions}
\begin{enumerate}
\vspace{-.5em}
  \setlength\itemsep{-.1em}
    \item 
    We propose the first method of moments estimator without the limitation to data reweightings by extending the GEL framework to MMD. We derive the dual problem of the resulting inner optimization problem which is a semi-infinitely constrained convex program. 
    \item
    To overcome computational challenges, we introduce entropy regularization and show that the dual of the inner problem gives rise to an unconstrained convex program, turning a semi-infinite formulation into either a soft-constraint or log-barrier setting.
    \item
    We provide the first order asymptotics and demonstrate that our estimator is asymptotically optimal for CMR estimation in the sense that it achieves the semi-parametric efficiency bound of \citet{CHAMBERLAIN1987305}. 
    \item
    We provide details on the practical implementation and empirically demonstrate state-of-the-art performance of our method on several CMR problems. 
    \item 
    We release an implementation of our method as part of a \href{https://github.com/HeinerKremer/conditional-moment-restrictions}{software package for (conditional) moment restriction estimation}.
\end{enumerate}
The remainder of the paper is structured as follows. 
Section~\ref{sec:learning-with-mom} gives an overview of method of moments estimation for conditional and unconditional moment restrictions. 
Section~\ref{sec:kel} introduces our estimator, and provides duality results as well as asymptotic properties and practical considerations. 
Section~\ref{sec:results} provides an empirical evaluation of our estimators on various conditional moment restriction tasks. Section~\ref{sec:related-work} discusses connections to related methods and Section~\ref{sec:conclusion} concludes.
\section{Background} \label{sec:learning-with-mom}
\paragraph{Method of Moments}
Let $X$ be a random variable taking values in $\mathcal{X} \subseteq \mathbb{R}^r$ distributed according to $P_0$. 
In the following we will denote the expectation with respect to a distribution $P$ by $E_{P}[\cdot]$ and drop the subscript whenever we refer to the population distribution $P_0$.
Moment restrictions identify a parameter of interest $\theta_0 \in \Theta \subseteq \mathbb{R}^p$ by restricting the expectation of a so-called moment function $\psi: \mathcal{X} \times \Theta \rightarrow \mathbb{R}^m$, such that
\begin{align*}
    E[\psi(X;\theta_0)] = 0 \in \mathbb{R}^m.
\end{align*}
In practice, the true distribution $P_0$ is generally unknown and one only has access to a sample $\{x_i \}_{i=1}^n$ with empirical distribution $\hat{P}_n = \sum_{i=1}^n \frac{1}{n} \delta_{x_i}$, where $\delta_{x_i}$ denotes a Dirac measure centered at $x_i$. 
The corresponding \emph{empirical} moment restrictions can be defined as
\begin{align*}
    E_{\hat{P}_n}[\psi(X;\theta)] = 0, \quad \theta \in \Theta.
\end{align*}
If the number of restrictions $m$ does not exceed the number of parameters $p$, these can often be solved exactly. For example, suppose we are interested in estimating the mean $\theta$ of a distribution. Then, solving the empirical moment restrictions for the moment function $\psi(X;\theta)= X-\theta$ yields the maximum likelihood estimate $\theta = \frac{1}{n} \sum_{i=1}^n x_i$. 
However, in the so-called \emph{overidentified} case with $m > p$, the system of equations is generally over-determined and the empirical moment restrictions cannot be satisfied exactly.
This is the domain of the celebrated generalized method of moments (GMM) of \citet{hansen}. Instead of trying to satisfy the moment restrictions exactly, GMM relaxes the problem into a minimization of a quadratic form,
\begin{align*}
    \theta^{\text{GMM}} \! = \! \argmin_{\theta \in \Theta} E_{\hat{P}_n}[\psi(X;\theta)]^T \! \left(\widehat{\Omega}(\tilde{\theta})\right)^{-1} \! E_{\hat{P}_n}[\psi(X;\theta)],
\end{align*}
where $\widehat{\Omega}(\tilde{\theta}) = E_{\hat{P}_n}[\psi(X;\tilde{\theta})\psi(X;\tilde{\theta})^T] \in \mathbb{R}^{m\times m}$ denotes the empirical covariance matrix evaluated at a first stage estimate $\tilde{\theta}$.
\paragraph{Empirical Likelihood Estimation}
GMM is a special case of the wider family of generalized empirical likelihood estimators \citep{owen88,owen90,qin-lawless}. In an attempt to improve the finite sample properties of GMM, alternative estimators from this family have been proposed~\cite{smith1997alternative,newey04}. GEL estimation is based on the idea that while it might not be possible to satisfy the moment restrictions with respect to the empirical distribution $\hat{P}_n$, the population distribution $P_0$, for which the moment restrictions hold at the true parameter $\theta_0$, will be in a shrinking neighbourhood of $\hat{P}_n$ as the number of samples $n$ grows. Therefore GEL seeks to find a parameter $\theta$ and a distribution $P$ for which the moment restrictions hold exactly while staying as close as possible to the empirical distribution.
For a convex function $\varphi: [0, \infty) \rightarrow (-\infty, \infty]$ define the $\varphi$-divergence between distributions $P$ and $Q$ as $D_{\varphi}(P||Q) = \int \!  \varphi \! \left( \frac{\dd P}{\dd Q} \right) \dd Q$, where $\frac{\dd P}{\dd Q}$ denotes the Radon-Nikodym derivative of $P$ with respect to $Q$. Define the \emph{profile divergence} with respect to a $\varphi$-divergence as
\begin{align}
    R(\theta) = & \inf_{P \ll \hat{P}_n} D_\varphi(P||\hat{P}_n) 
    \quad \mathrm{s.t.} \quad E_P[\psi(X;\theta)] = 0, \quad E_P[1] = 1, \label{eq:classical-profile}
\end{align}
where $P \ll \hat{P}_n$ is the set of positive measures $P$ that are absolutely continuous w.r.t.\ the empirical distribution $\hat{P}_n$. The GEL estimator then results from minimizing the profile divergence over $\theta \in \Theta$,
\begin{align*}
    \theta^{\mathrm{GEL}} = \argmin_{\theta \in \Theta} R(\theta).
\end{align*}
Due to the absolute continuity assumption, the distributions considered by GEL are reweightings of the empirical data. Being a special case of GEL, GMM therefore also implicitly corresponds to reweightings of the data as formalized by the following proposition which follows directly from the equivalence result of \citet{newey04} (Theorem~2.1).
\begin{proposition} 
    The first order optimality conditions for the continuous updating GMM estimator and the GEL estimator with $\chi^2$-divergence coincide. As the optimal distribution of the latter is given by $P^\ast = \sum_{i=1}^{n} p_i \delta_{x_i}$ for some $p \in \mathbb{R}^n$ with $\sum_{i=1}^n p_i =1 $, in consequence, GMM implicitly corresponds to a reweighting of the data.
\end{proposition}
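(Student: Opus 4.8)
The plan is to make the classical correspondence between Euclidean empirical likelihood and the continuous updating estimator explicit at the level of the inner problem, and then to invoke Theorem~2.1 of \citet{newey04} for the matching of the first order conditions. First I would fix the $\chi^2$-divergence, generated by the convex function $\varphi(t) = \tfrac12(t-1)^2$. Because the feasible set in~\eqref{eq:classical-profile} imposes $P \ll \hat{P}_n$ and $\hat{P}_n$ is supported on the finite sample $\{x_i\}_{i=1}^n$, every admissible $P$ is necessarily of the form $P = \sum_{i=1}^n p_i \delta_{x_i}$ with $p_i \ge 0$ and $\sum_i p_i = 1$; this already establishes the stated discrete structure of the minimizer $P^\ast$. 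Writing the Radon--Nikodym derivative as $\tfrac{\dd P}{\dd \hat{P}_n}(x_i) = n p_i$, the inner minimization reduces to the finite-dimensional quadratic program of minimizing $\tfrac{1}{2n}\sum_{i=1}^n (n p_i - 1)^2$ over $p$ subject to the linear constraints $\sum_i p_i\,\psi(x_i;\theta) = 0$ and $\sum_i p_i = 1$.

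Next I would solve this quadratic program in closed form. Substituting $p_i = \tfrac1n(1+\gamma_i)$ with $\sum_i\gamma_i = 0$, the objective becomes $\tfrac{1}{2n}\sum_i\gamma_i^2$, and the stationarity conditions, obtained by introducing a scalar multiplier for the normalization and a vector multiplier $\mu$ for the moment constraint, force each $\gamma_i$ to be an affine function of $\psi(x_i;\theta)$. Eliminating $\mu$ through the moment constraint yields the profile value
\[
    R(\theta) = \tfrac12\, E_{\hat{P}_n}[\psi(X;\theta)]^{T}\,\widehat{\Omega}(\theta)^{-1}\,E_{\hat{P}_n}[\psi(X;\theta)],
\]
where the weighting matrix is an empirical covariance evaluated at the \emph{same} $\theta$ over which we optimize (a centered version of $\widehat{\Omega}(\theta)$). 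Up to the positive factor $\tfrac12$, this is exactly the continuous updating GMM criterion, and the self-referential dependence of the weighting matrix on $\theta$ is the structural feature that the two estimators share.

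Finally I would equate the gradient of $R$ to zero and compare it with the first order condition of the continuous updating criterion. Since the two objectives agree up to a positive constant, their stationary points coincide, which is the assertion of the proposition. I expect the main obstacle to lie precisely in this last comparison: differentiating $R(\theta)$ produces extra terms coming from the $\theta$-dependence of the weighting matrix (and from the centering of the covariance), and it is exactly the handling of these terms that distinguishes continuous updating GMM from two-step GMM and makes the equivalence hold only for the former. Rather than carrying out this bookkeeping by hand, I would appeal directly to Theorem~2.1 of \citet{newey04}, which establishes the coincidence of the first order conditions. The reweighting conclusion then follows from the explicit discrete form of the minimizer $p^\ast$ derived in the first step.
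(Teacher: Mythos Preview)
Your proposal is correct and aligns with the paper's approach: the paper does not give a detailed proof but simply states, immediately before the proposition, that it ``follows directly from the equivalence result of \citet{newey04} (Theorem~2.1).'' You invoke the same theorem and additionally sketch the underlying finite-dimensional quadratic program and its closed-form solution, which is more than the paper itself provides; the discrete form of $P^\ast$ is indeed immediate from $P\ll\hat P_n$, and your remark about the centered versus uncentered covariance is the right place to defer to Newey--Smith rather than redo the bookkeeping.
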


\paragraph{Conditional Moment Restrictions}
In practice, many interesting problems can be formulated as \emph{conditional} moment restrictions, where the estimating equations are given by a conditional expectation over the moment function. Let $Z$ be an additional random variable taking values in $\mathcal{Z}$, then conditional moment restrictions take the form
\begin{align}
    E[\psi(X;\theta_0)|Z] = 0, \ P_Z\text{-a.s.}, \label{eq:conditional-mr}
\end{align}
where the restrictions need to hold almost surely (a.s.) with respect to the marginal distribution $P_Z$ over $Z$ corresponding to $P_0$.
As conditional moment restrictions are difficult to handle in practice, many proposed estimators rely on transforming them into a corresponding continuum of unconditional restrictions \citep{BIERENS1982105} of the form
\begin{align}
    E[\psi(X;\theta)^T h(Z)] = 0 \ \ \forall h \in \mathcal{H}, \label{eq:continuum-mr}
\end{align}
where the expectation is taken over the joint distribution of $X$ and $Z$ and $\mathcal{H}$ is a sufficiently rich function space. Examples of such spaces are the Hilbert space of square integrable functions or the reproducing kernel Hilbert space of a universal kernel \citep{universalkernel06}. Both the GMM and the GEL framework have been extended to conditional moment restrictions in multiple ways, building on basis function expansions of $\mathcal{H}$ \citep{Carrasco1,tripathi2003testing,ai2003efficient,chausse2012generalized,carrasco2017regularized} as well as modern machine learning models \citep{deepiv,lewis2018adversarial,bennett2020deep,kremer2022functional}.

\paragraph{Reproducing Kernel Hilbert Spaces}
A reproducing kernel Hilbert space (RKHS) $\mathcal{F}$ is a Hilbert space of functions $f: \mathcal{X} \rightarrow \mathbb{R}$ in which all point evaluation functionals are bounded. Let $\langle \cdot ,\cdot\rangle_\mathcal{F}$ denote the inner product on $\mathcal{F}$ and define the RKHS norm as the induced norm $\|f \|_\mathcal{F} = \sqrt{\langle f, f\rangle_\mathcal{F}}$.
With every RKHS one can associate a unique kernel $k: \mathcal{X} \times \mathcal{X} \rightarrow \mathbb{R}$ with the reproducing property $\langle k(x,\cdot), f\rangle_\mathcal{F} = f(x)$ for any $f \in \mathcal{F}$ and $x \in \mathcal{X}$. A kernel is called integrally strictly positive definite (ISPD) if for any $f \in \mathcal{F}$ with $0<\| f\|_2^2 < \infty$ we have $\int_{\mathcal{X}} f(x) k\left(x, x^{\prime}\right) f\left(x^{\prime}\right) \mathrm{d} x \mathrm{~d} x^{\prime}>0$.
Let $\mathcal{P}$ denote a space of probability distributions, then we define the kernel mean embedding of $P \in \mathcal{P}$ as $\mu_P = E_{P}[k(X,\cdot)] \in \mathcal{F}$, which has the property that $\langle \mu_P, f \rangle_\mathcal{F} = E_P[f(X)] \ \forall f \in \mathcal{F}$. 
This can be used to define a metric on a space of probability distributions $\mathcal{P}$. For $P, Q \in \mathcal{P}$ the maximum mean discrepancy (MMD) \citep{gretton2012kernel} is defined as $\operatorname{MMD}(P,Q;\mathcal{F}) := \|\mu_P  - \mu_Q\|_\mathcal{F}$. Refer to, e.g., \citet{scholkopf2002learning,berlinet2011reproducing,steinwart2008support} for comprehensive introductions to kernel methods for machine learning.

\section{Kernel Method of Moments} \label{sec:kel}
In this section we derive the KMM estimator for unconditional and conditional moment restrictions and explore its properties.
We first derive an exact MMD-based GEL estimator that leads to a difficult semi-infinitely constrained optimization problem for the \emph{MMD profile} $R(\theta)$.
We show that an entropy regularized version of our estimator leads to an unconstrained convex dual program which can be readily solved with, e.g., first order optimization methods. We show that our estimator is consistent and optimal for (conditional) moment restriction problems in the sense that it achieves the lowest possible asymptotic variance among all estimators based solely on the CMR. Finally we provide details on the computational procedure. All proofs are deferred to Section~\ref{sec:proofs}.
\subsection{Our Method}
Our goal is to define a profile function $R(\theta)$ based on maximum mean discrepancy instead of $\varphi$-divergences, such that the KMM estimator can be obtained as $\hat{\theta} = \argmin_{\theta \in \Theta}R(\theta)$. Let $\psi: \mathcal{X} \times \Theta \rightarrow \mathbb{R}^m$ denote a moment function and let $\mathcal{P}$ denote the space of positive measures. Further let $\mathcal{F}$ be an RKHS corresponding to a universal kernel.
Then we can define the \emph{MMD-profile} as 
\begin{align}
    R(\theta) =  \inf_{P \in \mathcal{P}} \frac{1}{2} \mmd(P, \hat P_n ;\mathcal{F})^2 
    \quad \mathrm{s.t.}\quad E_P[\psi(X;\theta)] =0, \quad E_P[1] =1, \label{eq:primal-mmd-profile-original} 
\end{align}
where we set $R(\theta) = \infty$ whenever the optimization problem is infeasible.
The restriction to RKHS $\mathcal{F}$ corresponding to universal kernels ensures that $\operatorname{MMD}(P, Q ;\mathcal{F}) = 0$ if and only if $P = Q$ and thus ensures uniqueness of the infimum in \eqref{eq:primal-mmd-profile-original} as $n \rightarrow \infty$.
Using Lagrange duality we can derive the corresponding dual problem as formalized in the following.
\begin{theorem} \label{th:unreg-duality}
The \emph{MMD profile} \eqref{eq:primal-mmd-profile-original} has the strongly dual form
\begin{align}
    R(\theta) = &\sup_{\substack{\eta \in \mathbb{R},f \in \mathcal{F} , \\  h \in \mathbb{R}^m}}  \frac{1}{n} \sum_{i=1}^n f(x_i) + \eta -\frac{1}{2} \|f \|_\mathcal{F}^2  \label{eq:unreg-mmd-profile} \\ & \mathrm{s.t.} \quad f(x) + \eta  \leq \psi(x;\theta)^T h \quad \forall x \in \mathcal{X}.  \nonumber
\end{align}
\end{theorem}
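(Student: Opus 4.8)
The plan is to read \eqref{eq:primal-mmd-profile-original} as a convex program in the measure $P$ and to dualize only the two explicit equality constraints, keeping positivity $P\in\mathcal{P}$ implicit in the cone over which we minimize. First I would rewrite the objective through kernel mean embeddings, $\tfrac12\mmd(P,\hat P_n;\mathcal{F})^2 = \tfrac12\|\mu_P-\mu_{\hat P_n}\|_\mathcal{F}^2$ with $\mu_P = E_P[k(X,\cdot)]$ and $\mu_{\hat P_n}=\avg k(x_i,\cdot)$; this is a convex quadratic in $\mu_P$, which in turn is \emph{linear} in $P$, and both constraints $E_P[\psi(X;\theta)]=0$ and $E_P[1]=1$ are affine in $P$. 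Introducing a multiplier $h\in\mathbb{R}^m$ for the moment constraint and $\eta\in\mathbb{R}$ for the normalization yields the Lagrangian
\begin{align*}
    L(P,h,\eta) = \tfrac12\|\mu_P-\mu_{\hat P_n}\|_\mathcal{F}^2 + \langle h, E_P[\psi(X;\theta)]\rangle_{\mathbb{R}^m} + \eta\big(1 - E_P[1]\big),
\end{align*}
and the dual function is $\inf_{P\in\mathcal{P}}L(P,h,\eta)$.

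The key device for evaluating this infimum is to linearize the quadratic term in $P$. I would apply the Fenchel identity $\tfrac12\|v\|_\mathcal{F}^2=\sup_{f\in\mathcal{F}}\big(\langle f,v\rangle_\mathcal{F}-\tfrac12\|f\|_\mathcal{F}^2\big)$ with $v=\mu_P-\mu_{\hat P_n}$, using the reproducing property to write $\langle f,\mu_P-\mu_{\hat P_n}\rangle_\mathcal{F}=E_P[f(X)]-\avg f(x_i)$. The dual function then becomes a saddle problem in $(P,f)$; after exchanging the infimum over $P$ with the supremum over $f$, the entire remaining dependence on $P$ is linear and collects into $\inf_{P\in\mathcal{P}}\int_\mathcal{X}\big(f(x)+\langle h,\psi(x;\theta)\rangle_{\mathbb{R}^m}-\eta\big)\,\dd P(x)$. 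Minimizing a linear functional over the cone of positive measures equals $0$ precisely when the integrand is pointwise nonnegative and is $-\infty$ otherwise; this is exactly what converts the problem into the semi-infinite constraint $f(x)+\langle h,\psi(x;\theta)\rangle_{\mathbb{R}^m}-\eta\ge 0$ for all $x\in\mathcal{X}$, leaving the objective $-\avg f(x_i)-\tfrac12\|f\|_\mathcal{F}^2+\eta$. A final sign change $f\mapsto -f$ flips the linear term and the constraint simultaneously into the stated form \eqref{eq:unreg-mmd-profile}.

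The main obstacle is to certify that no duality gap is introduced by the inf--sup exchange, i.e.\ that the equality in the theorem (strong, not merely weak, duality) genuinely holds. I would argue this from convexity: the primal is a convex program with a continuous convex objective, affine equality constraints, and the convex cone $\mathcal{P}$, so it suffices to verify a Slater-type constraint qualification, namely the existence of a strictly feasible measure whose embedding lies in the relative interior of the attainable range of $\mu_P$ subject to the normalization. Under such a qualification, standard Lagrangian strong duality applies, and equivalently a Sion-type minimax theorem justifies the $(P,f)$ exchange using convexity in $P$ and strict concavity in $f$ (the coercive term $-\tfrac12\|f\|_\mathcal{F}^2$ confines the relevant $f$ to a bounded set). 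Boundedness of the universal kernel ensures $\mu_P$ and all integrals are well defined; attainment of the primal infimum when finite follows from weak-$\ast$ compactness, and infeasibility corresponds to $R(\theta)=+\infty$ consistently on both sides.
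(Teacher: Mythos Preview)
Your argument is correct and lands on the same dual by the same mechanism: dualize the two affine constraints, then minimize the resulting linear functional over the cone of positive measures to convert it into the pointwise semi-infinite constraint. The one structural difference is how the quadratic MMD term is linearized. You invoke the Fenchel identity $\tfrac12\|v\|_\mathcal{F}^2=\sup_{f\in\mathcal{F}}\big(\langle f,v\rangle_\mathcal{F}-\tfrac12\|f\|_\mathcal{F}^2\big)$ with $v=\mu_P-\mu_{\hat P_n}$, which inserts a $\sup_f$ inside the $\inf_P$ and then obliges you to justify the exchange. The paper instead promotes $\mu$ to an explicit primal variable, adds the equality constraint $\int k(x,\cdot)\,\dd P=\mu$, and lets $f\in\mathcal{F}$ enter as the Lagrange multiplier for that constraint; now $(P,\mu)$ are jointly primal, $(f,\eta,h)$ are jointly dual, the minimizations over $P$ and over $\mu$ decouple, and no minimax swap is needed. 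The two devices are standard and equivalent (auxiliary-variable splitting is precisely the Lagrangian realization of your Fenchel step), but the paper's version sidesteps the Sion-type argument you identify as the main obstacle and simply asserts strong duality from the fact that all constraints are affine equalities. Conversely, your discussion of constraint qualification and attainment is more careful than the paper's one-line justification.
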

Note that the structure and derivation of \eqref{eq:unreg-mmd-profile} resembles recent reformulation techniques for MMD-based distributionally robust optimization (DRO) by \citet{zhu2021kernel} and in fact GEL estimation can be seen as a dual problem to DRO \citep{lamRecoveringBestStatistical2019}.
While MMD enjoys many favorable properties, 
the MMD-profile~\eqref{eq:unreg-mmd-profile}, involves a (semi-)infinite constraint which is difficult to handle in practice, especially when combined with stochastic-gradient-type algorithms in machine learning.
In the following section, we will show that these limitations can be lifted by introducing entropy regularization.
\subsection{Entropy Regularization} \label{sec:regularization}
Inspired by the interior point method for convex optimization in finite-dimensions \cite{nesterov1994interior}, we define an entropy-regularized version of the MMD profile \eqref{eq:unreg-mmd-profile}. This allows us to translate the semi-infinite constraint in \eqref{eq:unreg-mmd-profile} into an additional term in the objective of an unconstrained optimization problem.
Note that entropy regularization has been used before in the context of the computation of optimal transport distances~\citep{cuturi2013sinkhorn}. Our use here is different as we do not regularize a distance computation but instead regularize the duality structure to handle the semi-infinite constraint. To the best of our knowledge entropy regularization has not been combined with MMD in this context.
For a convex function $\varphi: [0, \infty) \rightarrow (-\infty, \infty]$ define the relative entropy (or $\varphi$-divergence) between a reference distribution $\omega$ and a distribution $P$, which admits a density $p$ w.r.t.\ $\omega$, as
\begin{align}
    D_\varphi(P || \omega) = \intx{\varphi(p(x))}. \label{eq:entropy}
\end{align}
Then for any $\epsilon >0$ we define the \emph{entropy-regularized MMD profile} for a moment restriction of the form $E[\psi(X;\theta_0)]=0$ as  
\begin{align}
     R^\varphi_\epsilon(\theta) = & \inf_{P \ll \omega} \frac{1}{2}  \mmd(P,\hat{P}_n;\mathcal{F})^2 + \epsilon D_\varphi(P||\omega) \quad \mathrm{s.t.} \quad E_P[\psi(X;\theta)] =0, \quad E_P[1] =1. \label{eq:mmd-profile} 
\end{align}
In contrast to the classical $\varphi$-divergence-based profile divergence \eqref{eq:classical-profile}, the regularized MMD profile does not require $P \ll \hat{P}_n$. Instead, absolute continuity is only imposed with respect to an arbitrary (potentially continuous) reference distribution $\omega$ which can be constructed in a data-driven way (cf.\ Section~\ref{app:reference-measure}). In practice, by sampling from $\omega$, this allows us to approximate the population distribution $P_0$ in an arbitrarily fine-grained way instead of using mere reweightings of the training data as in GEL/GMM, which can be a rough approximation especially in the low data regime.
Using Lagrangian duality we can derive the dual problem of \eqref{eq:mmd-profile} as formalized in the following theorem.
\begin{theorem}[KMM Duality] \label{th:kel-duality}
The entropy-regularized MMD profile \eqref{eq:mmd-profile} has the strongly dual form
\begin{align}
    R^\varphi_\epsilon(\theta) = \sup_{\substack{\eta \in \mathbb{R},f \in \mathcal{F} , \\  h \in \mathbb{R}^m}} \frac{1}{n} \sum_{i=1}^n f(x_i) + \eta -\frac{1}{2} \|f \|_{\mathcal{F}}^2
    - \epsilon \intx{ \varphi^\ast \left( \frac{f(x) + \eta -  \psi(x;\theta)^T h }{\epsilon} \right) }. \label{eq:dual-profile} 
\end{align}
where $\varphi^\ast(t):=\sup_s \langle t, s \rangle - \varphi(s)$ denotes the convex conjugate of $\varphi$.
The optimization problem in \eqref{eq:dual-profile} is  jointly convex in the dual variables $(\eta, f, h)$.
\end{theorem}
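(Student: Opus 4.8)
The plan is to derive the dual of the regularized MMD profile \eqref{eq:mmd-profile} via Lagrangian duality, using the structure already established in the unregularized case (Theorem~\ref{th:unreg-duality}). First I would write out the MMD term in its RKHS-inner-product form. Since $P$ admits a density $p$ with respect to $\omega$, the mean embedding of $P$ is $\mu_P = \int_{\mathcal{X}} k(x,\cdot)\, p(x)\, \omega(\dd x)$, and $\mu_{\hat P_n} = \frac{1}{n}\sum_i k(x_i,\cdot)$. A key reformulation step, analogous to the one behind \eqref{eq:unreg-mmd-profile}, is to linearize the squared norm $\frac{1}{2}\|\mu_P - \mu_{\hat P_n}\|_\mathcal{F}^2$ by introducing an auxiliary function $f \in \mathcal{F}$ through the Fenchel identity $\frac{1}{2}\|v\|_\mathcal{F}^2 = \sup_{f \in \mathcal{F}} \langle f, v\rangle_\mathcal{F} - \frac{1}{2}\|f\|_\mathcal{F}^2$ applied to $v = \mu_P - \mu_{\hat P_n}$. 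Using the reproducing property, $\langle f, \mu_P\rangle_\mathcal{F} = E_P[f(X)] = \int_{\mathcal{X}} f(x) p(x)\, \omega(\dd x)$ and $\langle f, \mu_{\hat P_n}\rangle_\mathcal{F} = \frac{1}{n}\sum_i f(x_i)$, which already produces the $\frac{1}{n}\sum_i f(x_i) - \frac{1}{2}\|f\|_\mathcal{F}^2$ terms appearing in \eqref{eq:dual-profile}.

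Next I would attach Lagrange multipliers to the two constraints: a vector $h \in \mathbb{R}^m$ for $E_P[\psi(X;\theta)] = \int_{\mathcal{X}} \psi(x;\theta) p(x)\, \omega(\dd x) = 0$, and a scalar $\eta \in \mathbb{R}$ for the normalization $E_P[1] = \int_{\mathcal{X}} p(x)\,\omega(\dd x) = 1$. Writing out the full Lagrangian and grouping the terms that are linear in the density $p(x)$, the inner infimum over $P \ll \omega$ (equivalently over densities $p \geq 0$) decouples pointwise in $x$: at each point the density sees the coefficient $\bigl(f(x) + \eta - \langle h, \psi(x;\theta)\rangle\bigr)$ multiplied against $p(x)$, competing against the regularizer $\epsilon\, \varphi(p(x))$. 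Minimizing $\epsilon\varphi(p) - p\cdot c$ over $p$ is precisely, up to sign and the factor $\epsilon$, the definition of the convex conjugate $\varphi^\ast$; carrying out this pointwise minimization and integrating against $\omega$ yields exactly the $-\epsilon \int_{\mathcal{X}} \varphi^\ast\!\bigl(\tfrac{f(x)+\eta - \langle h,\psi(x;\theta)\rangle}{\epsilon}\bigr)\,\omega(\dd x)$ term. This is the computational heart of the derivation and is what converts the hard semi-infinite constraint of \eqref{eq:unreg-mmd-profile} into a soft penalty, recovering that constraint in the limit $\epsilon \to 0$ since $\varphi^\ast$ becomes the indicator of the feasible region.

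To make the duality \emph{strong} rather than merely weak, I would invoke a Fenchel--Rockafellar or convex duality theorem, which requires checking a constraint qualification (e.g.\ a Slater-type interior-point condition guaranteeing the existence of a strictly feasible density $p$ with the moment and normalization constraints satisfiable). The primal objective is convex in $P$ (the squared MMD is convex in $\mu_P$ which is linear in $p$, and $D_\varphi$ is convex by convexity of $\varphi$), and the constraints are affine in $p$, so the setup is amenable to standard infinite-dimensional convex duality once the qualification holds. Finally, joint convexity of \eqref{eq:dual-profile} in $(\eta, f, h)$ follows because each of the first three terms is affine or concave in $f$ (with $-\frac{1}{2}\|f\|_\mathcal{F}^2$ concave), and the last term is jointly convex in its argument composed with the affine map $(\eta,f,h) \mapsto f(x)+\eta-\langle h,\psi(x;\theta)\rangle$ since $\varphi^\ast$ is convex and it enters with a negative sign inside a $\sup$ — so the whole objective is concave in $(\eta,f,h)$ and the $\sup$ is well-posed.

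The main obstacle I anticipate is the rigorous justification of strong duality in this infinite-dimensional, semi-infinite setting: exchanging the order of the $\inf_P$ and the $\sup_f$ introduced by the Fenchel linearization, and verifying the constraint qualification so that there is no duality gap. The pointwise conjugation and the bookkeeping of terms are essentially mechanical once this interchange is licensed, so the careful part is setting up the correct function spaces (pairing densities in an $L$-space against the multipliers and $f$) and appealing to the appropriate minimax or Fenchel duality theorem under a Slater condition.
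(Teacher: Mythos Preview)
Your proposal is correct and follows essentially the same route as the paper. The only cosmetic difference is in how $f$ enters: you apply the Fenchel identity $\tfrac12\|v\|_{\mathcal F}^2=\sup_{f}\langle f,v\rangle_{\mathcal F}-\tfrac12\|f\|_{\mathcal F}^2$ directly to $v=\mu_P-\mu_{\hat P_n}$, whereas the paper introduces an auxiliary primal variable $\mu\in\mathcal F$ with the equality constraint $\int k(x,\cdot)\,\dd P=\mu$, takes $f$ as the Lagrange multiplier for that constraint, and then eliminates $\mu$ by completing the square ($\mu=\mu_{\hat P_n}-f$). These are two presentations of the same computation. The paper's packaging has one practical payoff relevant to the obstacle you anticipate: because everything sits in a single Lagrangian with only affine equality constraints, the paper dispatches strong duality in one line (``follows trivially as the primal problem only contains equality constraints'') and never needs a Slater-type qualification or a separate minimax theorem to license the $\inf$--$\sup$ swap. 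Your pointwise conjugation over the density $p$ and your concavity argument for the dual objective match the paper's proof exactly.
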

As opposed to the unregularized version \eqref{eq:unreg-mmd-profile} the entropy-regularized MMD profile \eqref{eq:mmd-profile} is a jointly convex, unconstrained optimization problem over the dual variables. 
Finally the KMM estimator can be obtained as the minimizer of the entropy-regularized MMD profile
\begin{align*}
    \hat{\theta} = \argmin_{\theta \in \Theta} R^{\varphi}_\epsilon(\theta).
\end{align*}
\subsection{Choices of Entropy Regularizers} \label{sec:choices}
\begin{figure}
    \centering
    \includegraphics[width=.5\linewidth]{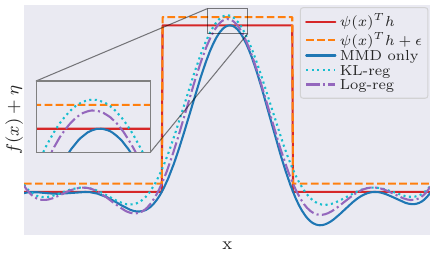}
    \caption{Effect of Entropy Regularization. The red and orange lines correspond to an exemplary function $\psi(x;\theta)^T h$ and its relaxation $\psi(x;\theta)^T h + \epsilon$. The blue line shows the strictly minorizing RKHS function resulting from enforcing the constraint in \eqref{eq:unreg-mmd-profile} exactly. The cyan and purple lines correspond to the $\varphi$-divergence regularized problem. The log-divergence works as a barrier-function which allows to violate the constraint in \eqref{eq:unreg-mmd-profile} by up to $\epsilon$. The KL-divergence yields a soft constraint by penalizing violations exponentially.}
    \label{fig:f-divergences}
\end{figure}
Different choices of $\varphi$-divergences in \eqref{eq:entropy} correspond to different relaxations of the generally intractable semi-infinite constraint in \eqref{eq:unreg-mmd-profile}.
Choosing the $\varphi$-divergence as the Kullback-Leibler divergence, i.e., $\varphi(p) = p \log(p) - p + 1$ we obtain
\begin{align*}
    R^\mathrm{KL}_\epsilon(\theta) = \sup_{\substack{\eta \in \mathbb{R},f \in \mathcal{F} , \\  h \in \mathbb{R}^m}} \frac{1}{n} \sum_{i=1}^n f(x_i) + \eta -\frac{1}{2} \|f \|_{\mathcal{F}}^2  - \epsilon \intx{ \exp \left( \frac{f(x) + \eta - \psi(x;\theta)^T h }{\epsilon} \right) }.
\end{align*}
This corresponds to relaxing the constraint in \eqref{eq:unreg-mmd-profile} into a soft version, such that violations can occur but are exponentially penalized.
Another particularly interesting example is obtained by choosing the $\varphi$-divergence to be the backward KL-divergence or Burg's entropy $\varphi(p) = - \log p +p -1$, which leads to a dual regularized MMD profile with a log-barrier
\begin{align*}
    R^{\mathrm{log}}_\epsilon(\theta) =& \sup_{\substack{\eta \in \mathbb{R},f \in \mathcal{F} , \\  h \in \mathbb{R}^m}} \frac{1}{n} \sum_{i=1}^n f(x_i) + \eta -\frac{1}{2} \|f \|_{\mathcal{F}}^2  
    + \epsilon \intx{ \log \left( 1- \frac{f(x) + \eta -  \psi(x;\theta)^T h }{\epsilon} \right) }. 
\end{align*}
Numerically, the log-barrier enforces the solution to lie in the interior of the constraint set, i.e.,
\begin{align*}
    {f(x_i) + \eta - \psi(x_i;\theta)^T h } < {\epsilon}.
\end{align*}
Therefore, this can be seen as an interior-point method for handling the infinite constraint in \eqref{eq:unreg-mmd-profile}. 
Figure~\ref{fig:f-divergences} provides a visualization of the different regularization schemes. Refer to Section~\ref{sec:appendix-entropy} for additional details.
\subsection{KMM for Conditional Moment Restrictions}
The KMM estimator can be generalized to conditional moment restrictions via a functional formulation following the approach of \citet{kremer2022functional}.

Suppose we have a sufficiently rich Hilbert space $\mathcal{H}$ of functions $h: \mathcal{Z} \rightarrow \mathbb{R}^m$, such that we can write conditional moment restrictions of the form~\eqref{eq:conditional-mr} as a continuum of unconditional restrictions~\eqref{eq:continuum-mr}. Let $\mathcal{H}^\ast$ denote the dual space of functionals $\Psi: \mathcal{H} \rightarrow \mathbb{R}$, equipped with the dual norm $\| \Psi \|_{\mathcal{H}^\ast} = \sup_{\|h \|_\mathcal{H}=1} \Psi(h)$.
Then for any $(x,z,\theta) \in \mathcal{X} \times \mathcal{Y} \times \Theta$ we define the \emph{moment functional} $\Psi(x,z;\theta) \in \mathcal{H}^\ast$ such that 
\begin{align*}
    \Psi(x,z;\theta)(h) = \psi(x;\theta)^T h(z).
\end{align*}
The continuum of moment restrictions \eqref{eq:continuum-mr} can thus be written as
\begin{align}
    E[\Psi(X,Z;\theta)] = 0 \in \mathcal{H}^\ast, \label{eq:fmr}
\end{align}
where $0 \in \mathcal{H}^\ast$ describes the functional that maps each element in $\mathcal{H}$ to zero, which is equivalent to requiring $\| E[\Psi(X,Z;\theta)]\|_{\mathcal{H}^\ast} = 0 $. 
By the Riesz representation theorem \citep{zeidler2012applied}, we can identify each element $\Psi \in \mathcal{H}^\ast$ with an element $\phi(\Psi) \in \mathcal{H}$ such that $\Psi(h) = \langle \phi(\Psi), h \rangle_\mathcal{H}$ $\forall h \in \mathcal{H}$ and $\|\Psi \|_{\mathcal{H}^\ast} = \| \phi(\Psi) \|_\mathcal{H}$ . 
%
Generalizing the KMM estimator to conditional moment restrictions then just becomes a matter of substituting 
\begin{align*}
    h \in \mathbb{R}^m &\rightarrow h \in \mathcal{H} \\
    \psi(x;\theta)^T h &\rightarrow \Psi(x,z;\theta)(h) 
\end{align*}
and adding a regularization term $-\frac{1}{2} \| h\|_\mathcal{H}^2$ for the Lagrange parameter $h \in \mathcal{H}$, which regularizes the first order conditions for $h$ as argued by \citet{kremer2022functional}.
With this at hand, we can define the functional version of the entropy-regularized MMD profile for conditional moment restrictions (refer to Section~\ref{sec:primal-cmr} for details on the duality relationship).
\begin{definition}[Functional KMM] \label{def:kmm-cmr}
Let $\mathcal{H} \subseteq L^2(\mathcal{Z},\mathbb{R}^m,P_Z)$ be a sufficiently rich Hilbert space of functions $\mathcal{Z} \rightarrow \mathbb{R}^m$ such that equivalence between \eqref{eq:conditional-mr} and \eqref{eq:continuum-mr} holds. Then the entropy-regularized MMD profile for conditional moment restrictions is given as
\begin{align}
    R^{\varphi}_{\epsilon,\lambda_n} \! (\theta) 
    =& \! \! \sup_{\substack{\eta \in \mathbb{R}, f \in \mathcal{F}, \\ h \in \mathcal{H}}} \frac{1}{n} \sum_{i=1}^n f(x_i,z_i) + \eta -\frac{1}{2} \|f \|_{\mathcal{F}}^2 - \frac{\lambda_n}{2} \|h \|_{\mathcal{H}}^2 \label{eq:functional-kgel} \\
    &- \epsilon \intxz{ \varphi^\ast \! \left( \frac{f(x,z) + \eta - \Psi(x,z;\theta)(h) 
    }{\epsilon} \right)}. \nonumber
\end{align}
\end{definition}
From the proof of Theorem~\ref{th:kel-duality} it directly follows that the optimization problem in \eqref{eq:functional-kgel} is jointly convex in the dual variables.
The space $\mathcal{H}$ can be chosen, e.g., as the RKHS of a universal integrally strictly positive definite (ISPD) kernel~\citep{JMLR:v19:16-291} to guarantee the consistency of the solution for the conditional moment restrictions~\citep{kremer2022functional}. In practice, alternative choices, e.g., neural networks which lack these theoretical guarantees have proven successful and often preferable \citep{bennett2020variational,kremer2022functional}.

\subsection{Asymptotic Properties}
Consider the regularized KMM estimator with any $\varphi$-divergence such that $\frac{d}{dt}\varphi^\ast(t)|_{t=0}=: \varphi_1^\ast(0)= 1$ and $\frac{d^2}{(dt)^2}\varphi^\ast(t)|_{t=0}=: \varphi_2^\ast(0)= 1$, which is fulfilled for, e.g., the forward and backward KL divergence.

For space reasons, here we focus on the estimator for \emph{conditional} moment restrictions by combining the theory for the functional KMM estimator (see Section~\ref{sec:KMM-FMR}) with a sufficiently rich space of locally Lipschitz functions $\mathcal{H}$. The properties of the unconditional/finite-dimensional KMM estimator are provided in Section~\ref{sec:KMMfinite}. 
\begin{theorem}[Consistency]\label{th:consistency-cmr}
    Let $\mathcal{H} \subseteq L^2(\mathcal{Z}, \mathbb{R}^m, P_Z)$ be a Hilbert space of locally Lipschitz functions which is sufficiently rich such that equivalence between \eqref{eq:conditional-mr} and \eqref{eq:continuum-mr} holds.
    Further assume that 
    a) $\theta_0 \in \Theta$ is the unique solution to $E[\psi(X;\theta)|Z] = 0 \ P_Z\text{-a.s.}$; 
    b) $\Theta \subset \mathbb{R}^p$ is compact; 
    c) $\psi(X;\theta)$ is continuous at each $\theta \in \Theta$ w.p.1; 
    d) $E[\sup_{\theta \in \Theta} \| \psi(X;\theta) \|^2_2|Z] < \infty$ w.p.1; 
    e)~$V_0(Z) := E[\psi(X;\theta_0) \psi(X;\theta_0) |Z]$ is non-singular w.p.1;
    f)~$\omega = (1-\alpha) \hat{P}_n + \alpha Q$ for $\alpha= O_p(n^{-1})$ and any distribution $Q$ such that $E_Q[\sup_{\theta \in \Theta} \| \psi(X;\theta) \|_2^2 |Z ] < \infty$ w.p.1;
    and 
    g)~$\lambda_n = O_p(n^{-\xi})$ with $0 < \xi < 1/2$. 
Then for the KMM estimator $\hat{\theta}$ we have $\hat{\theta} \overset{p}{\rightarrow} \theta_0$.

If additionally 
    h)~$\theta_0 \in \operatorname{int}(\Theta)$;
    i)~$\psi(x;\theta)$ is continuously differentiable in a neighborhood $\bar{\Theta}$ of $\theta_0$ and $E[\sup_{\theta \in \bar{\Theta}} \| \nabla_\theta \psi(X;\theta) \|^2 | Z] < \infty$ w.p.1; as well as
    j)~$\operatorname{rank}\left(E[\nabla_\theta \psi(X;\theta_0) |Z] \right) = p$ w.p.1,
we have $\|\hat{\theta} - \theta_0 \| = O_{p}(n^{-1/2})$.
\end{theorem}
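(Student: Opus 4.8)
The plan is to treat $\hat{\theta} = \argmin_{\theta \in \Theta} R^{\varphi}_{\epsilon,\lambda_n}(\theta)$ as an extremum estimator and run the classical two-stage argument: first obtain consistency from uniform convergence of the dual profile to a population criterion that is uniquely minimized at $\theta_0$, then sharpen this to the $n^{-1/2}$ rate through a local quadratic expansion. This mirrors the functional GEL analysis of \citet{kremer2022functional}, with the entropy-regularized MMD profile of Definition~\ref{def:kmm-cmr} replacing the $\varphi$-divergence profile; the recurring difficulty is controlling the inner supremum over the infinite-dimensional dual variables $f \in \mathcal{F}$ and $h \in \mathcal{H}$.

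For consistency I would first pin down the pointwise limit of $R^{\varphi}_{\epsilon,\lambda_n}(\theta)$. By assumption~(f) the reference measure $\omega = (1-\alpha)\hat{P}_n + \alpha Q$ lies within $O_p(n^{-1})$ of $\hat{P}_n$, so both the empirical term $\frac{1}{n}\sum_i f(x_i,z_i)$ and the integral of $\varphi^\ast$ against $\omega$ converge to population expectations under $P_0$, using the conditional moment bound~(d) and the integrability of $Q$ in~(f). Since $\lambda_n = O_p(n^{-\xi}) \to 0$ by~(g), the $h$-penalty disappears in the limit, leaving a population entropy-regularized MMD profile $R_0(\theta)$. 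I would then show $R_0$ is uniquely minimized at $\theta_0$: at $\theta_0$ the continuum restriction~\eqref{eq:continuum-mr} holds by~(a) and the richness of $\mathcal{H}$, so $P_0$ is feasible and attains the minimal profile value; for $\theta \neq \theta_0$, assumption~(a) together with the richness of $\mathcal{H}$ forces the moment functional~\eqref{eq:fmr} to be nonzero, and universality of the kernel then makes the saddle value strictly larger, so that $\theta_0$ is well separated.

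To pass from pointwise to uniform convergence over $\theta$, I would use compactness of $\Theta$~(b), continuity of $\psi$~(c), and the dominating bound~(d) in a stochastic-equicontinuity argument, exploiting that the dual objective is jointly concave in $(\eta,f,h)$ so the value function is continuous in $\theta$ and the suprema are stable under the perturbations above; the local Lipschitz structure of $\mathcal{H}$ provides the requisite empirical-process control over the $h$-term. Uniform convergence together with a well-separated unique minimizer then yields $\hat{\theta} \overset{p}{\rightarrow} \theta_0$ by the standard extremum-estimator theorem.

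For the rate, I would expand the first-order conditions around $\theta_0$, which lies in $\operatorname{int}(\Theta)$ by~(h) with $\psi \in C^1$ nearby by~(i). The saddle-point structure ties the optimal multiplier $h$ to the sample moment error $\frac{1}{n}\sum_i \Psi(x_i,z_i;\theta_0)$, which is $O_p(n^{-1/2})$ in $\mathcal{H}$ by a central limit theorem; hence $R^{\varphi}_{\epsilon,\lambda_n}(\theta_0) = O_p(n^{-1})$, the extra $\lambda_n\|h\|_{\mathcal{H}}^2$ contribution being negligible because $\xi < 1/2$. A matching quadratic minorant $R^{\varphi}_{\epsilon,\lambda_n}(\theta) \gtrsim c\,\|\theta - \theta_0\|^2 - O_p(n^{-1})$ near $\theta_0$, whose positive curvature comes from the full-rank Jacobian condition~(j) weighted through the non-singular conditional variance $V_0(Z)$ in~(e), then combines with the optimality inequality $R^{\varphi}_{\epsilon,\lambda_n}(\hat{\theta}) \le R^{\varphi}_{\epsilon,\lambda_n}(\theta_0)$ to give $\|\hat{\theta} - \theta_0\| = O_p(n^{-1/2})$. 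The main obstacle is making this expansion rigorous in the functional setting: one must show that maximizing out the infinite-dimensional $h$ (regularized by $\lambda_n$) leaves a smooth effective criterion in $\theta$ whose first- and second-order behavior is governed by~(j), and that the normalizations $\varphi_1^\ast(0) = \varphi_2^\ast(0) = 1$ make the entropy term reproduce exactly the curvature of the unregularized problem, so that $\epsilon$ does not enter the leading-order rate.
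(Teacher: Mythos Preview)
Your high-level strategy—treat $\hat\theta$ as an extremum estimator, establish uniform convergence of $R^{\varphi}_{\epsilon,\lambda_n}(\theta)$ to a population profile $R_0(\theta)$ with a well-separated minimum, then localize—differs substantially from the route the paper actually takes, and the place where your sketch is thinnest is exactly where the paper invests all of its effort.

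The difficulty is your uniform-convergence step. With $\lambda_n\to 0$ the inner supremum over $h\in\mathcal H$ is eventually unpenalized, and for $\theta\neq\theta_0$ the limiting dual objective is unbounded in $h$ (choose $h$ in the direction of the nonzero population moment functional and scale it up). Your candidate $R_0(\theta)$ is therefore $+\infty$ off $\theta_0$, so there is no finite, continuous limit criterion to which you can show uniform convergence, and the sentence ``the local Lipschitz structure of $\mathcal H$ provides the requisite empirical-process control over the $h$-term'' does not supply the missing compactness. A standard extremum-estimator argument would need either a fixed bound on $\|h\|_{\mathcal H}$ (which you do not have once $\lambda_n$ vanishes) or an a priori localization of the dual maximizer, and your proposal contains neither.

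The paper avoids this problem entirely by following the Newey--Smith program rather than the uniform-convergence route. It never studies the limit of $R^{\varphi}_{\epsilon,\lambda_n}$. Instead it (i) restricts the dual variables to a shrinking ball $\mathcal M_n=\{\|\beta\|\le n^{-\zeta}\}$, (ii) shows via a second-order Taylor expansion and eigenvalue bounds on the Hessian operator $\Lambda_n$ that the unconstrained dual maximizer at any $\bar\theta\to\theta_0$ already lies in $\mathcal M_n$ with $\|\bar\beta\|=O_p(n^{-1/2})$, giving $\widehat G(\theta_0,\hat\beta)\le -\epsilon\varphi^\ast(0)+O_p(n^{-1})$, and (iii) plugs in the specific test direction $\bar\beta_\zeta\propto(0,0,\phi(\hat\Psi(\hat\theta)))$ to obtain a lower bound $\widehat G(\hat\theta,\bar\beta_\zeta)\ge -\epsilon\varphi^\ast(0)+c\,n^{-\zeta}\|\hat\Psi(\hat\theta)\|-Mn^{-2\zeta}$. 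Sandwiching with the saddle inequalities $\widehat G(\hat\theta,\bar\beta_\zeta)\le\widehat G(\hat\theta,\hat\beta)\le\max_\beta\widehat G(\theta_0,\beta)$ yields $\|E_{\hat P_n}[\Psi(X,Z;\hat\theta)]\|_{\mathcal H^\ast}=O_p(n^{-1/2})$ directly. Consistency then follows from identification plus a triangle inequality with the CLT, and the $n^{-1/2}$ rate from a mean-value expansion of $E[\Psi(\cdot;\hat\theta)]$ combined with positive definiteness of $\Sigma_0$. The CMR theorem itself is proved by reducing to this functional result: two lemmas show that non-singularity of $V_0(Z)$ and the rank-$p$ condition on $E[\nabla_\theta\psi\mid Z]$ imply, respectively, non-singularity of the covariance operator $\Omega_0$ and of $\Sigma_0$ for the moment functional $\Psi$.

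Your rate sketch (upper bound $O_p(n^{-1})$ at $\theta_0$, quadratic minorant from (e) and (j)) is close in spirit to what the paper does, but note that the paper obtains the minorant on the \emph{population moment functional} $\|E[\Psi(\cdot;\hat\theta)]\|^2$, not on the profile $R^{\varphi}_{\epsilon,\lambda_n}$ itself; this sidesteps having to differentiate through the infinite-dimensional inner sup.
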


\begin{remark}
    Assumption~f) implies that asymptotically the reference distribution $\omega$ is required to converge weakly to the population distribution $P_0$. However, as $Q$ can be chosen as an arbitrary (continuous) distribution, as long as $\operatorname{supp}(\hat{P}_n) \subseteq \operatorname{supp}(Q)$, the form of $\omega = (1-\alpha) \hat{P}_n + \alpha Q$ does not restrict the set of candidate distributions $P$ in \eqref{eq:functional-kgel} further than to distributions that admit a density w.r.t.\ $Q$.
\end{remark}

\begin{remark}
    A sufficiently rich function space for Theorem~\ref{th:consistency-cmr} is for example given by the RKHS of an integrally strictly positive definite universal kernel (e.g., Gaussian kernel; see Theorem~3.9 of \citet{kremer2022functional}). Moreover, based on universal approximation theorems~\citep{hornik1989multilayer}, $\mathcal{H}$ can be represented by neural networks of asymptotically growing width/depth. In this case the local Lipschitz property can be ensured by restricting the weights to a compact domain (e.g., via weight clipping).
\end{remark}
\begin{theorem}[Asymptotic Normality ]\label{th:asymptotic-normality-cmr}
Let the assumptions of Theorem~\ref{th:consistency-cmr} be satisfied. 
Then,
\begin{align*}
    \sqrt{n} (\hat{\theta} - \theta_0)  \overset{d}{\rightarrow}  N(0, \Xi_0)
\end{align*}
where 
    $$\Xi_0 = E\left[  E[\nabla_\theta \psi(X;\theta_0)|Z] \ V_0^{-1}(Z) \ E[\nabla_\theta \psi(X;\theta_0)|Z] \right]^{-1}.$$
\end{theorem}
The asymptotic variance in Theorem~\ref{th:asymptotic-normality-cmr} agrees with the semi-parametric efficiency bound of \citet{CHAMBERLAIN1987305}. This implies that the KMM estimator achieves the lowest possible asymptotic variance among any estimators based on the CMR \eqref{eq:conditional-mr} as formalized in the following corollary.
\begin{corollary}[Efficiency]\label{efficiency-cmr}
    Let the assumptions of Theorem~\ref{th:consistency-cmr} be satisfied. Then the KMM estimator $\hat{\theta}$ is efficient for $\theta_0$, i.e., it has the smallest asymptotic variance among all  estimators based solely on the conditional moment restrictions $E[\psi(X;\theta_0)|Z]= 0 \ P_Z\text{-a.s.}$.
\end{corollary}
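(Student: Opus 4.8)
The plan is to obtain the corollary as an immediate consequence of Theorem~\ref{th:asymptotic-normality-cmr} combined with the classical characterization of the semi-parametric efficiency bound for conditional moment restriction models due to \citet{CHAMBERLAIN1987305}. The relevant notion of efficiency is first-order asymptotic efficiency: an estimator based solely on the CMR \eqref{eq:conditional-mr} is efficient if its asymptotic covariance coincides with the smallest attainable covariance, in the Loewner partial order, among all regular asymptotically linear estimators that exploit no information beyond \eqref{eq:conditional-mr}. I would therefore first make this comparison class precise, so that attaining the bound genuinely certifies optimality within it.

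Next I would recall Chamberlain's bound explicitly. Writing $D(Z) = E[\nabla_\theta \psi(X;\theta_0)\,|\,Z]$ and $V_0(Z) = E[\psi(X;\theta_0)\psi(X;\theta_0)^T\,|\,Z]$, the semi-parametric information for the CMR model is $\mathcal{I}_0 = E[\,D(Z)^T V_0^{-1}(Z)\, D(Z)\,]$, and \citet{CHAMBERLAIN1987305} shows that $\mathcal{I}_0^{-1}$ is precisely the efficiency bound, i.e.\ every regular estimator using only \eqref{eq:conditional-mr} has asymptotic variance at least $\mathcal{I}_0^{-1}$ in the positive semidefinite ordering. Assumptions~e), i) and~j) of Theorem~\ref{th:consistency-cmr} guarantee that $V_0(Z)$ is invertible and that $D(Z)$ has full column rank, so that $\mathcal{I}_0$ is well defined and nonsingular.

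I would then simply match this bound against the asymptotic covariance delivered by Theorem~\ref{th:asymptotic-normality-cmr}, which states $\sqrt{n}(\hat\theta - \theta_0) \overset{d}{\rightarrow} N(0, \Xi_0)$ with $\Xi_0 = E[\,D(Z)\, V_0^{-1}(Z)\, D(Z)\,]^{-1}$, i.e.\ exactly $\mathcal{I}_0^{-1}$. Hence the KMM estimator attains Chamberlain's bound term by term, and the remaining content is only to confirm that the KMM estimator is regular and asymptotically linear, which is a by-product of the $\sqrt{n}$-asymptotic normality already established and of the influence-function expansion underlying it.

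The main obstacle, and indeed the only substantive work, sits upstream rather than in this corollary: it lies in proving Theorem~\ref{th:asymptotic-normality-cmr} itself, namely that the influence function of the functional KMM estimator equals the efficient influence function of the form $-\,\mathcal{I}_0^{-1} D(Z)^T V_0^{-1}(Z)\,\psi(X;\theta_0)$. Once that expansion is in hand, the corollary is a one-line appeal to Chamberlain's bound. The subtle point I would be careful to flag is that efficiency is asserted relative to estimators using only the conditional moment information, so I would state explicitly that the comparison class consists of regular asymptotically linear estimators whose influence functions lie in the tangent space generated by \eqref{eq:conditional-mr}; matching $\mathcal{I}_0^{-1}$ then certifies optimality precisely within that class and not beyond it.
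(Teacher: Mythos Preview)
Your proposal is correct and matches the paper's approach exactly: the paper's proof is the single sentence that Theorem~\ref{th:asymptotic-normality-cmr} yields an asymptotic variance equal to Chamberlain's semi-parametric efficiency bound, from which efficiency is immediate. Your additional discussion of regularity, the comparison class, and the efficient influence function is sound elaboration but goes beyond what the paper records.
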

This is a particularly strong result, setting our estimator apart from a number of recently proposed modern mini-max approaches to CMR estimation \citep{lewis2018adversarial,bennett2020deep,zhang2021maximum} and which is matched only by the kernel VMM estimator of \citet{bennett2020variational} and the more traditional sieve-based approaches of \citet{ai2003efficient} and \citet{chen2009efficient}. 
Note that while this shows that our estimator is asymptotically first-order equivalent to these methods, the finite sample properties can be vastly different.
\subsection{Computing KMM Estimators}
In the following we restrict the discussion to the conditional version of the KMM estimator. The version for unconditional MR follows directly by setting $\lambda_n=0$, $\mathcal{H}=\mathbb{R}^m$ and $\Psi(x,z;\theta)= \psi(x;\theta)$.
The functional entropy-regularized MMD profile \eqref{eq:functional-kgel} is a convex optimization problem over function-valued dual parameters $f\in\mathcal{F}$ and $h \in \mathcal{H}$ as well as $\eta \in \mathbb{R}$. The dual formulation \eqref{eq:functional-kgel} allows us to base our method on a dual functional gradient ascent algorithm.
This is in contrast to particle gradient descent methods that address the primal problem by relying on discretizing measures which is commonly used in the gradient flow literature.
Define the saddle point objective from \eqref{eq:functional-kgel} as
\begin{align*}
     \widehat{G}_{\epsilon, \lambda_n}(\theta, \eta, f, h)
     =& \frac{1}{n} \sum_{i=1}^n f(x_i, z_i) + \eta -\frac{1}{2} \|f \|_{\mathcal{F}}^2 - \frac{\lambda_n}{2} \|h \|_{\mathcal{H}}^2  \\
    &- \epsilon  \intxz{ \varphi^\ast \left( \frac{f(x, z) + \eta - \Psi(x,z;\theta)(h) }{\epsilon} \right) }.
\end{align*}
Then the KMM estimator is given as the solution to the problem
\begin{align}
    \hat{\theta} = \argmin_{\theta \in \Theta} \sup_{\beta \in \mathcal{M}} \widehat{G}_{\epsilon,\lambda_n}(\theta, \beta), \label{eq:estimation}
\end{align}
where we defined $\beta := (\eta, f, h) \in \mathcal{M}= \mathbb{R} \times \mathcal{F} \times \mathcal{H}$.
\begin{algorithm}[t]
   \caption{Gradient Descent Ascent for KMM}
   \label{alg:training}
\begin{algorithmic}
  \STATE {\bfseries Input:} empirical distribution $\hat{P}_n$, reference distribution $\omega$, hyperparameters $\epsilon$, $\lambda$, 
  batchsizes $n_1$, $n_2$
  \WHILE{ not converged}
        \STATE Sample $\{ (x_i, z_i)\}_{i=1}^{n_1} \sim \hat{P}_n $, \ \  $\{ (x_j^\omega, z_j^\omega) \}_{j=1}^{n_2} \sim \omega$
        \STATE $G \gets \frac{1}{n_1 n_2} \sum_{i=1}^{n_1} \sum_{j=1}^{n_2} G_{\epsilon, \lambda}(\theta,\beta; (x_i, z_i), (x^\omega_j, z_j^\omega))$
       \STATE $\beta \gets \operatorname{AscentStep}(\beta,  \nabla_\beta G)$
       \STATE $\theta \gets \operatorname{DescentStep}(\theta, \nabla_\theta G)$
  \ENDWHILE
  \STATE {\bfseries Output:} Parameter estimate $\theta$
\end{algorithmic}
\end{algorithm}
\paragraph{Stochastic Approximation}
In order to evaluate the KMM objective we use a stochastic approximation of the integral term in \eqref{eq:estimation}.
Let the random variables $(X,Z)$ and $(X^\omega, Z^\omega)$ be distributed according to $P_0$ and $\omega$ respectively and define the random variable 
\begin{align*}
     G_{\epsilon, \lambda_n}(\theta, \beta; (X,Z), (X^\omega, Z^\omega))
     \ =& \  f(X,Z) + \eta -\frac{1}{2} \|f \|_{\mathcal{F}}^2 - \frac{\lambda_n}{2} \|h \|_{\mathcal{H}}^2 \\
    &- \epsilon \varphi^\ast \left( \frac{f(X^\omega, Z^\omega) + \eta - \Psi(X^\omega, Z^\omega;\theta)(h) }{\epsilon} \right). 
\end{align*}
Then we can express the KMM objective as an expectation with respect to the empirical and reference distributions $\hat{P}_n$ and $\omega$ respectively, 
\begin{align*}
    \widehat{G}_{\epsilon,\lambda_n}(\theta, \beta) = E_{\hat{P}_n} E_\omega [G_{\epsilon, \lambda_n}(\theta, \beta; (X,Z), (X^\omega, Z^\omega)) ].
\end{align*}
This has the form required for mini-batch stochastic gradient descent (SGD) optimization. 
We solve problem \eqref{eq:estimation} by alternating between functional SGD steps in the dual variables  $\beta$ and SGD steps in $\theta$. Our approach is detailed in Algorithm~\ref{alg:training}.

\paragraph{Random Feature Approximation}
The supremum over $\beta$ in \eqref{eq:estimation} involves the optimization over the function space $\mathcal{F}$ which is generally intractable. To provide a scalable estimator that can be optimized with variants of stochastic gradient descent (SGD), we resort to a random Fourier feature approximation of the RKHS function $f$ \citep{NIPS2007_013a006f}.
For $\alpha \in \mathbb{R}^d$, let $f(x,z) = \alpha^T \phi(x,z)$ define the random feature approximation of $f$ with random Fourier features $\phi(x,z) \in \mathbb{R}^d$, where $d$ is the number of random features.
The KMM objective then becomes
\begin{align*}
    \widehat{G}_{\epsilon,\lambda_n}(\theta, \eta, \alpha, h) 
    =& \frac{1}{n} \sum_{i=1}^n \alpha^T \phi(x_i, z_i) + \eta -\frac{1}{2} \| \alpha\|^2_2 - \frac{\lambda_n}{2} \|h \|_{\mathcal{H}}^2  \\
    &- \epsilon \intxz{\varphi^\ast  \left( \frac{\alpha^T \phi(x,z) + \eta - \Psi(x,z;\theta)(h)}{\epsilon} \right) }.  
\end{align*}
Combined with the stochastic approximation and a scalable instrument function $h$ (e.g., neural network or RKHS function with RF approximation) this allows our method to scale to large sample sizes. 
\section{Empirical Results} \label{sec:results}
We benchmark our estimator on two different tasks against state-of-the-art estimators for conditional moment restrictions, including maximum moment restrictions (MMR) \citep{zhang2021maximum}, sieve minimum distance (SMD) \citep{ai2003efficient}, DeepIV \citep{deepiv}, DeepGMM \citep{bennett2020deep} and the neural network version of functional GEL (FGEL) \citep{kremer2022functional}. As an additional baseline we compare to ordinary least squares (OLS) which ignores the conditioning variable and minimizes $\avg \|\psi(x_i;\theta)\|_2^2$.
For all methods involving kernels we use a radial basis function (RBF) kernel, whose bandwidth is set via the median heuristic \citep{garreau2018large}. The hyperparameters of all methods are set by evaluating the Hilbert-Schmidt independence criterion (HSIC)~\citep{gretton2005measuring} between the residues and the conditioning variable $\operatorname{HSIC}(Y-g(T), Z)$ over a validation set of the size of the training set. HSIC has been proposed as an objective for CMR problems by \citet{mooij09} and \citet{saengkyongam2022exploiting} and we empirically find it to yield better estimates than MMR \citep{muandet2020kernel,zhang2021maximum} which has been used for as a validation metric in other works (see Section~\ref{sec:validation}).
For the variational approaches we use a batchsize of $n_1=200$.
Additionally, for our KMM esimator we represent the reference distribution $\omega$ by a kernel density estimator (KDE) trained on the empirical sample (see Section~\ref{app:reference-measure}) from which we sample mini-batches of size $n_2=200$. Refer to Section~\ref{sec:additional-details} for additional details.
An implementation of our estimator and code to reproduce our results is available at \href{https://github.com/HeinerKremer/conditional-moment-restrictions}{https://github.com/HeinerKremer/conditional-moment-restrictions}.
\paragraph{Heteroskedastic Instrumental Variable Regression}
We adopt the HeteroskedasticIV experiment of \citet{bennett2020variational}.
Let the data-generating process be given by
\begin{align*}
    &Z \sim \operatorname{Uniform}([-5, 5]^2), \quad \quad  H, \eta, \varepsilon \sim \mathcal{N}(0,1) \\ 
    &T_{\text{exo}} = Z_1 + |Z_2|, \quad \quad  T_{\text{endo}} = 5 H + 0.2 \eta \\
    &T = 0.75 T_{\text{exo}} + 0.25 T_{\text{endo}}, \quad  S = 0.1 \log( 1 + \exp(T_{\text{exo}})) \\
    &Y = g(T;\theta_0) + 5 H + S \varepsilon,
\end{align*}
where the parameter of interest $\theta_0 = [2.0, 3.0, -0.5, 3.0] \in \mathbb{R}^4$ enters the process via the function
\begin{align*}
    g(t;\theta) = \theta_2 + \theta_3 (t - \theta_1) + \frac{\theta_4 - \theta_3}{2} \log( 1+ e^{2(t-\theta_1)}).
\end{align*}
This task is particularly challenging as it involves heteroskedastic noise on the instruments. The true parameter $\theta_0$ is identified by imposing the CMR $E[Y - g(T;\theta)|Z] = 0 \ P_Z\text{-a.s.}$.
Table~\ref{tab:bennet_hetero} shows the mean squared error (MSE) of the parameter estimate for different methods and sample sizes. Our method provides a significantly lower MSE for small sample sizes and approaches the results of DeepGMM and FGEL for larger samples.
\begin{table*}[t]
    \centering
    \caption{Instrumental Variable Regression with Heteroskedastic Instrument Noise. Mean of the parameter MSE $\|\theta - \theta_0 \|^2$  and its standard error are computed over 20 random runs.}
    \label{tab:bennet_hetero}

    \begin{tabular}{lllllll}
    \hline
             & OLS           & MMR           & DeepIV        & DeepGMM    & FGEL   & KMM           \\
     n=500   & $1.78\pm0.21$ & $1.73\pm0.22$ & $2.57\pm0.06$ & $1.03\pm0.17$ & $1.02\pm0.19$ & $0.40\pm0.13$ \\
     n=1000  & $2.27\pm0.18$ & $1.97\pm0.23$ & $2.53\pm0.08$ & $0.70\pm0.16$ & $0.45\pm0.11$ & $0.16\pm0.04$ \\
     n=2000  & $1.79\pm0.10$ & $2.11\pm0.20$ & $2.43\pm0.06$ & $0.15\pm0.04$ & $0.14\pm0.03$ & $0.10\pm0.02$ \\
     n=4000  & $1.92\pm0.06$ & $1.65\pm0.15$ & $2.41\pm0.04$ & $0.07\pm0.02$ & $0.05\pm0.01$ & $0.07\pm0.02$ \\
     n=10000 & $1.99\pm0.04$ & N/A   & $1.99\pm0.04$ & $0.02\pm0.01$ & $0.03\pm0.01$ & $0.01\pm0.00$ \\
    \hline
    \end{tabular}
\end{table*}
\paragraph{Neural Network Instrumental Variable Regression}
To explore the viability of our estimator in the non-uniquely identified setting, we adopt the non-parametric instrumental variable regression experiment of \citet{lewis2018adversarial} which has also been used by \citet{bennett2020deep}, \citet{zhang2021maximum} and \citet{kremer2022functional}. 
Consider a data generating process given by
\begin{center}
\begin{tabular}{ l l l }
 $y = g_0(t) + e + \delta,$ & $t = z + e + \gamma,$ & $z \sim \operatorname{Uniform}([-3, 3]),$\\ 
 $e \sim N(0,1),$ & $\gamma, \delta \sim N(0, 0.1),$ &
\end{tabular}
\end{center}
where the function $g_0$ is chosen from
\begin{center}
\begin{tabular}{ l l }
 $\operatorname{sin:} g_0(t) = \sin(t)$, & $\operatorname{abs:} g_0(t) = |t| \vspace{.5em}$,\\ 
 $\operatorname{linear:} g_0(t) = t$, & $\operatorname{step:} g_0(t) = I_{\{t \geq 0 \}}$.
\end{tabular}
\end{center}
We try to learn an approximation of $g_0$ represented by a shallow neural network $g_\theta$ with 2 layers of $[20, 3]$ units and leaky ReLU activation functions.
We identify $g_\theta$ by imposing the conditional moment restrictions $E[Y-g_\theta(T)|Z] = 0 \  P_{Z}\text{-a.s.}$. We use training
and validation sets of size $n = 1000$ and evaluate the prediction error on a test set of size $20000$.
Table~\ref{tab:network-iv} shows the MSE of the predicted models trained with different CMR estimation methods.
We observe that our estimator consistently shows competitive performance and slightly outperforms the baselines on three out of four tasks.
\begin{table*}[t]
    \centering
    \caption{Neural Network Instrumental Variable Regression. Mean of the prediction MSE $E[\|g_\theta(T) - g_0(T) \|^2]$ and its standard error are computed over 30 random runs and scaled by a factor of ten for ease of presentation.}
    \begin{tabular}{llllllll}
    \hline
            & OLS           & SMD           & MMR           & DeepIV        & DeepGMM    & FGEL   & KMM           \\
     abs    & $3.21\pm0.14$ & $1.15\pm0.53$ & $1.41\pm0.48$ & $2.25\pm0.68$ & $0.42\pm0.04$ & $0.37\pm0.05$ & $0.32\pm0.06$ \\
     step   & $3.16\pm0.05$ & $0.54\pm0.06$ & $0.58\pm0.03$ & $0.74\pm0.04$ & $0.43\pm0.04$ & $0.40\pm0.04$ & $0.35\pm0.02$ \\
     sin    & $3.33\pm0.06$ & $1.31\pm0.08$ & $2.67\pm0.13$ & $3.75\pm0.15$ & $0.64\pm0.05$ & $0.62\pm0.04$ & $0.88\pm0.10$ \\
     linear & $2.95\pm0.08$ & $0.47\pm0.11$ & $0.96\pm0.20$ & $1.66\pm0.50$ & $0.49\pm0.05$ & $0.95\pm0.27$ & $0.43\pm0.11$ \\
    \hline
    \end{tabular}
    \label{tab:network-iv}
\end{table*}
\section{Related Work} \label{sec:related-work}
Conditional moment restrictions have been addressed in multiple ways by extending the GMM to continua of moment restrictions building on the equivalence between the conditional \eqref{eq:conditional-mr} and continuum \eqref{eq:continuum-mr} formulations. Seminal work in this direction has been carried out by \citep{Carrasco1,Carrasco2} and \citet{ai2003efficient}, which approximate the continuum of MR by a basis function expansion. Recently, the problem gained popularity in the machine learning community as many problems in causal inference can be formulated as CMR, most prominently instrumental variable regression. These modern approaches represent the continuum of MR via machine learning models, i.e., RKHS functions or neural networks and solve a mini-max formulation \citep{deepiv,lewis2018adversarial,bennett2020deep,bennett2020variational,Dikkala20:Minimax}.
Other GEL methods have historically played a less prominent role for CMR estimation, most likely due to their more complex mini-max structure compared to the simple minimization of traditional GMM-based methods. 
However, generalizations of GEL to continua of MR have been developed by \citet{tripathi2003testing,kitamura2004,chausse2012generalized,carrasco2017regularized} building on basis function expansions, which empirically have been competitive with their GMM-counterparts. Recently the problem has been addressed via modern machine learning models \citep{kremer2022functional}.  
All the aforementioned methods have in common that they either explicitly (GEL) or implicitly (GMM) optimize a $\varphi$-divergence between the candidate distributions and the empirical distribution and thus only allow for reweightings of the data. To the best of our knowledge, we provide the first method of moments estimator that lifts this restriction and allows for arbitrary candidate distributions.
The GEL framework bears a close duality relation to distributionally robust optimization (DRO)~\citep{lamRecoveringBestStatistical2019}.
In this context, it has been used to investigate the statistical properties of DRO \citep{duchi2018statistics,lamRecoveringBestStatistical2019} and to calibrate the size of the distributional ambiguity set used in the DRO framework \citep{lam2017optimization,lam2017empirical,blanchet2019robust,he2021higher}.
With the notable exception of \citet{blanchet2019robust} these works build on the standard $\varphi$-divergence-based GEL framework.
While \citet{blanchet2019robust} provide a GEL framework based on optimal transport distances, their goal is to calibrate an ambiguity set for DRO and they do not provide an estimator for moment restriction problems.
Computationally, an important contribution of this paper is handling the (semi)-infinite constraint in~\eqref{eq:unreg-mmd-profile}. 
Classical approaches to handling such constraints using polynomial sum-of-squares (SOS) \cite{lasserre2001global} do not apply here since we have a general moment function class outside the polynomials.
Furthermore, both classical and infinite-dimensional SOS techniques \cite{marteau2020non} suffer from scalability issues in high dimensions and large data sizes.
Compared to those, our entropy-regularization approach can be implemented with general nonlinear problems and stochastic-gradient-type algorithms.
\sloppy The objective of our inner optimization is a variational problem in the measure of the form 
$
\min_P \left\{\mathrm{F}(P) + \epsilon \mathrm{H}(P, Q)\right\}
$,
where $F$ is some energy functional and $H$ is some metric or divergence measure.
This was notably studied in the seminal work of \citet{jordan_variational_1998} as a time-discretization scheme of PDEs.
In recent literature related to machine learning, \citet{arbel_maximum_2019} studied the variational structure of MMD as energy in the Wasserstein geometry.
\citet{chizat_mean-field_2022} applied noisy particle gradient descent to an energy objective similar to ours, i.e., $\mmd + \epsilon D_{\mathrm{KL}}$.
Compared with that work, our goal is to train a model $\theta$ by minimizing this objective over $\theta$.
We also do not rely on gradient descent on the particles obtained from the discretization of the measure but adopt a dual functional gradient ascent scheme.
Our reformulation technique for the MMD-profile is similar to that of
\citet{zhu2021kernel}, who solved a similar variational problem involving MMD for DRO. Different from their method, our goal is to provide an estimator for CMR problems and we introduce entropy regularization as an interior point method for handling the constraint.

\section{Conclusion} \label{sec:conclusion}
The emergence of conditional moment restrictions in areas such as causal inference and robust machine learning has created the need for effective and robust estimation methods. 
Existing method of moments estimators (implicitly) rely on approximating the population distribution by reweighting a discrete empirical distribution. Our KMM estimator parts with this restrictive assumption and allows considering arbitrary (continuous) distributions as candidates for the population distribution.
As in many cases the population distribution is in fact continuous, this has the potential to find more accurate estimates especially in the low sample regime where reweightings can provide crude approximations. 
Our estimator comes with strong theoretical guarantees showing that it is first order efficient with respect to any estimator based on CMR and its competitive practical performance is demonstrated on several CMR tasks.
This paper laid the foundation of the KMM framework, which can inspire future work in multiple ways. Such work could include the development of more sophisticated and adaptive reference measures for the regularization scheme, e.g., by evolving the reference measure over the course of the optimization.
Another important direction would be a statistical learning theory analysis to provide theoretical properties of our estimator in the non-uniquely identified case. Other possibilities are extensions of the framework beyond estimation to construct confidence intervals for the estimates. 
Lastly, more efficient and tailored optimization methods can be developed to facilitate the application at larger scales.

\bibliography{refs}
\bibliographystyle{abbrvnat}

\newpage
\appendix
\onecolumn

\section{KMM for Functional Moment Restrictions}\label{sec:KMMFMR}

\subsection{Duality} \label{sec:primal-cmr}
The primal problem of the entropy regularized KMM estimator for functional moment restrictions is given by
\begin{align}
     R^\varphi_\epsilon(\theta) = & \inf_{P \in \mathcal{P}} \frac{1}{2}  \mmd(P,\hat{P}_n;\mathcal{F})^2 + \epsilon D_\varphi(P||\omega) \quad \mathrm{s.t.} \quad \|E_P[\Psi(X,Z;\theta)]\|_{\mathcal{H}^\ast} \leq \lambda_n, \quad E_P[1] = 1, \label{eq:primal-cmr}
\end{align}
where we relaxed the moment restrictions to hold only exactly for $n\rightarrow \infty$.
Note that to be precise, the dual of \eqref{eq:primal-cmr}, which can be obtained following the proof of Theorem~\ref{th:kel-duality}, contains a regularization term $- \lambda_n \| h\|_\mathcal{H}$ instead of $-\frac{1}{2} \| h\|_\mathcal{H}^2$ as in our Definition~\ref{def:kmm-cmr}. However, by Lagrangian duality the regularizer  $- \lambda_n \| h\|_\mathcal{H}$ corresponds to restricting $\mathcal{H}$ to a norm ball of some radius $\rho$, and equally $-\frac{1}{2} \| h\|_\mathcal{H}^2$ corresponds to a restriction to a norm ball of different radius $\rho'$. Therefore both formulations are practically equivalent and we use the squared version for its greater smoothness and facilitated theoretical analysis. Note that in this context that a theoretical analysis of the $-\lambda_n \|h \|_\mathcal{H}$ version would be possible by resorting to the variational formulation of the norm $\|h \|_\mathcal{H} = \sup_{h' \in \mathcal{H}, \|h' \|\leq 1} \langle h', h \rangle_\mathcal{H}$.
For an appropriate choice of reference distribution $\omega$ a solution to the KMM problem \eqref{eq:primal-cmr} at the true parameter $\theta_0 \in \Theta$ always exists as the true distribution $P_0$ is contained in an MMD ball around the empirical distribution with probability $1$. This is in stark contrast to the $\varphi$-divergence based FGEL estimator of \citet{kremer2022functional}, as a $\varphi$-divergence ball around the empirical distribution $\hat{P}_n$ generally contains the corresponding continuous true distribution with probability $0$, as the $\varphi$ divergence between a discrete distribution $\hat{P}_n$ and continuous distribution $P_0$ diverges.
However, at different parameters $\theta \in \Theta$ existence of a distribution $P \in \mathcal{P}$ for which the functional moment restrictions hold exactly cannot be guaranteed which implies $R(\theta) =\infty$ and thus gradient-based optimization over $\theta \in \Theta$ can become difficult. Therefore the role of the relaxation parameter $\lambda_n$ here is to smooth the MMD profile such that $R(\theta) < \infty$ in a neighbourhood of the true parameter to facilitate gradient-based optimization over $\theta$. Note that even for fixed values of $\lambda_n$, i.e., $\lambda_n = O_p(1)$, as $n \rightarrow \infty$ the objective has its global minimum of $0$ at $P = P_0$ as $\hat{P}_n \overset{p}{\rightarrow} P_0$ and $\omega \overset{p}{\rightarrow} P_0$ weakly and thus we will retrieve the true solution $\theta_0$. Therefore, compared to \citet{kremer2022functional} where the relaxation scheme is a fundamental necessity to restore strong duality, here the regularization parameter can be seen merely as a computational tool.

\subsection{Asymptotic Properties} \label{sec:KMM-FMR}
For the KMM estimator for functional moment restrictions (FMR) of the form \eqref{eq:fmr} based on the functional MMD profile \eqref{eq:functional-kgel} we have the following properties.
\begin{theorem}[Consistency for FMR]\label{th:consistency-fmr}
Assume that 
a)~$\theta_0 \in \Theta$ is the unique solution to $E[\Psi(X,Z;\theta)] = 0 \in \mathcal{H}^\ast$; 
b)~$\Theta \subset \mathbb{R}^p$ is compact; 
c)~$\Psi(X,Z;\theta)$ is continuous in $\theta$ at any $\theta \in \Theta$ with probability one;
d)~$E[\sup_{\theta \in \Theta} \| \Psi(X,Z;\theta) \|_{\mathcal{H}^\ast}^2] < \infty$; 
e)~$\Omega_0 = E[\Psi(X,Z;\theta_0) \otimes \Psi(X,Z;\theta_0)]$ is non-singular; 
f)~$\omega = (1-\alpha) \hat{P}_n + \alpha Q$ for $\alpha= O_p(n^{-1})$ and any distribution $Q$ such that $E_Q[\sup_{\theta \in \Theta} \| \Psi(X,Z;\theta) \|_{\mathcal{H}^\ast}^2] < \infty$; 
and
g)~$\lambda_n = O_p(n^{-\xi})$ with $0 < \xi < 1/2$.
Let $\hat{\theta}$ denote the functional KMM estimator for $\theta_0$, then $\hat{\theta} \overset{p}{\rightarrow} \theta_0$.

If additionally 
    h)~$\theta_0 \in \operatorname{int}(\Theta)$;
    i)~$\Psi(x,z;\theta)$ is continuously differentiable in a neighborhood $\bar{\Theta}$ of $\theta_0$ and $E[\sup_{\theta \in \bar{\Theta}} \| \nabla_\theta \Psi(X,Z;\theta) \|^2_{\mathcal{H}^\ast}] < \infty$; as well as
    j)~$\Sigma_0 = \left\langle E[ \nabla_\theta \Psi(X,Z;{\theta_0})], E[ \nabla_\theta \Psi(X,Z;{\theta_0})]  \right\rangle_{\mathcal{H}^\ast} \in \mathbb{R}^{p \times p}$ is non-singular,
we have $\|\hat{\theta} - \theta_0 \| = O_{p}(n^{-1/2})$.
\end{theorem}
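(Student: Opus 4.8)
The plan is to treat $\hat\theta=\argmin_{\theta\in\Theta}R^\varphi_{\epsilon,\lambda_n}(\theta)$ as an extremum estimator and to follow the two-stage template of Newey--Smith GEL theory \citep{newey04}, adapted to the functional/MMD setting as in \citet{kremer2022functional}. The first task is to pin down the deterministic population limit of the profile. Since $\hat P_n\overset{p}{\to}P_0$ and, by Assumption~f), $\omega\overset{p}{\to}P_0$ weakly while $\lambda_n\to0$, the primal objective in \eqref{eq:primal-cmr} should converge to $R_0(\theta)=\inf_P\{\tfrac12\mmd(P,P_0;\mathcal{F})^2+\epsilon D_\varphi(P\|P_0):E_P[\Psi(X,Z;\theta)]=0,\ E_P[1]=1\}$. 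At $\theta_0$ the true measure $P_0$ is feasible and attains objective value $0$, so $R_0(\theta_0)=0$; for $\theta\neq\theta_0$, the unique-identification Assumption~a) makes $P_0$ infeasible, so any feasible $P$ differs from $P_0$ and $R_0(\theta)>0$. Hence $R_0$ is continuous with $\theta_0$ as its unique minimizer.

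I would then establish the uniform convergence $\sup_{\theta\in\Theta}|R^\varphi_{\epsilon,\lambda_n}(\theta)-R_0(\theta)|\overset{p}{\to}0$. The difficulty is that the profile is itself an inner supremum over the dual variables $\beta=(\eta,f,h)$, so a uniform law of large numbers cannot be invoked directly. The standard GEL device is to show that the inner maximizer is bounded in probability uniformly over $\theta$: the strong concavity supplied by the $-\tfrac12\|f\|_\mathcal{F}^2$ and $-\tfrac{\lambda_n}{2}\|h\|_\mathcal{H}^2$ penalties, together with the moment bounds in Assumptions~d) and f) and the convex-conjugate structure of $\varphi^\ast$, confines the optimal $(\eta,f,h)$ to a compact set, after which one restricts the supremum to that set and applies a uniform LLN using continuity (Assumption~c)) and compactness (Assumption~b)). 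With uniform convergence, continuity of $R_0$, and the unique minimizer in hand, consistency $\hat\theta\overset{p}{\to}\theta_0$ follows from the classical extremum-consistency theorem.

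For the rate I would switch to a local first-order analysis. By the duality analysis of \cref{th:kel-duality} the population saddle at $\theta_0$ is attained at $\beta=0$ (since the moment constraint then holds with $P=P_0=\omega$), and because $\theta_0\in\operatorname{int}(\Theta)$ (Assumption~h)) both the outer $\theta$-problem and the inner $\beta$-problem satisfy interior first-order conditions at the saddle $(\hat\theta,\hat\beta)$. I would stack these optimality conditions and expand them jointly in $(\hat\theta-\theta_0,\hat\beta)$ around $(\theta_0,0)$, justified by the differentiability and dominated-derivative bound of Assumption~i). The driving stochastic term is the empirical moment functional $\tfrac1n\sum_i\Psi(x_i,z_i;\theta_0)$, which is $O_p(n^{-1/2})$ in $\mathcal{H}^\ast$ by a Hilbert-space central limit theorem with non-degenerate limit by Assumption~e). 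The curvature blocks — the RKHS/entropy Hessian on the $\beta$-coordinates and the Jacobian Gram matrix $\Sigma_0$ on the $\theta$-coordinates, non-singular by Assumption~j) — are invertible, so solving the linearized system yields $\sqrt{n}(\hat\theta-\theta_0)=O_p(1)$. The remaining point is to check that the regularization is asymptotically negligible at this order: the schedule $\lambda_n=O_p(n^{-\xi})$ with $0<\xi<1/2$ makes the $-\tfrac{\lambda_n}{2}\|h\|_\mathcal{H}^2$ contribution vanish in the first-order terms while still controlling the inner problem, preserving the $n^{-1/2}$ rate.

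The step I expect to be the main obstacle is the uniform control of the inner dual maximizer, i.e.\ showing that the optimal $(\eta,f,h)$ lives in a bounded set uniformly over $\theta$ in a neighborhood of $\theta_0$. This is where the MMD/RKHS penalties and the shape of $\varphi^\ast$ must be exploited carefully, and it is the ingredient that both legitimizes the uniform LLN for consistency and anchors the Taylor expansion underlying the rate.
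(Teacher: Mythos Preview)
Your consistency strategy departs from both the paper and the Newey--Smith template you cite. Neither establishes a population profile $R_0(\theta)$ and then proves uniform convergence of the empirical profile to it; instead, both show directly that the empirical moment functional at the estimator is small. The paper does this via a sandwich argument (Lemmas~\ref{lemma2} and~\ref{lemma3}): evaluating $\widehat G_{\epsilon,\lambda_n}(\hat\theta,\cdot)$ at the specific test direction $\beta\propto(0,0,\phi(\hat\Psi))$ yields a lower bound containing $\|\hat\Psi\|_{\mathcal{H}^\ast}$, while the saddle-point inequality $\widehat G(\hat\theta,\hat\beta)\le\max_\beta\widehat G(\theta_0,\beta)\le-\epsilon\varphi^\ast(0)+O_p(n^{-1})$ supplies an upper bound; combining the two forces $\|\hat\Psi\|_{\mathcal{H}^\ast}=O_p(n^{-1/2})$. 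Consistency then follows immediately from the triangle inequality $\|E[\Psi(X,Z;\hat\theta)]\|_{\mathcal{H}^\ast}\le\|E[\Psi(X,Z;\hat\theta)]-\hat\Psi\|_{\mathcal{H}^\ast}+\|\hat\Psi\|_{\mathcal{H}^\ast}=O_p(n^{-1/2})$ and Assumption~a). The rate comes from a mean-value expansion of $\theta\mapsto E[\Psi(X,Z;\theta)]$ at $\theta_0$ together with the non-singularity of $\Sigma_0$ (Assumption~j)), without ever Taylor-expanding the stacked first-order conditions; your expansion of the saddle system is essentially the proof of Theorem~\ref{th:asymptotic-normality} and is heavier than needed here.

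Your uniform-profile route has a genuine gap at precisely the step you flag as the obstacle. The penalty $-\tfrac{\lambda_n}{2}\|h\|_\mathcal{H}^2$ does \emph{not} supply uniform strong concavity in $h$, because $\lambda_n\to 0$; it gives at best $\|\hat h(\theta)\|_\mathcal{H}=O(\lambda_n^{-1})$, which diverges. The only $n$-independent curvature in $h$ comes from the Hessian of the $\varphi^\ast$ term, whose $h$-block is essentially the empirical covariance operator $\widehat\Omega(\theta)$; non-singularity of its limit is assumed only at $\theta_0$ (Assumption~e)), so you cannot confine the dual maximizer to a fixed compact set uniformly over $\Theta$. (The paper's Lemma~\ref{lemma:bounded-covariance} bounds the smallest eigenvalue of the Hessian away from zero only along sequences $\bar\theta\overset{p}{\to}\theta_0$, not globally.) Relatedly, the population profile $R_0(\theta)$ you invoke need not be finite or continuous: for $\theta\neq\theta_0$ the infinite-dimensional constraint $E_P[\Psi(X,Z;\theta)]=0\in\mathcal{H}^\ast$ may admit no feasible $P\ll P_0$, in which case $R_0(\theta)=+\infty$ and sup-norm convergence of the profile is hopeless. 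This is exactly why the paper (and Newey--Smith) work with the moment functional directly rather than with the profile.
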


\begin{theorem}[Asymptotic Normality for FMR]\label{th:asymptotic-normality}
Let Assumptions a)-j) of Theorem~\ref{th:consistency-fmr} be satisfied. 
Then,
\begin{align*}
    \sqrt{n} (\hat{\theta} - \theta_0)  \overset{d}{\rightarrow}  N(0, \Xi_0)
\end{align*}
where 
   $\Xi_0 = \left( E[\nabla_\theta \Psi(X,Z;\theta_0)] \Omega_0^{-1} E[\nabla_\theta \Psi(X,Z;\theta_0)] \right)^{-1}$.
\end{theorem}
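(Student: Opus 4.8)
The plan is to treat the functional KMM estimator as a generalized-empirical-likelihood-type M-estimator defined through the saddle point \eqref{eq:estimation}, and to follow the classical two-step route: first invoke the $\sqrt{n}$-consistency already granted by Theorem~\ref{th:consistency-fmr}, then obtain the limiting law from a stochastic Taylor expansion of the joint first-order conditions around the population optimum. The natural expansion point is $\gamma_0 = (\theta_0,\beta_0)$ with $\beta_0 = (\eta_0,f_0,h_0) = (0,0,0)$, which is the population dual optimizer: when $\hat P_n$ and $\omega$ are replaced by $P_0$, the primal \eqref{eq:primal-cmr} is solved by $P=P_0$ (it satisfies the moment restrictions and attains objective value $0$), and by complementary slackness the dual variables vanish. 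I would write the stationarity conditions $\nabla_\theta \widehat{G}_{\epsilon,\lambda_n}=0$ and $\nabla_\beta \widehat{G}_{\epsilon,\lambda_n}=0$ at $\hat\gamma$ and expand them to first order in $\hat\gamma-\gamma_0$.

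First I would evaluate the score at $\gamma_0$. Using $\varphi^{\ast\prime}(0)=\varphi_1^\ast(0)=1$ one checks that the $\theta$- and $\eta$-components vanish identically, that the $f$-component equals $\mu_{\hat P_n}-\mu_\omega$, which is $O_p(n^{-1})$ by Assumption~f) and hence negligible at the $\sqrt{n}$ scale, and that the only leading stochastic term is the $h$-component $\nabla_h \widehat{G}_{\epsilon,\lambda_n}(\gamma_0) = E_\omega[\Psi(X,Z;\theta_0)]$. Since $\omega=(1-\alpha)\hat P_n+\alpha Q$ with $\alpha=O_p(n^{-1})$ and $E[\Psi(X,Z;\theta_0)]=0$, this equals $E_{\hat P_n}[\Psi(X,Z;\theta_0)]+O_p(n^{-1})$, and a Hilbert-space central limit theorem (justified by Assumptions~d) and~e)) gives $\sqrt{n}\,E_{\hat P_n}[\Psi(X,Z;\theta_0)] \overset{d}{\to} N(0,\Omega_0)$ in $\mathcal{H}$. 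Next, using in addition $\varphi^{\ast\prime\prime}(0)=\varphi_2^\ast(0)=1$, I would compute the probability limits of the Hessian blocks at $\gamma_0$: the pure $\theta$-block $\widehat{G}_{\theta\theta}$ vanishes (so identification runs entirely through the cross term), the cross block converges to $D_0 := E[\nabla_\theta \Psi(X,Z;\theta_0)]$, and the multiplier block $\widehat{G}_{hh}$ converges to $-\tfrac{1}{\epsilon}\Omega_0$, the $\lambda_n$-penalty dropping out because $\lambda_n=O_p(n^{-\xi})$ with $\xi>0$.

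I would then invert the linearized system by eliminating the MMD dual variables $(\eta,f)$ through a Schur complement and solving the coupled $(\theta,h)$ subsystem. At leading order the $\theta$-equation reduces to $D_0\,\delta h = 0$ and the $h$-equation to $D_0^\ast\,\delta\theta - \tfrac{1}{\epsilon}\Omega_0\,\delta h = E_\omega[\Psi(\theta_0)]$; eliminating $\delta h$ between the two yields
\begin{align*}
  \sqrt{n}(\hat\theta-\theta_0) = \bigl(D_0\,\Omega_0^{-1}D_0^\ast\bigr)^{-1} D_0\,\Omega_0^{-1}\,\sqrt{n}\,E_{\hat P_n}[\Psi(X,Z;\theta_0)] + o_p(1),
\end{align*}
so that by Slutsky's theorem and the continuous mapping theorem $\sqrt{n}(\hat\theta-\theta_0)\overset{d}{\to}N(0,\Xi_0)$ with $\Xi_0 = (D_0\,\Omega_0^{-1}D_0^\ast)^{-1} = \bigl(E[\nabla_\theta\Psi(\theta_0)]\,\Omega_0^{-1}\,E[\nabla_\theta\Psi(\theta_0)]\bigr)^{-1}$, the claimed variance. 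The efficient weighting $\Omega_0^{-1}$ appears automatically because the curvature of the objective in the Lagrange multiplier $h$ is exactly the second moment $\Omega_0$, precisely as in the GEL theory of \citet{newey04}.

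The main obstacle is twofold. The delicate technical point is the uniform control needed to pass from the exact first-order conditions to the linearized system when the dual variables $f$ and $h$ live in the infinite-dimensional function spaces $\mathcal{F}$ and $\mathcal{H}$: one must establish the rate $\|\hat\beta-\beta_0\|_{\mathcal{M}}=O_p(n^{-1/2})$ for the multipliers together with a stochastic-equicontinuity (or uniform-law-of-large-numbers) argument for the second-derivative operators, so that the Hessian may be replaced by its probability limit with an $o_p(1)$ remainder; the local Lipschitz assumption on $\mathcal{H}$ and Assumptions~i)--j) are what make this go through. The conceptually crucial step is to verify that the additional MMD dual variables $(\eta,f)$, which are absent from ordinary GEL, decouple from $h$ at leading order, so that the effective curvature in the $h$-block remains $\Omega_0^{-1}$ and the variance collapses to the Chamberlain \citep{CHAMBERLAIN1987305} bound rather than to a strictly larger sandwich form; this is where the specific structure of the entropy-regularized MMD profile and the asymptotic degeneracy of the MMD witness must be exploited, and I expect it to be the hardest part of the argument to make rigorous.
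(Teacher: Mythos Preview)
Your proposal is correct and follows essentially the same route as the paper: Taylor-expand the joint first-order conditions $\partial_\theta\widehat G=0$, $\partial_\beta\widehat G=0$ around $(\theta_0,0)$, use $\|\hat\beta\|_\mathcal{M}=O_p(n^{-1/2})$ (the paper's Lemma~\ref{lemma2}) together with $\varphi_1^\ast(0)=\varphi_2^\ast(0)=1$ to simplify the Hessian, then invert the resulting block system by Schur complement. The decoupling you flag as the hardest step is dispatched in the paper by writing out $\Lambda = \tfrac{1}{\epsilon}E[a(X,Z;\theta_0)a(X,Z;\theta_0)^T]+R_{\lambda=0}$ explicitly and observing that it is block-diagonal because $E[\Psi(X,Z;\theta_0)]=0$ kills the $(\eta,h)$ and $(f,h)$ cross blocks (the latter via $\|E[k((X,Z),\cdot)\otimes\Psi(X,Z;\theta_0)]\|\le C\|E[\Psi(X,Z;\theta_0)]\|=0$), so $(\Lambda^{-1})_{3,3}=\Omega_0^{-1}$ directly.
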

\section{Asymptotic Properties of the Finite-Dimensional KMM Estimator}\label{sec:KMMfinite}
For the KMM estimator for finite dimensional moment restrictions based on \eqref{eq:dual-profile} we have the following results. 
\begin{theorem}[Consistency for MR]\label{th:consistency-mr}
Assume that 
a)~$\theta_0 \in \Theta$ is the unique solution to $E[\psi(X;\theta)] = 0 \in \mathbb{R}^m$; 
b)~$\Theta \subset \mathbb{R}^p$ is compact; 
c)~$\psi(X;\theta)$ is continuous at each $\theta \in \Theta$ with probability one; d)~$E[\sup_{\theta \in \Theta} \| \psi(X;\theta) \|_{2}^2] < \infty$; 
e)~The covariance matrix $\Omega_0 := E\left[\psi(X,\theta_0) \psi(X;\theta_0)^T\right]$ is non-singular;
and
f)~$\omega = (1-\alpha) \hat{P}_n + \alpha Q$ for $\alpha= O_p(n^{-1})$ and any distribution $Q$ such that $E_Q[\sup_{\theta \in \Theta} \| \psi(X;\theta) \|_{2}^2] < \infty$.
Let $\hat{\theta}$ denote the KMM estimator for $\theta_0$, then $\hat{\theta} \overset{p}{\rightarrow} \theta_0$. 

If additionally 
    g)~$\theta_0 \in \operatorname{int}(\Theta)$;
    h)~$\psi(x;\theta)$ is continuously differentiable in a neighborhood $\bar{\Theta}$ of $\theta_0$ and $E[\sup_{\theta \in \bar{\Theta}} \| \nabla_\theta \psi(X;\theta) \|^2] < \infty$ w.p.1 as well as
    i)~$\operatorname{rank}\left(E[\nabla_\theta \psi(X;\theta_0)] \right) = p$,
we have $\|\hat{\theta} - \theta_0 \| = O_{p}(n^{-1/2})$.
\end{theorem}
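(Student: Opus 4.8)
The plan is to treat $\hat\theta=\argmin_{\theta\in\Theta}R^\varphi_\epsilon(\theta)$ as an extremum estimator and to invoke the standard consistency theorem for such estimators (Newey--McFadden): uniform convergence of the objective to a population limit, together with a well-separated unique minimizer and compactness of $\Theta$, yields $\hat\theta\overset{p}{\to}\theta_0$. Because $R^\varphi_\epsilon(\theta)=\sup_\beta \widehat G_\epsilon(\theta,\beta)$ with $\beta=(\eta,f,h)$ by Theorem~\ref{th:kel-duality}, the argument carries an inner-maximization layer absent from classical GEL proofs (which have only one finite-dimensional multiplier), and controlling the infinite-dimensional RKHS variable $f\in\mathcal F$ is the main point of departure.

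First I would identify the population objective. Using assumption~f) I would show $\omega\overset{p}{\to}P_0$ and $\hat P_n\overset{p}{\to}P_0$ weakly, so the natural pointwise limit of $\widehat G_\epsilon(\theta,\beta)$ is
\[
G_\epsilon(\theta,\beta)=E_{P_0}[f(X)]+\eta-\tfrac12\|f\|_\mathcal F^2-\epsilon\,E_{P_0}\!\Big[\varphi^\ast\!\Big(\tfrac{f(X)+\eta-\langle h,\psi(X;\theta)\rangle}{\epsilon}\Big)\Big],
\]
and I set $R_0(\theta)=\sup_\beta G_\epsilon(\theta,\beta)$. I would then verify that at $\theta_0$ the triple $(\eta,f,h)=(0,0,0)$ satisfies the first-order conditions: the $\eta$-stationarity uses $\varphi^\ast_1(0)=1$, the $f$-stationarity reduces to $\mu_{P_0}-\mu_{P_0}=0$, and the $h$-stationarity reduces to $E[\psi(X;\theta_0)]=0$, which holds by the moment restriction; by joint concavity this gives $R_0(\theta_0)=0$. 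For $\theta\neq\theta_0$, assumption~a) gives $E[\psi(X;\theta)]\neq0$, and perturbing $h$ in that direction (using $\varphi^\ast_2(0)=1$ and the non-singular $\Omega_0$ of~e)) produces a strictly positive value, so $R_0(\theta)>0=R_0(\theta_0)$. Continuity from~c) and compactness from~b) then make $\theta_0$ the unique, well-separated minimizer.

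The crux, and the step I expect to be the main obstacle, is the uniform convergence $\sup_{\theta\in\Theta}|R^\varphi_\epsilon(\theta)-R_0(\theta)|\overset{p}{\to}0$, since it involves the infinite-dimensional dual variable $f$. I would first exploit the quadratic regularizer $-\tfrac12\|f\|_\mathcal F^2$ together with the growth of $\varphi^\ast$ to confine the inner maximizer to a fixed norm-ball in $\mathbb R\times\mathcal F\times\mathbb R^m$ uniformly in $\theta$, reducing the effective index set to a compact set in $(\eta,h)$ and a bounded RKHS ball for $f$. Boundedness of the universal kernel, $k(x,x)\le\kappa$, makes $\{f:\|f\|_\mathcal F\le C\}$ a Glivenko--Cantelli class, which, combined with the dominance in~d) and the integrability of $Q$ in~f), yields a uniform law of large numbers for $\widehat G_\epsilon$ over this index set; the difficulty is handling the coupling between $f$, $\eta$ and $h$ inside the nonlinear $\varphi^\ast$ while passing the uniform control through the $\sup_\beta$ operation. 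Once this is done, the consistency theorem delivers $\hat\theta\overset{p}{\to}\theta_0$.

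For the $\sqrt n$-rate I would localize around $\theta_0$ and follow the GEL expansion of Newey--Smith. The steps are: (i) show the inner optimizer $\hat\beta(\theta_0)$ is $O_p(n^{-1/2})$, driven by the CLT rate $\tfrac1n\sum_i\psi(x_i;\theta_0)=O_p(n^{-1/2})$ and the stochastic bound on $\mu_{\hat P_n}-\mu_{P_0}$; (ii) apply the envelope (Danskin) theorem so that $\nabla_\theta R^\varphi_\epsilon(\theta)$ equals the $\theta$-gradient of the objective at the inner optimum, namely an $\omega$-average of $\nabla_\theta\psi(X;\theta)^\top\hat h$ weighted through $\varphi^{\ast\prime}$, and Taylor-expand the stationarity condition $\nabla_\theta R^\varphi_\epsilon(\hat\theta)=0$ about $\theta_0$; (iii) use the full-rank Jacobian in~i), $\operatorname{rank}(E[\nabla_\theta\psi(X;\theta_0)])=p$, and the non-singular $\Omega_0$ to invert the resulting near-quadratic form, pinning down $\|\hat\theta-\theta_0\|=O_p(n^{-1/2})$. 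The negligibility of the $\alpha Q$ component of $\omega$ at this scale follows from $\alpha=O_p(n^{-1})$ in~f), and the remaining pieces are routine dominance and continuity arguments of the Newey--Smith type.
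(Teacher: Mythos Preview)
Your approach differs substantially from the paper's. You follow the classical extremum-estimator route (uniform convergence of the profile to a population limit $R_0$ with a well-separated minimizer), whereas the paper follows the Newey--Smith saddle-point strategy: it never establishes uniform convergence of $R^\varphi_\epsilon$. Instead it (i) bounds the dual optimizer at any $\bar\theta\overset{p}{\to}\theta_0$ by $\|\bar\beta\|_\mathcal{M}=O_p(n^{-1/2})$ via a second-order Taylor expansion of $\widehat G_\epsilon$ in $\beta$ around $0$, (ii) plugs a specific test direction $\bar\beta_\zeta\propto(0,0,\hat\psi)$ into the saddle-point inequalities $\widehat G(\hat\theta,\bar\beta_\zeta)\le\widehat G(\hat\theta,\hat\beta)\le\max_\beta\widehat G(\theta_0,\beta)$ to extract $\|\tfrac1n\sum_i\psi(x_i;\hat\theta)\|=O_p(n^{-1/2})$ directly, and (iii) combines this with the CLT and uniqueness (a) to get $\hat\theta\overset{p}{\to}\theta_0$. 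The rate then follows from a one-line mean-value argument on $E[\psi(X;\hat\theta)]$ using the rank condition~(i), not from a full envelope/Taylor expansion of the first-order conditions as you propose.

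Your uniform-convergence step has a concrete gap beyond the RKHS issue you flag. In the unconditional case there is \emph{no} regularizer on $h$ (the paper sets $\lambda_n=0$), so your plan to ``confine the inner maximizer to a fixed norm-ball uniformly in $\theta$'' does not go through for the $h$-component: the curvature of $\widehat G_\epsilon$ in $h$ is governed by $\int\psi(x;\theta)\psi(x;\theta)^T\,\mathrm d\omega$, and the assumptions only give non-singularity of $\Omega_0$ at $\theta_0$, not uniformly over $\Theta$. Consequently the optimal $h$ may be unbounded (and $R^\varphi_\epsilon(\theta)$ may even be $+\infty$) at some $\theta$, which breaks the ULLN argument over a compact dual index set. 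The paper's saddle-point route sidesteps this entirely because it never needs to control the dual variables at arbitrary $\theta$; it only compares values at $\hat\theta$ and $\theta_0$ and uses a hand-picked feasible $\beta$ rather than the optimizer. Your rate argument via Danskin and joint first-order conditions is workable but heavier than needed here --- it is essentially the asymptotic-normality machinery, whereas the paper obtains the $n^{-1/2}$ rate immediately from $\|E[\psi(X;\hat\theta)]\|=O_p(n^{-1/2})$ and $\Sigma_0\succ0$.
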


\begin{theorem}[Asymptotic Normality for MR]\label{th:asymptotic-normality-mr}
Let Assumptions a)-i) of Theorem~\ref{th:consistency-mr} be satisfied. 
Then,
\begin{align*}
    \sqrt{n} (\hat{\theta} - \theta_0)  \overset{d}{\rightarrow}  N(0, \Xi_0)
\end{align*}
where 
    $\Xi_0 = \left( E[\nabla_\theta \psi(X;\theta_0)] \ \Omega_0^{-1} \ E[\nabla_\theta \psi(X;\theta_0)] \right)^{-1}$.
\end{theorem}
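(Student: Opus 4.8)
The plan is to adapt the classical generalized-empirical-likelihood asymptotics of \citet{qin-lawless,newey04} to the saddle-point form of the KMM dual \eqref{eq:dual-profile}, treating $(\eta,f,h)$ as (partly infinite-dimensional) dual variables and showing that the new block $(\eta,f)$ attached to the MMD term is asymptotically negligible for the distribution of $\hat\theta$. \textbf{Step 1 (stationarity conditions).} I would first record the first-order conditions of the inner supremum together with the outer $\theta$-stationarity. Writing $a_\theta(x)=\epsilon^{-1}(f(x)+\eta-\langle h,\psi(x;\theta)\rangle)$ and $p=(\varphi^\ast)'(a_\theta)$, differentiation of \eqref{eq:dual-profile} gives (i) $f=\mu_{\hat P_n}-\int p(x)k(x,\cdot)\,\omega(\dd x)$, (ii) $\int p\,\omega(\dd x)=1$, (iii) $\int p(x)\psi(x;\theta)\,\omega(\dd x)=0$, and (iv) $\int p(x)\nabla_\theta\psi(x;\theta)^\top h\,\omega(\dd x)=0$. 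Thus $p$ is the density of the implied optimal distribution $\hat P$, conditions (ii)--(iii) are its normalization and moment constraints, and (i) states the MMD witness identity $f=\mu_{\hat P_n}-\mu_{\hat P}$. At the population level these are solved by $(\theta_0,\eta_0,f_0,h_0)=(\theta_0,0,0,0)$ with $\hat P=P_0$, using $E[\psi(X;\theta_0)]=0$ and the normalization $\varphi_1^\ast(0)=1$.

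\textbf{Step 2 (rates for the dual variables).} Given consistency and the $O_p(n^{-1/2})$ rate for $\hat\theta$ from Theorem~\ref{th:consistency-mr}, I would show that the maximizers also concentrate: $\hat\eta,\hat h=O_p(n^{-1/2})$ and $\|\hat f\|_{\mathcal F}=O_p(n^{-1/2})$. The driving stochastic quantity is the sample moment $\avg\psi(x_i;\theta_0)=O_p(n^{-1/2})$, to which the multiplier $\hat h$ is tied through (iii); the $O_p(n^{-1/2})$ bound then follows from strong concavity of the dual objective in $(\eta,f,h)$ near the optimum, whose curvature is nondegenerate by $\varphi_2^\ast(0)=1$, nonsingularity of $\Omega_0$ (assumption e), and boundedness of the kernel.

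\textbf{Step 3 (linearisation and elimination of the MMD block).} Next I would take a mean-value expansion of (i)--(iv) about $(\theta_0,0,0,0)$. Using $\varphi_1^\ast(0)=\varphi_2^\ast(0)=1$ gives $p(x)=1+a_\theta(x)+o_p(\cdot)$, and assumption f) with $\alpha=O_p(n^{-1})$ lets me replace every $\omega$-integral by the corresponding $\hat P_n$-average up to $O_p(n^{-1})$, so that (ii)--(iv) collapse to empirical-average conditions almost identical to ordinary GEL. The crux is the witness identity (i): it determines $\hat f$ (and, via (ii), $\hat\eta$) as a bounded linear image of $\hat h$, and I must show that its feedback into the moment equation (iii) — a cross term of the form $\avg \hat f(x_i)\psi(x_i;\theta_0)$ — is asymptotically negligible relative to $\Omega_0\hat h$, so that the effective curvature seen by $(\theta,h)$ reduces to $\Omega_0$. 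Granting this, the stacked leading-order system becomes the bordered linear system
\begin{align*}
    \begin{pmatrix} -\Omega_0 & G_0 \\ G_0^\top & 0 \end{pmatrix}\begin{pmatrix} \sqrt{n}\,\hat h \\ \sqrt{n}(\hat\theta-\theta_0) \end{pmatrix} = \begin{pmatrix} \sqrt{n}\avg\psi(x_i;\theta_0) \\ 0 \end{pmatrix} + o_p(1),
\end{align*}
with $G_0=E[\nabla_\theta\psi(X;\theta_0)]$.

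\textbf{Step 4 (inversion and CLT).} Finally, nonsingularity of $\Omega_0$ (e) and $\operatorname{rank}(G_0)=p$ (assumption i) make the bordered matrix invertible; solving the second block gives $\sqrt{n}(\hat\theta-\theta_0)=(G_0^\top\Omega_0^{-1}G_0)^{-1}G_0^\top\Omega_0^{-1}\,\sqrt{n}\avg\psi(x_i;\theta_0)+o_p(1)$. The Lindeberg--L\'evy CLT (using the moment bound d) yields $\sqrt{n}\avg\psi(x_i;\theta_0)\overset{d}{\to}N(0,\Omega_0)$, and Slutsky's theorem then gives $\sqrt{n}(\hat\theta-\theta_0)\overset{d}{\to}N(0,\Xi_0)$ with $\Xi_0=(G_0^\top\Omega_0^{-1}G_0)^{-1}$, matching the claim. \textbf{The main obstacle} is Step 3: controlling the infinite-dimensional MMD witness $f$ and the normalizer $\eta$ and proving that their contribution to the $\theta$-limit is of higher order. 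This is the part with no analogue in classical $\varphi$-divergence GEL, and it is exactly where assumption f) (forcing $\omega$ to agree with $\hat P_n$ up to $O_p(n^{-1})$) and the normalization $\varphi_1^\ast(0)=\varphi_2^\ast(0)=1$ are indispensable.
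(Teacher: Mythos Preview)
Your plan is essentially the paper's proof, which treats the finite-dimensional theorem as a specialization of the functional case (Theorem~\ref{th:asymptotic-normality}) with $\mathcal H=\mathbb R^m$ and $\lambda_n=0$: both linearize the joint first-order conditions for $(\theta,\beta)$, $\beta=(\eta,f,h)$, about $(\theta_0,0)$, use the $O_p(n^{-1/2})$ rate for $\hat\beta$ from the consistency lemmas together with assumption~f) to replace $\omega$-integrals by $\hat P_n$-averages (Lemma~\ref{lemma:omega-empirical}), and invert the resulting bordered system via a Schur complement. Your choice to eliminate $(\eta,f)$ before inverting is just the Schur-complement computation carried out in a different order; the paper keeps the full $(\theta,\eta,f,h)$ system and reads off the $\theta$-row of the inverse.

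Where you flag the ``main obstacle'', the paper proceeds differently from what you sketch. It does \emph{not} attempt to show that the individual feedback term $\frac{1}{n}\sum_i\hat f(x_i)\psi(x_i;\theta_0)$ is $o_p(n^{-1/2})$. Instead it argues that the limiting second-derivative operator $\Lambda=\epsilon^{-1}E[a(X;\theta_0)a(X;\theta_0)^\top]+R$ is block diagonal between the $(\eta,f)$-block and the $h$-block: the $(1,3)$ and $(3,1)$ blocks vanish because $E[\psi(X;\theta_0)]=0$, and for the $(2,3)$ block the paper bounds $\|E[k(X,\cdot)\otimes\psi(X;\theta_0)]\|$ by $C\|E[\psi(X;\theta_0)]\|=0$ using boundedness of the RKHS evaluation functional. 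With $\Lambda$ block diagonal, $(\Lambda^{-1})_{33}$ is simply (a scalar multiple of) $\Omega_0^{-1}$, so the $(\eta,f)$ block decouples entirely and your $2\times2$ bordered system in $(h,\theta)$ drops out with effective curvature exactly $\Omega_0$; your Step~4 then matches the paper's concluding CLT computation verbatim.
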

\begin{remark}
    The asymptotic variance $\Xi_0$ of the KMM estimator agrees with the one of the optimally weighted GMM estimator \citep{hansen}, thus for finite dimensional moment restrictions KMM and OW-GMM are asymptotically first-order equivalent.
\end{remark}
\section{Additional Experimental Details} \label{sec:additional-details}
\subsection{Hyperparameter choices}
For the KMM estimator and the baselines we set the hyperparameters within the values described below using the setting with the minimal value for $\operatorname{HSIC}(\psi(X;\theta), Z)$ evaluated on a validation set of the same size as the training set. As for large samples the $\operatorname{HSIC}$ computation becomes increasingly expensive we partition the validation data into batches of size $n_b=2000$ and average $\operatorname{HSIC}$ over the batches.
For the variational methods we use an optimistic Adam \citep{daskalakis2018training} implementation with a mini-batch size of $n=200$ and a learning rate of $\tau_\theta = 5 \cdot 10^{-4}$ for optimization over $\theta$ and $\tau_h = 2.5 \cdot 10^{-3}$ for optimization over $h$ and $\beta = (\eta, f, h)$ respectively. The regularization parameter $\lambda$ for the instrument function $h \in \mathcal{H}$ is picked from $\lambda \in [0, 10^{-4}, 10^{-2}, 1]$.
Specific to FGEL we treat the divergence $\varphi$ as a hyperparameter which we pick from $\varphi \in [\operatorname{KL}, \operatorname{log}, \chi^2]$.
Specific to KMM we use $n_{\textrm{RF}} = 2000$ random Fourier features and for every batch of size $n_\textrm{batch} = 200$ sampled from $\hat{P}_n$ we attach $n_\text{reference}=200$ samples from a reference distribution $Q$ which we represent by a kernel density estimator with Gaussian kernel and bandwidth of $\sigma = 0.1$ trained on $\hat{P}_n$. We observed that the results are largely insensitive to the choice of bandwidth parameter $\sigma$.
The entropy regularization parameter $\epsilon$ is picked from $\epsilon \in [ 0.1, 1, 10]$.
The entropy regularizer is chosen as the Kullback-Leibler divergence as in the first part of Section~\ref{sec:choices}. In agreement with the observations of \citet{kremer2022functional} we noticed experimentally that the choice of $\varphi$-divergence has only a minor effect on the obtained estimator. 
\subsection{Choice of Validation Metric and Failure of MMR} \label{sec:validation}
\begin{figure}[h!]
    \centering
    \includegraphics[width=\linewidth]{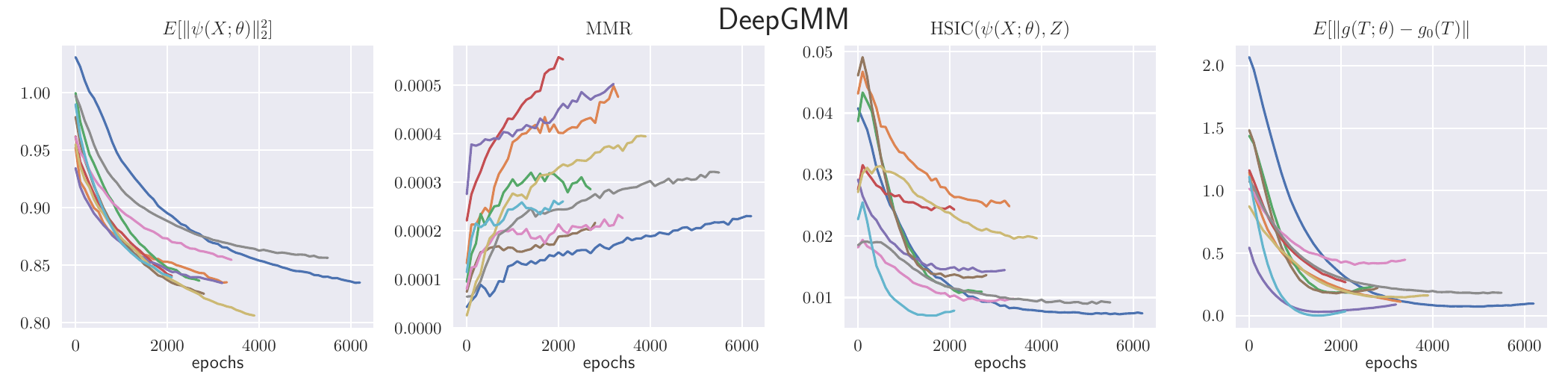}
    \includegraphics[width=\linewidth]{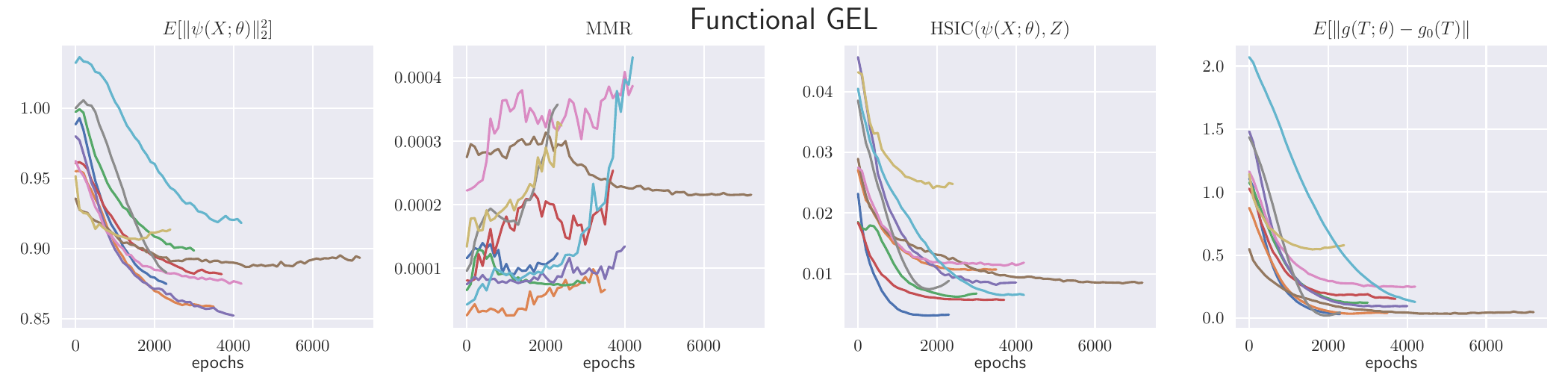}
    \includegraphics[width=\linewidth]{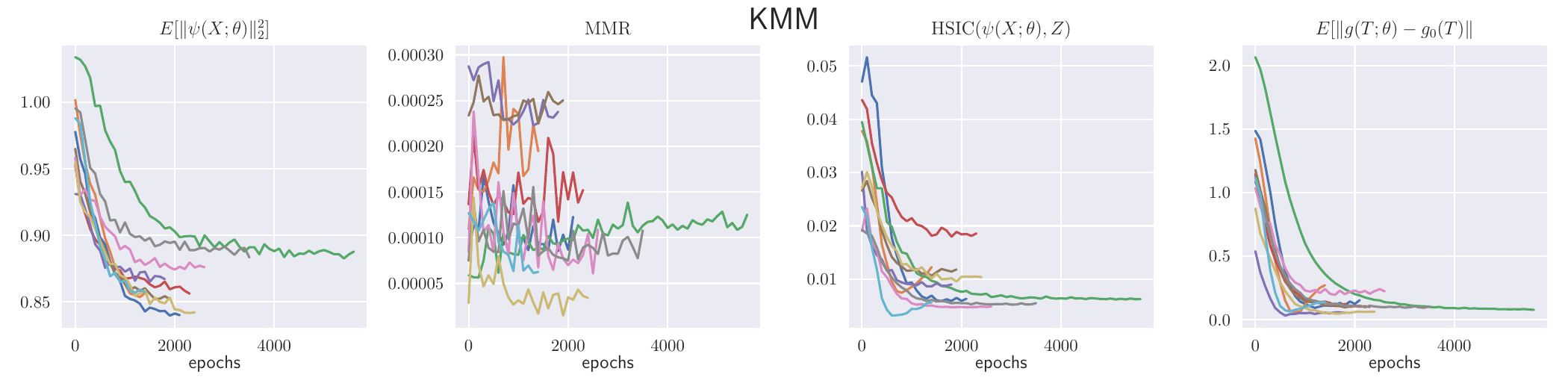}
    \caption{Effects of Validation Metrics for Early Stopping. Visualization of different validation losses for $10$ training samples and different estimators. Goal of the estimation is to minimize the error with respect to the true function $g_0$ shown on the right which is unknown in practice. We observe that among the considered validation metrics, HSIC is the only one that approximately follows the behavior of the error with respect to the true function and thus allows for effective early stopping. 
    The author's implementations of DeepGMM and FGEL use MMR as validation loss. Switching to HSIC allowed us to improve the performance of these baselines by a factor of 2-10.}
    \label{fig:validation}
\end{figure}
The computation of modern CMR estimators including DeepGMM~\citep{bennett2020deep}, Functional GEL~\citep{kremer2022functional} and our KMM estimator generally requires solving a mini-max or saddle point problem where the minimization is with respect to the model parameters and the maximization with respect to the instrument function $h$ (and the RKHS function $f$ in the case of KMM). For such problems it is not obvious how to monitor the success of the training procedure as for conditional moment restriction problems it is not clear which validation objective is supposed to be optimized, which makes tuning of hyperparameters and early stopping cumbersome and ambiguous. This is in contrast to standard supervised learning via empirical risk minimization where training and target objectives are usually aligned and thus one can simply evaluate the loss function over a suitable validation set. There exist different approaches to quantify how well the restrictions $E[\psi(X;\theta)|Z] = 0 \ P_Z\text{-a.s.}$ are satisfied. The authors of \citet{bennett2020deep} and \citet{kremer2022functional} used the maximum moment restriction objective \citep{zhang2021maximum} $\operatorname{MMR}(\theta) = E_{\hat{P}_n}[\psi(X;\theta) K(Z,Z') \psi(X';\theta)]$ which results from the variational formulation $\operatorname{MMR}(\theta) = \sup_{h \in \mathcal{H}} \left(E_{\hat{P}_n}[\psi(X;\theta)^T h(Z)] \right)^2$ with $\mathcal{H}$ corresponding to a unit ball of a reproducing kernel Hilbert space. While \citet{zhang2021maximum} show that this leads to a consistent estimator for $\theta_0$ when optimized over $\theta \in \Theta$ and thus quantifies the satisfaction of the CMR in a meaningful way, it is a priori not clear if it provides a suitable validation metric in finite samples. 
As an alternative \citet{saengkyongam2022exploiting} proposed to measure the satisfaction of $E[\psi(X;\theta)|Z] = 0 \ P_Z\text{-a.s.}$ by quantifying the independence of the random variables $\psi(X;\theta)$ and $Z$ via the Hilbert-Schmidt independence criterion~(HSIC)~\citep{gretton2005measuring}. 
We tested these two validation metrics for hyperparameter optimization and early stopping and observed that using HSIC instead of MMR as validation metric leads to improvements of the predictive MSE of the variational estimators (DeepGMM, FGEL, KMM) by a factor of $2$-$10$, when keeping all other settings (i.e. hyperparameter grids) fixed.
We exemplarily visualize the effect of different validation metrics for early stopping for the heteroskedastic IV experiment in Figure~\ref{fig:validation}. We train all estimators for $10$ different random samples and use HSIC with a loose stopping criterion as validation metric in order to train beyond the optimal validation loss for visualization. The left column shows the prediction MSE of the learned function. While we aim to optimize this quantity, in practice we do not have access to it as the true function $g_0$ is unknown. The remaining columns show the different validation metrics over the course of the optimization. We observe that using the simplistic unconditional moment violation generally leads to overfitting as the estimator would be trained beyond the minimum of the true objective of interest. Interestingly, in most cases the MMR objective does not decrease over the course of the optimization procedure and thus any early stopping strategy based on it might stop the training at random. Of the three metrics, HSIC is the only one that approximately mimics the behavior of the true objective of interest and thus allows for an effective early stopping strategy to prevent overfitting and unnecessary long training.
\section{Entropy Regularization}\label{sec:appendix-entropy}
\subsection{Effect of the Regularization Parameter}
As discussed in Section~\ref{sec:regularization}, for the case of the backward KL divergence or Burg entropy, our entropy regularization can be interpreted as a barrier function in an interior-point method, see \citep{nesterov1994interior}.
For decreasing values of $\epsilon$, the \emph{entropy-regularized MMD profile} approaches the unregularized \emph{MMD-profile}. To validate this empirically we carry out the maximization over the dual parameters $(\eta, f)$ in equation~\eqref{eq:dual-profile} while keeping $h$ and $\theta$ fixed, which preserves the convex structure of the problem.
In Figure~\ref{fig:reg_effect}(a) we observe that for smaller $\epsilon$ we get closer and closer to the original \emph{MMD-profile}, which we obtain from equation~\eqref{eq:unreg-mmd-profile} by using a sample approximation of the semi-infinite constraint in a convex solver.

\subsection{Annealing of Entropy Regularization}
Instead of keeping the regularization parameter $\epsilon$ fixed during the optimization procedure as in Figure~\ref{fig:reg_effect}(a), we study an annealing schedule in which it is gradually decreased, similar to actual interior-point methods. \citet{chizat_mean-field_2022} also studied effects of annealing in a setting where they use particle-gradient descent.
While their work also builds on an energy functional consisting of a combination of MMD and KL-divergence, the dissipation is done in the Wasserstein geometry.
In comparison, we do not carry out the optimization by moving in the Wasserstein space, but instead in the dual RKHS.
To visualize the effect of annealing, we keep $h$ and $\theta$ fixed and only maximize with respect to the remaining dual variables $(\eta, f)$, while gradually decreasing $\epsilon$ with the number of iterations. 
We empirically observe that the annealing procedure eventually leads to a solution that satisfies the (semi-)infinite constraint~(11). This is visualized in Figure~\ref{fig:reg_effect}(b) where the shaded black curves, corresponding to $f(x) + \eta$ at different iterations, are slowly pushed below the red curve in the course of the optimization.

\begin{figure}[t]
    \centering
    \subfloat[Regularization Parameters]{
        \centering
        \includegraphics[width=.49\linewidth]{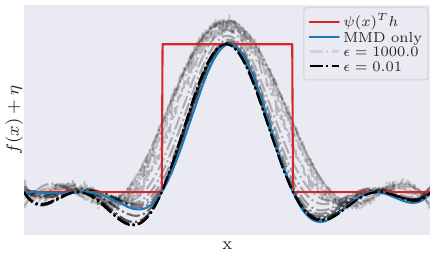}
    }
    \subfloat[Annealing to Approximate the (Semi-)Infinite Constraint]{
        \centering
        \includegraphics[width=.49\linewidth]{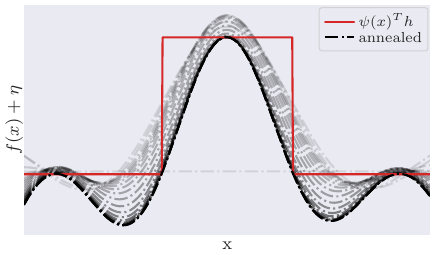}
    }
    \caption{Effect of Entropy Regularization.
    Figure a) shows the effect of entropy regularization for fixed parameters $\epsilon$. The gray lines correspond to logarithmically decreasing values of $\epsilon$ between $1000$ and $0.01$.
    Figure b) shows the annealing procedure for entropy regularization, where the shaded curves show the intermediate progress of the optimization.
    }
    \label{fig:reg_effect}
\end{figure}

\subsection{Choice of Reference Measure} \label{app:reference-measure}
The KMM estimator based on \eqref{eq:dual-profile} and \eqref{eq:functional-kgel} respectively requires a choice of distribution $Q$ that enters the reference distribution $\omega = (1 - \alpha) \hat{P}_n + \alpha Q$ to define the entropy regularizer. As the candidate distributions are required to admit a density with respect to $\omega$, the choice of $Q$ directly determines the class of distributions considered in the minimization over $P$. Optimally $Q$ should be chosen as close as possible to the population distribution $P_0$. As generally $P_0$ is unknown, in the following we discuss several (data-driven) choices for $Q$. 

\paragraph{Lebesgue Measure}
Choosing $Q$ as the Lebesgue measure or the uniform distribution over $\mathcal{X} \times \mathcal{Z}$ respectively, allows for considering arbitrary distributions on $\mathcal{X} \times \mathcal{Z}$. At the same time, this choice corresponds to an uninformative prior which discards the information contained in the sample and does not converge to the population distribution as $n\rightarrow \infty$. Empirically the Lebesgue measure did not show competitive performance.

\paragraph{Empirical Distribution}
The empirical distribution converges to the population distribution as $n \rightarrow \infty$, and therefore it provides a viable candidate as reference distribution. However, as the empirical distribution is a discrete distribution supported on the samples, the considered candidate distributions are again reweightings of the data and thus some of the competitive advantage of KMM over other method of moments estimators is lost. Note however, that MMD provides different gradient information compared to $\varphi$-divergences as used in GEL/GMM and therefore the obtained estimator will still be different.

\paragraph{Kernel Density Estimation}
In order to combine the strength of considering continuous candidate distributions in \eqref{eq:dual-profile} with the information contained in the empirical distribution, one can represent the reference distribution $Q$ by a kernel density estimator (KDE) \citep{rosenblatt1956remarks} trained on the empirical sample. This allows us to sample from a continuous distribution in Algorithm~\ref{alg:training} and thus taking into account candidate distributions with support different from the empirical distribution, while still converging to the population distribution as $n \rightarrow \infty$. Representing $Q$ by a KDE proved to be the most effective choice in practice.

\paragraph{Modern Machine Learning Models}
As a straight-forward extension of the KDE approach, one can represent $Q$ by any density estimator from which one can sample and can thus leverage the potential of modern machine learning approaches like generative adversarial networks~\citep{goodfellow2014generative}, variational auto-encoders~\citep{kingma2022autoencoding}, normalizing flows~\citep{papamakarios2021normalizing} or diffusion models~\citep{ho2020denoising}. This seems particularly promising for complex high-dimensional data, where KDE estimators become increasingly inaccurate. Note however, that while better density estimators most likely improve the finite sample performance of our estimator, the role of $Q$ is to define the class of (continuous) candidate distributions via its support. As long as $P_0 \ll Q$, we can find a $P$ arbitrarily close to $P_0$ and better choices of $Q$ (closer to $P_0$) mostly only facilitate finding these.

\paragraph{Time Evolution of $Q$ via Primal-Dual Schemes}
Instead of using a fixed choice of reference distribution $Q$, one could choose the reference distribution adaptively over the course of the optimization via primal-dual schemes. To this aim, take $P^0= \hat{P}_n$ and consider for timesteps $k=1,\ldots,T$ problem \eqref{eq:mmd-profile} as
\begin{align}
     R^\varphi_\epsilon(\theta) = & \inf_{P \in \mathcal{P}} \frac{1}{2}  \mmd(P,\hat{P}_n)^2 + \epsilon D(P||P_k) \quad  \mathrm{s.t.} \quad E_P[\psi(X;\theta)] =0, \quad E_P[1] =1.
\end{align}
Carrying out the optimization over $\mu \in \mathcal{F}$ and defining $\beta = (\eta, f , h) \in \mathcal{M}$, the Lagrangian of the MMD profile~\eqref{eq:mmd-profile} can be cast in the (semi-dual) form,
\begin{align}
    \label{lagr-semi-dual}
    L(P, \beta) = \frac{1}{n} \sum_{i=1}^n f(x_i) + \eta -\frac{1}{2} \|f \|_\mathcal{H}^2
    + \int \left( - f(x) - \eta + \psi(x;\theta)^T h \right) dP(x) + \epsilon D_\varphi(P || P_{k})
\end{align}
Now, instead of the dual approach used in our dual MMD profile, one could consider a primal dual update where one alternates between updates of the primal measure $P$ via a proximal term using a Bregman divergence $D$
\begin{align}
    \label{eq-P-update-bregman}
    P_{k+1} \in \argmin_P 
         \int \left( - f(x) - \eta + \psi(x;\theta)^T h \right) dP(x) + \epsilon D(P\| P_{k}).
\end{align}
and subsequently update the dual variables $\beta$, where we solve 
\begin{align}
    \label{eq-rkhs-update}
    \beta_{k+1}
    \in \argmax_{\beta \in \mathcal{M}}
    \frac{1}{n} \sum_{i=1}^n f(x_i) + \eta - \frac{1}{2} \|f \|_\mathcal{H}^2 + \int \left( - f(x) - \eta + \psi(x;\theta)^T h \right) dP_{k+1}(x) + \frac{1}{2t}\|\beta - \beta_k\|^2_\mathcal{M}.
\end{align}
The update rule~\eqref{eq-rkhs-update} is a standard convex optimization problem.
We leave a specific implementation of this approach to future work.
\newpage
\section{Proofs}\label{sec:proofs}
\subsection{Definitions and Preliminaries}
To simplify notation and analysis we first provide a compact formulation of the functional KMM objective \eqref{eq:functional-kgel} given by
\begin{align*}
    \widehat{G}_{\epsilon,\lambda_n}(\theta,\beta) = \frac{1}{n} \sum_{i=1}^n f(x_i, z_i) + \eta -\frac{1}{2} \|f \|_{\mathcal{F}}^2 - \epsilon \intxz{ \!\!\!\!\!\!\! \varphi^\ast \left(\frac{f(x,z) + \eta - \langle \Psi(x,z;\theta), h \rangle_\mathcal{H}}{\epsilon} \right) } - \frac{\lambda_n}{2} \|h \|_{\mathcal{H}}^2,
\end{align*}
where $\beta = (\eta, f, h)$.
As $\mathcal{F}$ is an RKHS of functions $\mathcal{X} \times \mathcal{Z} \rightarrow \mathbb{R}$, the evaluation functional in $\mathcal{F}$ is given by $k\left( (x,z),\cdot\right) : \mathcal{F} \rightarrow \mathbb{R}$, such that $\langle k( (x,z), \cdot), f \rangle_\mathcal{F} = f(x,z) \ \forall f \in \mathcal{F}$ and $(x,z) \in \mathcal{X} \times \mathcal{Z}$. Let $\mathcal{M} : = \mathbb{R} \times \mathcal{F} \times \mathcal{H}$. For $\beta = (\eta, f, h) \in \mathcal{M}$ define a norm on $\mathcal{M}$ as $\| \beta \|_\mathcal{M} = \sqrt{|\eta |^2 +  \| f \|^2_\mathcal{F} +  \| h \|^2_\mathcal{H}}$. Define for $i=1,\ldots,n$,
\begin{align}
    b_i = \begin{pmatrix}
    1 \\
    k((x_i,z_i),\cdot) \\
    0
    \end{pmatrix} \in \mathcal{M}, \quad
    a(x, z ;\theta) = \begin{pmatrix}
    1 \\
    k((x,z), \cdot) \\
    - \Psi(x, z;\theta)
    \end{pmatrix} \in \mathcal{M}, \quad
    R_{\lambda} = \begin{pmatrix}
    0 & & \\
    & I & \\
    & & \lambda I
    \end{pmatrix}
    \in \mathcal{M \times M}.
\end{align}
where we used that we can identify $\mathcal{M}^\ast$ with $\mathcal{M}$ by the self-duality property of Hilbert spaces.
Then the functional KMM objective~\eqref{eq:functional-kgel} can be written in the compact form
\begin{align}
    G_{\epsilon,\lambda_n}(\theta, \beta) = \frac{1}{n} \sum_{i=1}^n b_i^T \beta - \epsilon \intxz{ \varphi^\ast\left( \frac{1}{\epsilon} a(x,z;\theta)^T \beta \right) } - \frac{1}{2} \beta^T R_{\lambda_n} \beta. \label{eq:compact}
\end{align}
Analogously for the objective of the finite dimensional KMM estimator \eqref{eq:dual-profile} we have $\mathcal{H} = \mathbb{R}^m$ and $\mathcal{M} = \mathbb{R} \times \mathcal{F} \times \mathbb{R}^m$ and further define for $i=1,\ldots,n$,
\begin{align}
    b_i = \begin{pmatrix}
        1 \\
        k(x_i,\cdot) \\
        0
    \end{pmatrix} \in \mathcal{M}, \quad
    a(x;\theta) = \begin{pmatrix}
        1 \\
        k(x, \cdot) \\
        - \psi(x;\theta)
    \end{pmatrix} \in \mathcal{M}, \quad
    R = \begin{pmatrix}
        0 & & \\
        & I & \\
        & & 0
    \end{pmatrix}
    \in \mathcal{M \times M}.
\end{align}
Then the unconditional KMM objective~\eqref{eq:dual-profile} can be written in the compact form
\begin{align}
    \widehat{G}_{\epsilon}(\theta, \beta) = \frac{1}{n} \sum_{i=1}^n b_i^T \beta - \epsilon \intx{ \varphi^\ast\left( \frac{1}{\epsilon} a(x;\theta)^T \beta \right) } - \frac{1}{2} \beta^T R \beta.
    \label{eq:compact-finite}
\end{align}
In the proofs we will consider derivatives of the KMM objective with respect to the dual parameters $\beta \in \mathcal{M}$, the second and the third component of which live in function spaces $\mathcal{F}$ and $\mathcal{H}$ respectively. We define the corresponding functional derivative as follows.
\begin{definition}[Functional Derivative] \label{def:func-dev}
    Let $\mathcal{H}$ be a vector space of functions. For a functional $G: \mathcal{H} \rightarrow \mathbb{R}$ and a pair of functions $h_0, h_1 \in \mathcal{H}$, we define the derivative operator $\frac{\partial}{\partial h} G(h_0)$ at $h_0$ via $\frac{\partial}{\partial h} G(h_0)(h_1) = \left.\frac{d}{d t} G(h_0+t h_1)\right|_{t=0}$. Likewise, we define the $k$-th functional derivative $\frac{\partial^k}{(\partial h)^k}G(h_0)$ at $h_0$ via
    \begin{align}
    \frac{\partial^k}{(\partial h)^k}G(h_0)\left(h_{1}, \ldots, h_{k}\right) 
    = \left.\frac{\partial^{k}}{\partial t_{1} \ldots \partial t_{k}} G\left(h_0 + t_{1} h_{1}+\ldots+t_{k} h_{k}\right)\right|_{t_{1}=\cdots=t_{k}=0}. 
    \end{align}
    Moreover, we write $\frac{\partial^k}{(\partial h)^k}G(h_0) = 0$ as a shorthand for $\frac{\partial^k}{(\partial h)^k}G(h_0)\left(h_{1}, \ldots, h_{k}\right) =0$ for all $h_{1}, \ldots, h_{k} \in \mathcal{H}$.
    Similarly, when considering a vector-valued function of a vector-valued parameter, $G: \Theta \subseteq \mathbb{R}^p \rightarrow \mathbb{R}^m$, we denote the $k$-th standard directional derivative at $\theta_0 \in \Theta$ as $\frac{\partial^k}{(\partial \theta)^k} G(\theta_0) \in \mathbb{R}^{p\times m}$ and in the case $k=1$ we write the Jacobian as $\frac{\partial}{(\partial \theta)} G(\theta_0) = \left( \nabla_\theta G \right)(\theta_0) =: \nabla_\theta G(\theta_0)$.
\end{definition} 
Additionally we will make use of the functional version of Taylor's theorem with Lagrange remainder, which we state here for completeness.
\begin{proposition}[Taylor's Theorem] \label{prop:taylor}
    Let $G: \mathcal{H} \rightarrow \mathbb{R}$, where $\mathcal{H}$ is a vector space of functions. For any $h, h^{\prime} \in \mathcal{H}$, if $t \mapsto G\left(t  h + (1-t)  h^{\prime}\right)$ is $(k+1)$-times differentiable over an open interval containing $[0,1]$, then there exists $\bar{h} \in \operatorname{conv}\left(\left\{h, h^{\prime}\right\}\right)$ such that
    \begin{align}
        G\left(h^{\prime}\right) = & \  G(h) + \sum_{i=1}^{k} \frac{1}{i !} \frac{\partial^i}{(\partial h)^i} G(h)(\underbrace{h^{\prime}-h, \ldots, h^{\prime}-h}_{i \text { times }} \\ 
        &+ \frac{1}{(k+1) !} \frac{\partial^{k+1}}{(\partial h)^{k+1}} G(\bar{h})(\underbrace{h^{\prime}-h, \ldots, h^{\prime}-h}_{k+1 \text { times }}) .
    \end{align}
\end{proposition}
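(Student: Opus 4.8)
The plan is to reduce this functional statement to the classical one-dimensional Taylor theorem with Lagrange remainder by restricting $G$ to the line segment joining $h$ and $h'$. Concretely, I would set $v := h' - h$ and define the scalar function $g : [0,1] \to \mathbb{R}$ by $g(t) := G(h + t v)$, so that $g(0) = G(h)$ and $g(1) = G(h')$. Note that $h + tv = (1-t)h + t h'$, which is the same curve appearing in the hypothesis up to the affine reparametrization $t \mapsto 1-t$; hence the assumption that $t \mapsto G(t h + (1-t) h')$ is $(k+1)$-times differentiable on an open interval containing $[0,1]$ guarantees that $g$ is $(k+1)$-times differentiable there as well.

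The key step is to identify the scalar derivatives of $g$ with the functional derivatives of $G$ evaluated with all directions equal to $v$. I claim that for every $0 \le i \le k+1$ and every admissible $t$,
$$g^{(i)}(t) = \frac{\partial^i}{(\partial h)^i} G(h + t v)\,[\,\underbrace{v, \ldots, v}_{i \text{ times}}\,].$$
I would prove this by induction on $i$ directly from Definition~\ref{def:func-dev}. For $i = 1$ it is exactly the definition of the first functional derivative at the point $h + tv$ in direction $v$. For the inductive step I would observe that, by Definition~\ref{def:func-dev} taken with all $h_j = v$, the right-hand side equals $\frac{\partial^i}{\partial s_1 \cdots \partial s_i} G\big(h + tv + (s_1 + \cdots + s_i) v\big)\big|_{s = 0}$; writing $F(u) := G(h + (t+u)v) = g(t+u)$ and applying the chain rule in the single variable $u = s_1 + \cdots + s_i$ collapses the $i$-fold mixed partial to $F^{(i)}(0) = g^{(i)}(t)$. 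The differentiability hypothesis is precisely what licenses iterating these derivatives.

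With this identification in hand, I would apply the standard scalar Taylor theorem with Lagrange remainder to $g$ on $[0,1]$: there exists $\bar t \in (0,1)$ with
$$g(1) = g(0) + \sum_{i=1}^{k} \frac{1}{i!} g^{(i)}(0) + \frac{1}{(k+1)!} g^{(k+1)}(\bar t).$$
Substituting $g(0) = G(h)$, $g(1) = G(h')$, and the derivative identity evaluated at $t = 0$ and $t = \bar t$, and setting $\bar h := h + \bar t v = (1 - \bar t) h + \bar t h' \in \operatorname{conv}(\{h, h'\})$, yields precisely the claimed expansion.

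I expect the only genuinely delicate point to be the derivative-identification step, since it rests on the symmetry/interchangeability of the mixed partials implicit in Definition~\ref{def:func-dev} together with the differentiability hypothesis being strong enough to justify iterating the one-dimensional differentiation along the single direction $v$; everything downstream is a verbatim invocation of the scalar result.
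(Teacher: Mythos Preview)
The paper does not actually prove this proposition: it is introduced with the phrase ``which we state here for completeness'' and no proof environment follows; it is treated as a standard background fact. Your proposal is therefore not something to compare against a paper proof, but it is the standard and correct argument one would give. Reducing to the line $t \mapsto G(h + t(h'-h))$, identifying $g^{(i)}(t)$ with the $i$-th functional derivative in the repeated direction $v = h'-h$ via Definition~\ref{def:func-dev}, and then invoking the scalar Lagrange-remainder Taylor theorem is exactly how this is usually established; your handling of the reparametrization $t \mapsto 1-t$ and the collapse of the mixed partial $\partial_{s_1}\cdots\partial_{s_i} g(t + s_1 + \cdots + s_i)$ to $g^{(i)}(t)$ are both fine.
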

\newpage
\subsection{Duality Results}
\subsubsection{Proof of Theorem~\ref{th:unreg-duality}}
\begin{proof}
Let $\mu_{\hat{P}_n} = E_{\hat{P}_n}[k(X,\cdot)] = \avg k(x_i,\cdot)$ denote the kernel mean embedding of the empirical distribution.
Instead of working with the measure $P$ directly, we introduce an auxiliary variable $\mu\in\mathcal F$, which serves as the kernel mean embedding of $P$, so we can write the MMD profile as
\begin{align}
    R(\theta) = \inf_{P \in \mathcal{P}, \mu \in \mathcal{H}} \frac{1}{2} \|\mu - \mu_{\hat{P}_n} \|_\mathcal{F}^2 \quad \mathrm{s.t.} \quad \int k(x, \cdot) dP(x) = \mu, \quad \int dP(x) =1, \quad \int \psi(x;\theta) dP(x) =0.
\end{align}
Introducing Lagrange parameters $\eta \in \mathbb{R}$, $f \in \mathcal{F}$ and $h \in \mathbb{R}^m$ we can define the Lagrangian as
\begin{align}
    L(P, \mu, \eta, f, h) &= \frac{1}{2} \|\mu - \mu_{\hat{P}_n} \|_\mathcal{H}^2 + \langle f, \mu - \int k(x, \cdot) dP(x) \rangle_{\mathcal{F}} + \eta \left( 1 - \int dP(x) \right) + \langle h, \int \psi(x;\theta) dP(x) \rangle_{\mathbb{R}^m} \\
    & = \frac{1}{2} \|\mu - \mu_{\hat{P}_n} \|_\mathcal{F}^2 + \int \left( - f(x) - \eta + \langle h, \psi(x;\theta) \rangle_{\mathbb{R}^m}\right) dP(x) + \langle f, \mu \rangle_{\mathcal{F}} + \eta.
\end{align}
Now as we minimize the Lagrangian with respect to all positive measures $P$, this only yields a finite expression as long as $- f(x) - f_0 + \langle h, \psi(x;\theta) \rangle_{\mathbb{R}^m} \geq 0$ $\forall x \in \mathcal{X}$. This directly translates into a semi-infinite constraint and the problem becomes
\begin{align}
    &\sup_{f_0,f,h} \inf_{\mu \in \mathcal{F}} \frac{1}{2} \|\mu - \mu_{\hat{P}_n} \|_\mathcal{F}^2 + \langle f, \mu \rangle_{\mathcal{F}} + f_0 \\
    &\mathrm{s.t.} \quad \langle h, \psi(x;\theta) \rangle_{\mathbb{R}^m} \geq f(x) + f_0 \quad \forall x \in \mathcal{X}.
 \end{align}
Now, the first order optimality conditions for $\mu$ yield (see subsection Optimality Condition in $\mu$ in the following for details)
\begin{align}
    \mu = \mu_{\hat{P}_n} - f,
\end{align}
and reinserting yields the final dual problem
\begin{align}
    &\sup_{f_0,f,h} \frac{1}{n} \sum_{i=1}^n f(x_i) -\frac{1}{2} \|f \|_\mathcal{H}^2 + f_0 \\
    &\mathrm{s.t.} \quad \langle h, \psi(x;\theta) \rangle_{\mathbb{R}^m} \geq f(x) + f_0 \quad \forall x \in \mathcal{X}.
    \label{eq:constrained-dual}
    \end{align}
Strong duality holds trivially as the primal problem only contains equality constraints.
\end{proof}
\subsubsection{Proof of Theorem~\ref{th:kel-duality}}
\begin{proof}
Let again $\mu_{\hat{P}_n} = E_{\hat{P}_n}[k(X,\cdot)] = \avg k(x_i,\cdot)$ denote the KME of the empirical distribution.
Following the proof of Theorem~\ref{th:unreg-duality} we can write the entropy regularized MMD profile as
\begin{align*}
    R_\epsilon(\theta) = \inf_{P \ll \omega, \mu \in \mathcal{H}} \frac{1}{2} \|\mu - \mu_{\hat{P}_n} \|_\mathcal{F}^2 + \epsilon D_\varphi(P||\omega) \quad \mathrm{s.t.} \quad \int k(x, \cdot) dP(x) = \mu, \quad \int dP(x) =1, \quad \int \psi(x;\theta) dP(x) = 0.
\end{align*}
Let $p(x)$ denote the density of $P$ with respect to the reference measure $\omega$.
Introducing dual variables $\eta \in \mathbb{R}$, $f \in \mathcal{F}$ and $h \in \mathbb{R}^m$, the Lagrangian of the problem can be obtained as
\begin{align}
    L(p, \mu, f, h) =& \frac{1}{2} \|\mu - \mu_{\hat{P}_n} \|_\mathcal{F}^2 + \epsilon \intx{ \varphi(p(x)) }  + \langle f, \mu  - \intx{ k(x, \cdot) p(x) } \rangle_{\mathcal{F}} \\
    &+ \eta \left(1 - \intx{ p(x) } \right) + \langle h, \intx{ \psi(x;\theta) p(x) } \rangle_{\mathbb{R}^m}.
\end{align}
Now, collecting terms containing $p$ we get
\begin{align}
    L(p, \mu, f, h) =& \frac{1}{2} \|\mu - \mu_{\hat{P}_n} \|_\mathcal{F}^2  + \langle f, \mu \rangle_{\mathcal{F}} + \eta - \epsilon \intx{ \left( \frac{f(x) + \eta - \langle h, \psi(x;\theta) \rangle_{\mathbb{R}^m}}{\epsilon} p(x)  - \varphi(p(x)) \right) } .
\end{align}
The dual formulation follows from minimizing the Lagrangian with respect to the primal variables $\mu \in\mathcal{F}$ and $p \in \Pi(\omega)$, where $\Pi(\omega)$ denotes the set of all densities with respect to $\omega$. Taking the infimum of $L$ with respect to $p$ we obtain
\begin{align}
    & \inf_{p \in \Pi(\omega)} \frac{1}{2} \|\mu - \mu_{\hat{P}_n} \|_\mathcal{F}^2  + \langle f, \mu \rangle_{\mathcal{F}} + \eta - \epsilon \intx{ \left( \frac{f(x) + \eta - \langle h, \psi(x;\theta) \rangle_{\mathbb{R}^m}}{\epsilon} p(x)  - \varphi(p(x)) \right) }  \\
    =& \frac{1}{2} \|\mu - \mu_{\hat{P}_n} \|_\mathcal{F}^2  + \langle f, \mu \rangle_{\mathcal{F}} + \eta -  \epsilon  \sup_{p \in \Pi(\omega)}\intx{ \left( \frac{f(x) + \eta - \langle h, \psi(x;\theta) \rangle_{\mathbb{R}^m}}{\epsilon} p(x)  - \varphi(p(x)) \right) } \\
    =& \frac{1}{2} \|\mu - \mu_{\hat{P}_n} \|_\mathcal{F}^2  + \langle f, \mu \rangle_{\mathcal{F}} + \eta -  \epsilon  \intx{ \sup_{t \in \mathbb{R}_+} \left( \frac{f(x) + \eta - \langle h, \psi(x;\theta) \rangle_{\mathbb{R}^m}}{\epsilon} t  - \varphi(t) \right) } \\
    =& \frac{1}{2} \|\mu - \mu_{\hat{P}_n} \|_\mathcal{F}^2  + \langle f, \mu \rangle_{\mathcal{F}} + \eta - \epsilon \intx{ \varphi^{\ast}\left(\frac{f(x) + \eta - \langle h, \psi(x;\theta) \rangle_{\mathbb{R}^m}}{\epsilon}\right)},
\end{align}
where we used the definition of the Fenchel conjugate function $\varphi^\ast(q) = \sup_{p} \langle q, p \rangle - \varphi(p)$.
In the third line we used that as $p: \mathcal{X} \rightarrow \mathbb{R}_{+}$ is an arbitrary function, we can swap the supremum outside the integral for a pointwise supremum over $t := p(x)$ for each $x\in \mathcal{X}$ in the integral.
Now, the first order optimality conditions for the function $\mu$ yield (refer to the following subsection for details)
\begin{align}
    \mu = \mu_{\hat{P}_n} - f.
\end{align}
Inserting this back into the Lagrangian we get
\begin{align}
L(f,\eta, h) & = \frac{1}{n} \sum_{i=1}^n f(x_i) + \eta -\frac{1}{2} \|f \|_\mathcal{H}^2 
- \epsilon \intx{ \varphi^{\ast}\left(\frac{f(x) + \eta - \langle h, \psi(x;\theta) \rangle_{\mathbb{R}^m}}{\epsilon}\right) }, \label{eq:ent_reg_dual_obj}
\end{align}
from which the dual program follows. Strong duality follows trivially as the primal problem only contains equality constraints. In order to show convexity, consider the compact notation \eqref{eq:compact-finite},
\begin{align}
    G_{\epsilon}(\theta, \beta) = \frac{1}{n} \sum_{i=1}^n b_i^T \beta - \epsilon \intx{ \varphi^\ast \left( \frac{1}{\epsilon} a(x)^T \beta \right) } - \frac{1}{2} \beta^T R \beta.
\end{align}
The first term is linear in $\beta$ and thus trivially concave. The second term is concave as by definition the Fenchel conjugate of any function is convex (and thus its negative concave) and the composition of a concave function with a linear function yields a concave function. Finally the third term is a negative semi-definite quadratic form for any $\lambda_n \geq 0$ and thus concave. As an unconstrained maximization over a jointly concave objective the optimization over the dual parameters is a convex program.
\end{proof}
\paragraph{Optimality condition in $\mu$}
It is easily verified that the functional
\begin{align}
    \frac{1}{2} \|\mu - \mu_{\hat{P}_n} \|_\mathcal{F}^2 + \langle f, \mu \rangle_{\mathcal{F}} 
\end{align}
is (strongly) convex in $\mu$.
In fact, its minimizer can be seen by a straightforward manipulation of the terms
\begin{align}
    \frac{1}{2} \|\mu - \mu_{\hat{P}_n} \|_\mathcal{F}^2 + \langle f, \mu \rangle_{\mathcal{F}}
    =&\frac{1}{2} \|\mu - \mu_{\hat{P}_n} \|_\mathcal{F}^2  
    +
    \langle f, \mu  - \mu_{\hat{P}_n} \rangle_{\mathcal{F}}
    +
    \frac12 \|f\|_\mathcal{F}^2  
    +
    \langle f, \mu_{\hat{P}_n} \rangle_{\mathcal{F}}
    -
    \frac12 \|f\|_\mathcal{F}^2  
    \\
    =&
    \frac{1}{2} \|\mu - \mu_{\hat{P}_n} +f\|_\mathcal{F}^2    
    +
    \langle f, \mu_{\hat{P}_n} \rangle_{\mathcal{F}}
    -
    \frac12 \|f\|_\mathcal{F}^2  
     \\
    \geq&
    \langle f, \mu_{\hat{P}_n} \rangle_{\mathcal{F}}
    -
    \frac12 \|f\|_\mathcal{F}^2,
\end{align}
where the optimum is attained at
$\mu = \mu_{\hat{P}_n} - f$.
Alternatively, we can also characterize the optimality condition via the differentiability structure.
Since $\mathcal{F}$ is a normed space, we use $\nabla G(\mu)$ to denote the Fr\'echet derivative of a functional $G$.
Suppose $\mu$ is a minimizer of the problem
\begin{align}
    \min_{\mu'} 
    \
    \left\{
        G(\mu'):=
    \langle f, \mu' \rangle_{\mathcal{F}}
   + \frac{1}{2} \|\mu' - \mu_{\hat{P}_n} \|_\mathcal{F}^2  
   \right\}.
\end{align}
Then 
\(
    \nabla 
    G(\mu)
    = 0
\) and a
straightforward calculation yields 
$\mu = \mu_{\hat{P}_n} - f$.
\newpage
\subsection{Asymptotic Properties of KMM for Conditional Moment Restrictions} \label{sec:proofs-cmr}
The asymptotic properties of the KMM estimator for conditional moment restrictions follow from phrasing the conditional moment restrictions \eqref{eq:conditional-mr} as functional moment restrictions of the form \eqref{eq:continuum-mr} over a sufficiently rich Hilbert space of functions. In the following we show that the assumptions of Theorem~\ref{th:consistency-cmr} suffice to fulfill the assumptions of the theorems for the functional KMM estimator (Theorems~\ref{th:consistency-fmr} and \ref{th:asymptotic-normality}) from which the results follow. The proofs for the functional case are deferred to Section~\ref{sec:proofs-fmr}.

\subsubsection{Proof of Theorem~\ref{th:consistency-cmr} (Consistency for CMR)}
\begin{lemma}\label{lemma:invertible}
    For a moment function $\psi(x;\theta)$ taking values in $\mathbb{R}^m$ define the \emph{conditional covariance matrix} $V_0(Z) = E[\psi(X;\theta_0) \psi(X;\theta_0)^T | Z]$ as a function of the conditioning random variable $Z$ taking values in $\mathcal{Z}$. Let $\mathcal{H}$ be a Hilbert space of square integrable functions equipped with the norm $h \mapsto \|h \|_{L^2(\mathcal{H}, P_0)}= \left(\int_\mathcal{Z} \| h(z)\|_2^2 \ P_0(\mathrm{d}z)\right)^{1/2}$. Define the moment functional $\Psi(x,z;\theta): \mathcal{H} \rightarrow \mathbb{R}$ such that $\Psi(x,z;\theta)(h) = \psi(x;\theta)^T h(z)$ for any $(x,z) \in \mathcal{X} \times \mathcal{Z}$, $\theta \in \Theta$ and $h \in \mathcal{H}$. Then the covariance operator $\Omega_0: \mathcal{H} \rightarrow \mathcal{H}$ defined as
    \begin{align}
        \Omega_0 = E[\Psi(X,Z;\theta_0) \otimes \Psi(X,Z;\theta_0)]
    \end{align}
    is non-singular if $V_0(Z)$ is non-singular with probability 1.
\end{lemma}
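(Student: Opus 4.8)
The plan is to reduce non-singularity of the operator $\Omega_0$ to the pointwise non-singularity of $V_0(Z)$ by passing through the quadratic form that $\Omega_0$ induces on $\mathcal{H}$ and then conditioning on $Z$. First I would unpack the definition $\Omega_0 = E[\Psi(X,Z;\theta_0) \otimes \Psi(X,Z;\theta_0)]$. Using the Riesz identification of $\Psi(x,z;\theta_0) \in \mathcal{H}^\ast$ with an element of $\mathcal{H}$ (as in Section~\ref{sec:KMMFMR}), the outer product acts as $(\Psi \otimes \Psi)h = \langle \Psi, h \rangle_{\mathcal{H}} \, \Psi$, so that for every $h \in \mathcal{H}$,
$$\langle h, \Omega_0 h \rangle_{\mathcal{H}} = E\big[ \langle \Psi(X,Z;\theta_0), h \rangle_{\mathcal{H}}^2 \big] = E\big[ (\psi(X;\theta_0)^T h(Z))^2 \big].$$
Since $\Omega_0$ is self-adjoint and positive semi-definite, it is non-singular precisely when this quadratic form is strictly positive for all $h \neq 0$; equivalently, it suffices to show that its kernel is trivial.

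Next I would show the quadratic form is positive definite. Suppose $\langle h, \Omega_0 h \rangle_{\mathcal{H}} = 0$ for some $h \in \mathcal{H}$, i.e. $E[(\psi(X;\theta_0)^T h(Z))^2] = 0$. Applying the tower property and conditioning on $Z$, the integrand becomes a conditional quadratic form in the fixed vector $h(Z)$,
$$E\big[ (\psi(X;\theta_0)^T h(Z))^2 \big] = E\big[ h(Z)^T V_0(Z) h(Z) \big],$$
where $V_0(Z) = E[\psi(X;\theta_0)\psi(X;\theta_0)^T \mid Z]$. Because $V_0(z)$ is positive semi-definite for every $z$, the integrand $h(z)^T V_0(z) h(z)$ is non-negative, so its expectation vanishing forces $h(Z)^T V_0(Z) h(Z) = 0$ $P_Z$-a.s. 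By assumption $V_0(Z)$ is non-singular, hence strictly positive definite, with probability one, and a positive definite matrix annihilates only the zero vector; therefore $h(Z) = 0$ $P_Z$-a.s.

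Finally I would translate this back into the Hilbert space norm. Since the norm on $\mathcal{H}$ is the $L^2(P_0)$ norm $\|h\|_{\mathcal{H}}^2 = \int_{\mathcal{Z}} \|h(z)\|_2^2 \, P_0(\mathrm{d}z)$, the statement that $h(z) = 0$ for $P_Z$-almost every $z$ gives $\|h\|_{\mathcal{H}} = 0$, i.e. $h = 0$ as an element of $\mathcal{H}$. Hence $\Omega_0 h = 0 \Rightarrow \langle h, \Omega_0 h \rangle_{\mathcal{H}} = 0 \Rightarrow h = 0$, so $\Omega_0$ has trivial kernel and is non-singular, as claimed.

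The routine part is the conditioning identity; the step requiring the most care is the measure-theoretic bookkeeping, namely justifying the exchange of expectation and conditioning and arguing that an almost-sure pointwise statement in the joint law of $(X,Z)$ descends to a $P_Z$-almost-sure statement about $h(Z)$ and hence to the vanishing of the $L^2(P_0)$ norm. I would also be explicit about the intended meaning of \emph{non-singular}: the argument above establishes injectivity (trivial kernel) of the self-adjoint operator $\Omega_0$, which is the notion relevant to the asymptotic analysis of Theorems~\ref{th:consistency-fmr} and \ref{th:asymptotic-normality}. If a bounded inverse were required one would additionally need a uniform lower bound on the eigenvalues of $V_0(Z)$, which does not follow from pointwise non-singularity alone, so I would flag that distinction rather than claim more than the pointwise hypothesis yields.
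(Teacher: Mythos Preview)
Your argument is correct and follows the same route as the paper: compute $\langle h,\Omega_0 h\rangle_{\mathcal{H}}=E[h(Z)^T V_0(Z)\,h(Z)]$ via the tower property and then use positive definiteness of $V_0(Z)$. The only difference is in the last step: you argue by contrapositive (if the quadratic form vanishes then $h(Z)=0$ $P_Z$-a.s., hence $h=0$), whereas the paper asserts that the smallest eigenvalue of $V_0(Z)$ is bounded below by some constant $C>0$ and concludes directly that $\langle h,\Omega_0 h\rangle_{\mathcal{H}}\ge C\|h\|^2$. Your caveat is well placed: almost-sure non-singularity of $V_0(Z)$ does not by itself give a uniform lower bound on $\lambda_{\min}(V_0(Z))$, so the paper's quantitative conclusion is stronger than what the stated hypothesis strictly yields, while your version proves exactly what the hypothesis supports.
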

\begin{proof}
    Note that $\Omega_0$ is non-singular if $\| \Omega_0 h \|_{L^2(\mathcal{H}, P_0)} > 0$ for any $h \in \mathcal{H}$ with $\| h \|_{L^2(\mathcal{H}, P_0)} > 0$, or equivalently if $\langle h, \Omega_0 h \rangle \neq 0$.
    Consider any $h \in \mathcal{H}$ with $\|h \|_{L^2(\mathcal{H}, P_0)} > 0$, then by the law of iterated expectation we have
    \begin{align}
        \langle h, \Omega_0 h \rangle_\mathcal{H} &= E\left[ \langle \Psi(X,Z;\theta_0)(h), \Psi(X,Z;\theta_0)(h) \rangle \right] \\
        &= E\left[\left(\psi(X;\theta_0)^T h(Z) \right)^T \left( \psi(X;\theta_0)^T h(Z)\right) \right] \\
        &= E \left[ h(Z)^T E[\psi(X;\theta_0) \psi(X;\theta_0)^T | Z] h(Z) \right]   \\
        &= \int h(z)^T V_0(z) h(z) \mathrm{d}P_0(z)
    \end{align}
    Now, $V_0(Z)$ is a positive-semi definite matrix by construction and non-singular $P_0\text{-a.s.}$ by assumption and thus its smallest eigenvalue $C$ is bounded away from zero. Therefore we have
    \begin{align}
         \langle h, \Omega_0 h \rangle_\mathcal{H} \geq C \int \|h(z) \|_2^2 \mathrm{d}P_0(z) = C \|h \|_{L^2(\mathcal{H}, P_0)}^2 > 0
    \end{align}
    and thus $\Omega_0$ is non-singular with smallest eigenvalue bounded away from zero. 
\end{proof}

\begin{lemma}\label{lemma:non-singularity}
Let the assumptions of Theorem~\ref{th:consistency-cmr} be satisfied and define for any $(x,z,\theta) \in \mathcal{X} \times \mathcal{Z} \times \Theta$ the moment functional $\Psi(x,z;\theta): \mathcal{H} \rightarrow \mathbb{R}$ with $\Psi(x,z;\theta)(h) = \psi(x;\theta)^T h(z)$.
Then the matrix $\Sigma_0 = \left\langle E[ \nabla_\theta \Psi(X,Z;\theta_0)], E[ \nabla_\theta \Psi(X,Z;\theta_0)] \right\rangle_{\mathcal{H}^\ast} \in \mathbb{R}^{p \times p}$ is strictly positive definite and non-singular with smallest eigenvalue bounded away from zero.
\end{lemma}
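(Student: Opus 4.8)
The plan is to mirror the structure of Lemma~\ref{lemma:invertible}, reducing the abstract functional quantity $\Sigma_0$ to an explicit matrix-valued expectation to which the rank assumption~j) of Theorem~\ref{th:consistency-cmr} can be applied directly.

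First I would compute the Riesz representer of the averaged Jacobian functional. By linearity of differentiation and the definition $\Psi(x,z;\theta)(h)=\psi(x;\theta)^T h(z)$, the $j$-th component of $\nabla_\theta\Psi(x,z;\theta_0)$ acts on $h\in\mathcal H$ as $h\mapsto (\partial_{\theta_j}\psi(x;\theta_0))^T h(z)$. Taking expectations and using the law of iterated expectations yields $E[\nabla_\theta\Psi(X,Z;\theta_0)](h)=E\big[(\nabla_\theta\psi(X;\theta_0))^T h(Z)\big]=E\big[D(Z)^T h(Z)\big]$, where $D(z):=E[\nabla_\theta\psi(X;\theta_0)\mid Z=z]\in\mathbb R^{m\times p}$. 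Since $\mathcal H\subseteq L^2(P_0,\mathcal Z)$, the Riesz representation theorem identifies the (column-wise) representer of $E[\nabla_\theta\Psi(X,Z;\theta_0)]$ with the function $z\mapsto D(z)$. I would then evaluate the inner product entrywise, $(\Sigma_0)_{jk}=\langle D_j,D_k\rangle_{L^2(\mathcal H,P_0)}=\int_\mathcal Z D_j(z)^T D_k(z)\,P_0(\mathrm dz)$, so that compactly $\Sigma_0=E[D(Z)^T D(Z)]$. For any $v\in\mathbb R^p$ this gives the quadratic form $v^T\Sigma_0 v=E[\|D(Z)v\|_2^2]\ge 0$, with equality exactly when $D(Z)v=0$ $P_Z$-a.s. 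Assumption~j) states $\operatorname{rank}(D(Z))=p$ with probability one, so $D(Z)$ has full column rank and trivial kernel a.s.; hence for $v\neq 0$ we have $\|D(Z)v\|_2^2>0$ a.s., and integrating a nonnegative integrand that is positive a.s.\ gives $v^T\Sigma_0 v>0$. Thus $\Sigma_0$ is strictly positive definite. Its smallest eigenvalue equals $\min_{\|v\|_2=1}v^T\Sigma_0 v$, the minimum of a continuous strictly positive function over the compact unit sphere, which is attained and therefore bounded away from zero.

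The main obstacle is the first step: justifying that $E[\nabla_\theta\Psi(X,Z;\theta_0)]$ is a well-defined, bounded functional on $\mathcal H$ whose representer is $z\mapsto D(z)$. This requires (i) interchanging $\nabla_\theta$ with the expectation, which I would justify by dominated convergence using the domination hypothesis~i), $E[\sup_{\theta\in\bar\Theta}\|\nabla_\theta\psi(X;\theta)\|^2\mid Z]<\infty$ w.p.1, and (ii) checking that $z\mapsto D(z)$ lies in $L^2(P_0,\mathcal Z)$ so that the functional is bounded and the Riesz identification is valid; this follows from Jensen's inequality, $\|D(z)\|^2\le E[\|\nabla_\theta\psi(X;\theta_0)\|^2\mid Z=z]$, together with~i). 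Once this identification is secured, the positive-definiteness argument is a direct analogue of Lemma~\ref{lemma:invertible}, with the rank condition~j) playing the role that the non-singularity of $V_0$ played there.
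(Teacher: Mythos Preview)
Your argument is correct and slightly more streamlined than the paper's. The paper fixes an orthonormal basis $\{h_j^i\}$ of each component $\mathcal{H}_i$, expands $\theta^T\Sigma_0\theta$ via Parseval as $\sum_{i,j}\big(\int \theta^T D_0^i(z)\,h_j^i(z)\,P_0(\mathrm dz)\big)^2$, and then expands $\theta^T D_0^i\in L^2(P_0)$ in the same basis to recover $\sum_j\alpha_j^2>0$ from Assumption~j). You short-circuit the basis expansion by identifying the Riesz representer of the averaged Jacobian functional directly as $z\mapsto D(z)$, landing immediately on $\Sigma_0=E[D(Z)^TD(Z)]$ and $v^T\Sigma_0 v=E[\|D(Z)v\|_2^2]$. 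Both routes terminate at the same quadratic form and invoke the full-column-rank condition identically; yours simply avoids the detour through Parseval, while the paper's basis expansion makes the passage from the abstract $\mathcal{H}^\ast$ norm to the pointwise behavior of $D$ more explicit. One shared subtlety worth flagging: both arguments tacitly treat the $\mathcal{H}$-inner product as the $L^2(P_0)$ inner product and assume the columns of $D$ lie in $\mathcal{H}$ (the paper does this implicitly by asserting that its orthonormal basis of $\mathcal{H}_i$ is also a basis of $L^2(P_0,\mathcal{H}_i)$), so your Riesz identification step is on exactly the same footing as the paper's.
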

\begin{proof}
    By definition we have $\mathcal{H} = \bigoplus_{i=1}^m \mathcal{H}_i$ and thus $\mathcal{H}^\ast = \bigoplus_{i=1}^m \mathcal{H}_i^\ast$. For each $i \in \{1,\ldots, m\}$ let $\{h_j^i\}_{j=1}^\infty$ denote an orthonormal basis of $\mathcal{H}^\ast_i$ such that $\langle h_i^k, h_j^l \rangle = \delta_{ij} \delta_{kl}$. Then the identity operator in $\mathcal{H}^\ast$ can be expressed as $I_{\mathcal{H}^\ast} = \sum_{i=1}^m \sum_{j=1}^\infty h_j^i \left(h_j^i\right)^\ast$, where $\left(h_j^i\right)^\ast \in \mathcal{H}^{\ast\ast}$ can be uniquely identified with an element in $\mathcal{H}$ by the property of Hilbert spaces. In the following, we overload notation and denote with $h_j^i$ also the Riesz representer of $h_j^i \in \mathcal{H}^\ast$ in $\mathcal{H}$ which is uniquely identified by the self-duality property of Hilbert spaces.
    Consider any $\theta \in \Theta$ with $0 < \|\theta \| < \infty$ then
    \begin{align}
        \theta^T \Sigma_0 \theta = & \left\langle E[ \theta^T \nabla_\theta \Psi(X,Z;\theta_0)], E[ \theta^T \nabla_\theta \Psi(X,Z;\theta_0)] \right\rangle_{\mathcal{H}^\ast} \\
        =&  \left\langle E[\theta^T \nabla_\theta \Psi(X,Z;\theta_0)], 
        \left( \sum_{i=1}^m \sum_{j=1}^\infty h_j^i \left( h_j^i \right)^\ast \right)        
        E[\theta^T \nabla_\theta \Psi(X,Z;\theta_0)] \right\rangle_{\mathcal{H}^\ast} \\
        =& \sum_{i=1}^m \sum_{j=1}^\infty \left(E[\theta^T  \nabla_\theta \psi_i(X;\theta_0) h^i_j(Z)]\right)^2  \\
        =& \sum_{i=1}^m \sum_{j=1}^\infty \left( E[  \theta^T D^i_0(Z) h^i_j(Z) ] \right)^2, \label{eq:non-sing}
    \end{align}
    where $D_0^i(z) = E[\nabla_\theta \psi_i(X;\theta_0)|Z=z] \in \mathbb{R}^p$ denotes the columns of $D_0(z) = E[\nabla_\theta \psi(X;\theta_0)|Z=z] \in \mathbb{R}^{p \times m}$. Now as $\operatorname{rank}(D_0(Z)) = p$ w.p.1 by Assumption~j), the $p$ rows of $D_0(Z)$ are linearly independent w.p.1 which means that for any $\theta \in \Theta$ with $ 0 < \| \theta \| < \infty$ there exists $s \in \{1,\ldots,m\}$ such that $\theta^T D_0^s(Z) \neq 0$ w.p.1.
    Now, by assumption the function space $\mathcal{H}$ is chosen such that we have equivalence between the conditional and variational/functional forms of the moment restrictions, i.e., for any continuous function $\rho$ we have $E[\rho(X;\theta)^T h(Z)] = 0\ \forall h \in \mathcal{H}$ if and only if $E[\rho(X;\theta)|Z]=0$ w.p.1. In particular this implies $E[\theta^T \nabla_\theta \psi_s(X;\theta) h_s(Z)] = 0\ \forall h_s \in \mathcal{H}_s$ if and only if $E[\theta^T \nabla_\theta \psi_s(X;\theta)|Z]= \theta^T D_0^s(Z) = 0$ w.p.1. 
    As $\theta^T D_0^s(Z) \neq 0$ w.p.1 this means there must exist $h^s \in \mathcal{H}_s$ such that $E[\theta^T D_0^s(Z) h^s(Z)] \neq 0$. As we can expand any $h^s \in \mathcal{H}_s$ in terms of an orthonormal basis $\{h^s_k\}_{k=1}^\infty$ of $\mathcal{H}_s$ as $h = \sum_{k=1}^\infty \alpha_k h_k^s$, there must exist at least one $r \in \mathbb{N}$ with $\alpha_r \neq 0$ and $E[D_0^s(Z) h_r^s(Z)] \neq 0$.
    Inserting this back into \eqref{eq:non-sing} we get
    \begin{align}
         \theta^T \Sigma_0 \theta = &\sum_{i=1}^m \sum_{j=1}^\infty \left( E[  \theta^T D^i_0(Z) h^i_j(Z) ] \right)^2 \\
         \geq& \left( E[\theta^T D_0^s(Z) h_r^s(Z)] \right)^2 > 0.
    \end{align}
   From this it follows that $\Sigma_0$ is non-singular with probability 1.
\end{proof}

\paragraph{Proof of Theorem~\ref{th:consistency-cmr}}
\begin{proof}
    By definition the function space $\mathcal{H}$ is expressive enough such that we can express the conditional moment restriction $E[\psi(X;\theta)|Z]=0 \ P_{Z}\text{-a.s.}$ in functional form as
    \begin{align}
        E[\Psi(X,Z;\theta)] = 0 \in \mathcal{H}^\ast.
    \end{align}
    It remains to be shown that the assumptions imposed on $\psi$ are sufficient for $\Psi$ to fulfill the conditions of Theorem~\ref{th:consistency-fmr}.
    Assumptions a) and b) directly translate to the corresponding assumptions in Theorem~\ref{th:consistency-fmr}. 
    Assumption~c) of Theorem~\ref{th:consistency-cmr} follows directly from Assumption~c) as $\Psi(X,Z;\theta)(h) = \psi(X;\theta)^T h(Z)$ is continuous in $\theta$ for any $\theta \in \Theta$ if $\psi(X;\theta)$ is continuous in $\theta$ for any $\theta \in \Theta$. As this holds for any $h \in \mathcal{H}$ continuity of $\Psi(X,Z;\theta)$ in $\theta$ follows.
    Assumption~d) for Theorem~\ref{th:consistency-fmr} follows as
    \begin{align}
        &E[\sup_{\theta \in \Theta} \|\Psi(X,Z;\theta) \|_{\mathcal{H}^\ast}^2] \\
        &= E[\sup_{\theta \in \Theta} \sup_{h \in \mathcal{H}, \|h \|\leq 1} \| \psi(X;\theta)^T h(Z) \|^2 \\ 
        &\leq E[ E[\sup_{\theta \in \Theta} \|\psi(X;\theta)\|_2^2 \ | Z] \sup_{h \in \mathcal{H}, \|h \|\leq 1} \| h(Z) \|_2^2] \\ 
        &\leq C \int_{\mathcal{Z}} \sup_{h \in \mathcal{H}, \| h \| \leq 1} \| h(z)\|_2^2 \ P_0(\mathrm{d}z)
    \end{align}
    where we used that $E[\sup_{\theta \in \Theta} \|\psi(X;\theta)\|_2^2 |Z] \leq C$ with probability $1$ by Assumption~d).
    Now for any function $h \in \mathcal{H}$ with $\|h \|_{L^2(\mathcal{H}, P_0)} \leq 1$ we must have that $h(Z) < \infty$ w.p.1 and thus by the local Lipschitz property it follows $h(z) \leq M < \infty$ for any $z \in \operatorname{supp}(P_0)$. As this holds for any $h \in \mathcal{H}$, in particular it also holds for the supremum over $\mathcal{H}$ and thus $\sup_{h \in \mathcal{H}, \| h\|\leq 1} \| h(z)\|_2^2 \leq M$ $\forall z \in \mathcal{Z}$. Therefore, we obtain
    \begin{align}
        E[\sup_{\theta \in \Theta} \|\Psi(X,Z;\theta) \|_{\mathcal{H}^\ast}^2] \leq C M \int_{\mathcal{Z}} P_0(\mathrm{d}z) = CM < \infty.
    \end{align}
    Assumption~e) of Theorem~\ref{th:consistency-fmr} follows from Assumption~e) and Lemma~\ref{lemma:invertible}.
    Assumption f) is identical to the corresponding Assumption~f) in Theorem~\ref{th:consistency-fmr} using the same argument as for Assumption~d) for the integrability condition. 
    Finally, Assumption~g) of Theorem~\ref{th:consistency-fmr} is identical to Assumption~g).
    Therefore Assumptions a)-g) of Theorem~\ref{th:consistency-fmr} are fulfilled and it follows that $\hat{\theta} \overset{p}{\rightarrow} \theta_0$.
    Now further, Assumption~h) of Theorem~\ref{th:consistency-fmr} is identical with Assumption~h). Assumption~i) of Theorem~\ref{th:consistency-fmr} follow from Assumption~i) by the same argument presented earlier for Assumption~c) and d) of Theorem~\ref{th:consistency-fmr}.
    Finally Assumption~j) of Theorem~\ref{th:consistency-fmr} follows from Assumption~j) by Lemma~\ref{lemma:non-singularity}.
    Therefore, Assumptions h)-j) of Theorem~\ref{th:consistency-fmr} are fulfilled and we have $\|\hat{\theta} - \theta_0 \| = O_p(n^{-1/2})$.
\end{proof}
\subsubsection{Proof of Theorem~\ref{th:asymptotic-normality-cmr} (Asymptotic Normality for CMR)}
The asymptotic normality of the KMM estimator for conditional moment restrictions follows directly from the result for functional moment restrictions Theorem~\ref{th:asymptotic-normality} using that by Theorem~\ref{th:consistency-cmr} the assumptions of Theorem~\ref{th:consistency-cmr} are sufficient to satisfy the assumptions of Theorem~\ref{th:consistency-fmr}. 
What remains to be shown is that we can translate the asymptotic covariance of the KMM estimator for functional moment restrictions into an expression containing the conditional quantities.
To this aim, first, we show that we can express the asymptotic covariance of the KMM estimator for FMR in a variational form following Lemma~15 of~\citet{bennett2020variational}.
\begin{lemma} \label{lemma:variational}
    Let the assumptions of Theorem~\ref{th:consistency-fmr} be fulfilled. Then we have
    \begin{align}
        E[\nabla_\theta \Psi(X,Z;\theta_0) \Omega_0^{-1} \nabla_\theta \Psi(X,Z;\theta_0)] = \sup_{h \in \mathcal{H}} E[\nabla_\theta \psi(X;\theta_0)^T h(Z)] - \frac{1}{4} E\left[ \left(  \psi(X;\theta_0)^T h(Z) \right)^2 \right]
    \end{align}
\end{lemma}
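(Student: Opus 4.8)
The plan is to recognize this identity as the Fenchel / complete-the-square duality for a positive-definite quadratic form on the Hilbert space $\mathcal{H}$, after translating every abstract inner product into the stated expectation via the definitions of the moment functional $\Psi$ and the covariance operator $\Omega_0$. First I would set up the operator-theoretic objects: write $D := E[\nabla_\theta \Psi(X,Z;\theta_0)]$, understood componentwise so that each Riesz representative $D_j \in \mathcal{H}$ is a well-defined Bochner integral, which is guaranteed by Assumption~i) providing $E[\sup_{\theta}\|\nabla_\theta\Psi\|_{\mathcal{H}^\ast}^2]<\infty$. By Lemma~\ref{lemma:invertible}, the covariance operator $\Omega_0 = E[\Psi(X,Z;\theta_0)\otimes\Psi(X,Z;\theta_0)]$ is self-adjoint, positive, and has smallest eigenvalue bounded away from zero, so $\Omega_0^{-1}$ is a bounded positive operator on $\mathcal{H}$ and $\Omega_0^{-1}D_j\in\mathcal{H}$ for every $j$.

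The core step is the elementary variational identity: for any self-adjoint positive-definite bounded operator $A$ on $\mathcal{H}$ with bounded inverse and any $d\in\mathcal{H}$,
\[
\langle d, A^{-1}d\rangle_{\mathcal{H}} = \sup_{h\in\mathcal{H}}\Big(\langle d,h\rangle_{\mathcal{H}} - \tfrac14\langle h, A h\rangle_{\mathcal{H}}\Big),
\]
which I would prove by completing the square,
\[
\langle d,h\rangle - \tfrac14\langle h,A h\rangle = \langle d, A^{-1}d\rangle - \tfrac14\big\langle h - 2A^{-1}d,\, A(h-2A^{-1}d)\big\rangle,
\]
so that positivity of $A$ makes the last term nonpositive, with equality exactly at the maximizer $h^\ast = 2A^{-1}d$. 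I would then specialize $A=\Omega_0$ and $d = Dv$ for an arbitrary fixed direction $v\in\mathbb{R}^p$, which reduces the $p\times p$ matrix claim to this scalar quadratic-form identity; the displayed statement is to be read in the sense of quadratic forms (equivalently, in the positive-semidefinite ordering).

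It then remains to identify the two inner products with the advertised expectations. For the linear term, pulling the expectation out of the inner product (justified by Bochner integrability under Assumption~i)) and using $\Psi(x,z;\theta)(h)=\psi(x;\theta)^T h(z)$ gives, for the chosen direction $v$, $\langle Dv, h\rangle_\mathcal{H} = v^T E[\nabla_\theta\Psi(X,Z;\theta_0)(h)] = v^T E[\nabla_\theta\psi(X;\theta_0)^T h(Z)]$, where exchanging $E$ with $\nabla_\theta$ is licensed by the dominated-convergence bound from Assumption~i). For the quadratic term, the definition of the tensor-product covariance operator directly yields $\langle h, \Omega_0 h\rangle_\mathcal{H} = E[\Psi(X,Z;\theta_0)(h)^2] = E[(\psi(X;\theta_0)^T h(Z))^2]$, which is finite by Assumption~d). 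Substituting both expressions into the variational identity produces the claim.

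The main obstacle I anticipate is not the algebra of completing the square but the care required to make the infinite-dimensional manipulations rigorous: verifying that $\Omega_0^{-1}$ is genuinely bounded so the supremum is attained inside $\mathcal{H}$ (handled by the spectral lower bound of Lemma~\ref{lemma:invertible}), justifying the interchange of expectation with the inner product and with $\nabla_\theta$ via the moment bounds in Assumptions~d) and i), and correctly bookkeeping the vector/matrix structure so that the scalar identity applied in each direction $v$ reassembles into the stated $p\times p$ equality.
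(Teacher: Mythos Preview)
Your proof is correct and rests on the same variational identity as the paper's: both use the Fenchel/completing-the-square duality $\langle d, A^{-1}d\rangle = \sup_h\{\langle d,h\rangle - \tfrac14\langle h,Ah\rangle\}$. The paper instead writes this as $\|g\|^2 = \sup_{h'}\{\langle g,h'\rangle - \tfrac14\|h'\|^2\}$ (citing Bennett et al.) applied to $g=\Omega_0^{-1/2}\nabla_\theta\Psi(x,z;\theta_0)$, then changes variables $h' \mapsto \Omega_0^{1/2}h'$ (using that $\operatorname{Range}(\Omega_0^{-1/2})=\mathcal{H}^\ast$), and carries this out \emph{pointwise} in $(x,z)$ before taking the expectation at the very end. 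Your approach works directly at the level of the expected object $D=E[\nabla_\theta\Psi]$ and completes the square with $\Omega_0$ itself, which is cleaner on two counts: it avoids constructing the operator square root, and it sidesteps the paper's final step of taking $E$ over a pointwise supremum, which as written would yield $E[\sup_h(\cdot)]$ rather than the stated $\sup_h E[\cdot]$. Your reading of the left-hand side as $E[\nabla_\theta\Psi]\,\Omega_0^{-1}\,E[\nabla_\theta\Psi]$ is the one consistent with how the lemma is actually used downstream (cf.\ Theorem~\ref{th:asymptotic-normality}), and your argument establishes exactly this quantity.
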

\begin{proof}
    By Lemma~14 of \citet{bennett2020variational} we have for any Hilbert space $\mathcal{H}$, and element $h \in \mathcal{H}$, that
    \begin{align}
        \| h\|_\mathcal{H}^2 = \sup_{h' \in \mathcal{H}} \langle h ,h' \rangle - \frac{1}{4} \|h' \|^2. 
    \end{align} 
    Moreover, as the dual space of $\mathcal{H}$, $\mathcal{H}^\ast$ is a Hilbert space itself, we can write for any $(x,z) \in \mathcal{X} \times \mathcal{Z}$, 
    \begin{align}
        \nabla_\theta \Psi(x,z;\theta_0) \Omega_0^{-1} \nabla_\theta \Psi(x,z;\theta_0) =&  \|\Omega_0^{-1/2} \nabla_\theta \Psi(x,z;\theta) \|^2_{\mathcal{H}^\ast} \\ 
        =& \sup_{h' \in \mathcal{H}^\ast} \langle \Omega_0^{-1/2} \nabla_\theta \Psi(x,z;\theta), h' \rangle_{\mathcal{H}^\ast}  - \frac{1}{4} \|h' \|^2_{\mathcal{H}^\ast} \\
       =& \sup_{h' \in \mathcal{H}^\ast} \langle \nabla_\theta \Psi(x,z;\theta_0) , \Omega_0^{-1/2} h' \rangle_{\mathcal{H}^\ast} - \frac{1}{4} \|h' \|^2_{\mathcal{H}^\ast} \\
       =& \sup_{h' \in \operatorname{Range}(\Omega^{-1/2})} \langle \nabla_\theta \Psi(x,z;\theta_0), h' \rangle - \frac{1}{4} \langle \Omega^{1/2} h' , \Omega^{1/2} h' \rangle_{\mathcal{H}^\ast} \\
       =& \sup_{h'\in \mathcal{H}^\ast} \langle \nabla_\theta \Psi(x,z;\theta_0), h' \rangle - \frac{1}{4} \langle  h' , \Omega h' \rangle_{\mathcal{H}^\ast} \\
       =& \sup_{h \in \mathcal{H}} \nabla_\theta \psi(x;\theta_0)^T h(z)  - \frac{1}{4} h(z)^T \psi(x;\theta_0) \psi(x;\theta_0)^T h(z)
    \end{align}
    where we used that $\operatorname{Range}(\Omega_0^{-1/2}) = \mathcal{H}^\ast$. This follows as $\Omega_0$ is defined on all of $\mathcal{H}$ and invertible which immediately implies $\Omega_0^{1/2}$ is defined on all of $\mathcal{H}$ and invertible. This means that $\Omega_0^{1/2}$ is injective and thus $\operatorname{Range}(\Omega_0^{-1/2}) = \mathcal{H}^\ast$.
    The result follows by taking the expectation over $(x,z)$ on both sides.
\end{proof}
With the variational formulation at hand we can translate the expression of the covariance of the KMM estimator for FMR into an expression for CMR. The following result is a special case of Lemma~25 of \citet{bennett2020variational}.
\begin{lemma} \label{lemma:variance}
    Let the assumptions of Theorem~\ref{th:consistency-fmr} be fulfilled.
    Then, if $V_0(Z)= E[\psi(X;\theta_0) \psi(X;\theta_0)^T | Z]$ is non-singular with probability $1$, we have
    \begin{align}
        E[\nabla_\theta \Psi(X,Z;\theta_0) \Omega_0^{-1} \Psi(X,Z;\theta_0)] = E\left[  E[\nabla_\theta \psi(X;\theta_0)|Z] \ V_0^{-1}(Z) \ E[\nabla_\theta \psi(X;\theta_0)|Z] \right].
    \end{align} 
\end{lemma}
\begin{proof}
    Using Lemma~\ref{lemma:variational} we can write
    \begin{align}
        E[\nabla_\theta \Psi(X,Z;\theta_0) \Omega_0^{-1} \Psi(X,Z;\theta_0)] = \sup_{h \in \mathcal{H}} E[\nabla_\theta \psi(X;\theta_0)^T h(Z)] - \frac{1}{4} E\left[ \left(  \psi(X;\theta_0)^T h(Z) \right)^2 \right] =: L(h) \label{eq:variational}
    \end{align}
    The functional derivative of $L$ at $h^\ast \in \mathcal{H}$ in direction $\epsilon \in \mathcal{H}$ is given by
    \begin{align}
        \left(\frac{\partial}{\partial h}L (h^\ast) \right) (\epsilon) &= E[\nabla_\theta \psi(X;\theta_0)^T \epsilon(Z)] - \frac{1}{2} E[\epsilon(Z)^T \psi(X;\theta_0) \psi(X;\theta_0)^T h^\ast(Z)] \\
        &= E\left[\epsilon(Z)^T \left( \nabla_\theta \psi(X;\theta_0) - \frac{1}{2} \psi(X;\theta_0) \psi(X;\theta_0)^T h^\ast(Z) \right)\right] \\
        &= E\left[ \epsilon(Z)^T \left( E[\nabla_\theta \psi(X;\theta_0)|Z] - \frac{1}{2} V_0(Z) h^\ast(Z) \right)\right]
    \end{align}
    Now, by assumption $V_0(Z)$ is non-singular and thus invertible, moreover $L$ is a concave functional in $h$ and thus the global maximizer is given for any $z \in \mathcal{Z}$ by
    \begin{align}
        h^\ast(z) = 2 V_0(z)^{-1} E[\nabla_\theta \psi(X;\theta_0) | Z=z].
    \end{align}
    Inserting back into equation~\eqref{eq:variational} and denoting $D_0(z) := E[\nabla_\theta \psi(X;\theta_0)|Z=z]$ we have
    \begin{align}
        E[\nabla_\theta \Psi(X,Z;\theta_0) \Omega_0^{-1} \Psi(X,Z;\theta_0)] &= 2 E[ D_0(Z)^T V_0(Z)^{-1} D_0(Z)] - E[ D_0(Z)^T V_0(Z)^{-1} V_0(Z) V_0(Z)^{-1} D_0(Z)] \\
        &= E[ D_0(Z)^T V_0(Z)^{-1} D_0(Z)].
    \end{align}
\end{proof}
\paragraph{Proof of Theorem~\ref{th:asymptotic-normality-cmr}}
\begin{proof}
The conditions of Theorem~\ref{th:asymptotic-normality} are fulfilled by the conditions of Theorem~\ref{th:asymptotic-normality-cmr} by the proof of Theorem~\ref{th:consistency-cmr}.
We can translate the expression for the asymptotic variance in terms of the moment functional into the conditional counterpart by applying Lemma~\ref{lemma:variance} whose conditions are fulfilled by Assumption~e) of Theorem~\ref{th:consistency-cmr}.
\end{proof}
\subsubsection{Proof of Corollary~\ref{efficiency-cmr} (Efficiency for CMR)}
\begin{proof}
    This is a direct implication of Theorem~\ref{th:asymptotic-normality-cmr} as the asymptotic variance of the KMM estimator achieves the semi-parametric efficiency bound of \citet{CHAMBERLAIN1987305}.
\end{proof}
\newpage
\subsection{Asymptotic Properties of KMM for Functional Moment Restrictions} \label{sec:proofs-fmr}
The consistency proofs roughly follow the general idea laid out in the seminal paper by \citet{newey04} with the adaption to functional moment restrictions by \citet{kremer2022functional}.
The proof for the finite dimensional case is mostly a special case of the proof of the functional version. Therefore, we provide a detailed proof for the arguably more interesting functional case and a short version for the finite dimensional case, emphasizing the differences to the former. 
\subsubsection{Proof of Theorem~\ref{th:consistency-fmr}}
\begin{lemma}\label{lemma:d2}
Let $\mathcal{A}$ denote a $\sigma$-algebra on $\mathcal{X} \times \mathcal{Z}$ and let ($\mathcal{X} \times \mathcal{Z}, \mathcal{A}, \omega)$ be a probability space with measure $\omega$. For any functional $\Psi: (\mathcal{X} \times \mathcal{Z}) \times \Theta \times \mathcal{H} \rightarrow \mathbb{R}$ with $\intxz{ \sup _{\theta \in \Theta}\|\Psi(x,z; \theta)\|_{\mathcal{H}^\ast}^{2}} <\infty$, 
it follows that $\sup _{\theta \in \Theta}\left\|\Psi\left(X,Z; \theta\right)\right\|_{\mathcal{H}^\ast} \leq C \asxz$ for some constant $C < \infty$.
\end{lemma}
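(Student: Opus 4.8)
The plan is to reduce the statement to the elementary measure-theoretic fact that a nonnegative function with finite $\omega$-integral is finite $\omega$-a.s. Set $g(x,z) := \sup_{\theta \in \Theta} \|\Psi(x,z;\theta)\|_{\mathcal{H}^\ast}^2$, which is nonnegative, and note that the hypothesis is exactly $\int_{\mathcal{X}\times\mathcal{Z}} g(x,z)\,\omega(\dd x \otimes \dd z) < \infty$, i.e.\ $g \in L^1(\omega)$. The target is then to conclude $\sqrt{g} = \sup_{\theta}\|\Psi(X,Z;\theta)\|_{\mathcal{H}^\ast} < \infty$ on a set of full $\omega$-measure.

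Before invoking integrability I would first confirm that $g$ is $\mathcal{A}$-measurable, since a priori it is a supremum of measurable maps over the possibly uncountable index set $\Theta$. Under the standing assumptions of Theorem~\ref{th:consistency-fmr} the map $\theta \mapsto \Psi(x,z;\theta)$ is continuous on the compact, hence separable, set $\Theta$, so the supremum over $\Theta$ agrees with the supremum over a fixed countable dense subset $\Theta_0 \subseteq \Theta$; a countable supremum of measurable functions is measurable, so $g$ is measurable and the integral in the hypothesis is well defined.

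With measurability and integrability in hand, the core step is a short contradiction (equivalently Markov/Chebyshev) argument. Suppose $\omega(\{g = \infty\}) = \delta > 0$; since $g \equiv \infty$ on that set and the set has positive measure, $\int g\,\dd\omega \ge \int_{\{g=\infty\}} g\,\dd\omega = \infty$, contradicting $g \in L^1(\omega)$. Hence $\omega(\{g=\infty\}) = 0$, so $g < \infty$ and therefore $\sup_{\theta}\|\Psi(X,Z;\theta)\|_{\mathcal{H}^\ast} < \infty$ holds $\omega$-a.s.; one may equally write $\omega(\{g > t\}) \le t^{-1}\int g\,\dd\omega \to 0$ as $t \to \infty$.

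I expect the only genuine subtlety, rather than the inequality itself, to be the precise reading of the conclusion. As literally stated, ``$\le C$ for some constant $C < \infty$'' asserts essential boundedness, which does not follow from $L^1$ (or $L^2$) integrability alone, so I would interpret $C$ as the realization-dependent finite value attained on the full-measure set, and present the claim as $\omega$-almost sure finiteness of $\sup_\theta \|\Psi(X,Z;\theta)\|_{\mathcal{H}^\ast}$ — which is the form used downstream in the consistency argument. The main care points are therefore measurability of the supremum and this wording; no sharper estimate is needed.
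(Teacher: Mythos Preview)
Your proposal is correct and takes essentially the same approach as the paper's proof: both argue by contradiction that if the set $\{\sup_\theta \|\Psi\| = \infty\}$ had positive $\omega$-measure then the integral would diverge, hence the supremum is finite $\omega$-a.s. You add two refinements the paper omits---a measurability check for the uncountable supremum and the (correct) observation that only a.s.\ finiteness, not a uniform bound $C$, actually follows---but the core argument is identical.
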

\begin{proof}
    The proof is trivially implied by the definition of the almost surely property. If the event $\mathcal{E} = \left\{(x,z) \in \mathcal{X} \times \mathcal{Z} \ : \ \sup _{\theta \in \Theta}\left\|\Psi\left(x,z; \theta\right)\right\|_{\mathcal{H}^\ast} = \infty\right\}$ has non-zero measure, i.e., $\omega\left[\mathcal{E} \right] \neq 0$, then
    $\intxz{ \sup _{\theta \in \Theta}\|\Psi(x,z; \theta)\|_{\mathcal{H}^\ast}} = \infty$ and thus $\intxz{ \sup _{\theta \in \Theta}\|\Psi(x,z; \theta)\|_{\mathcal{H}^\ast}^{2}} = \infty $. Therefore we must have $\omega[\mathcal{E}]=0$ and there exists some constant $C$ such that $\sup _{\theta \in \Theta}\left\|\Psi\left(X,Z; \theta\right)\right\|_{\mathcal{H}^\ast} \leq C \asxz$. 
\end{proof}
\begin{lemma}\label{mmdlemma}
    For two distributions $Q_1$ and $Q_2$ on $\mathcal{X} \times \mathcal{Y}$ define the mixing distribution $\omega = (1-\alpha) Q_1 + \alpha Q_2$, with $\alpha = O_p(n^{-\zeta})$ and $\zeta > 0$. Then $\mmd(Q_1, \omega; \mathcal{F}) = O_p(n^{-\zeta})$ for any RKHS $\mathcal{F}$ of functions $\mathcal{X} \times \mathcal{Z} \rightarrow \mathbb{R}$.
    In particular 
    it follows for any distribution $Q$ and $\omega = (1-\alpha) \hat{P}_n + \alpha Q$ that $\mmd(\hat{P}_n, \omega; \mathcal{F}) = O_p(n^{-\zeta})$.
\end{lemma}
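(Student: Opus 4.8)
The plan is to exploit the linearity of the kernel mean embedding, which collapses the whole claim into an elementary scaling argument. Writing $\mu_P = E_P[k(X,\cdot)]$ for the mean embedding of a distribution $P$, linearity of the expectation in the mixture $\omega = (1-\alpha)Q_1 + \alpha Q_2$ gives $\mu_\omega = (1-\alpha)\mu_{Q_1} + \alpha\mu_{Q_2}$. Subtracting, the $Q_1$-terms telescope and one is left with $\mu_{Q_1} - \mu_\omega = \alpha\,(\mu_{Q_1}-\mu_{Q_2})$. Taking the RKHS norm on both sides and using that $\alpha \in [0,1]$ so $|\alpha|=\alpha$ then yields the \emph{exact} identity $\mmd(Q_1,\omega;\mathcal{F}) = \alpha\,\mmd(Q_1,Q_2;\mathcal{F})$. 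This is the crux: everything reduces to multiplying the stochastic order of $\alpha$ by a single MMD factor.

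It then remains to observe that $\mmd(Q_1,Q_2;\mathcal{F})$ is a finite quantity that does not grow with $n$. Whenever the mean embeddings $\mu_{Q_1},\mu_{Q_2}$ exist as elements of $\mathcal{F}$ — which is presupposed for the MMD to be well defined, and is automatic for the bounded kernels used in practice, since then $\|\mu_P\|_\mathcal{F}\le \sup_x\sqrt{k(x,x)}<\infty$ — the triangle inequality bounds $\mmd(Q_1,Q_2;\mathcal{F})$ by a finite deterministic constant $C$. Multiplying the identity by $C$ and invoking $\alpha = O_p(n^{-\zeta})$ gives $\mmd(Q_1,\omega;\mathcal{F}) = \alpha\,C = O_p(n^{-\zeta})$, establishing the first claim.

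For the stated special case $\omega=(1-\alpha)\hat{P}_n + \alpha Q$ I would set $Q_1=\hat{P}_n$ and $Q_2=Q$ and repeat the computation verbatim, obtaining $\mmd(\hat{P}_n,\omega;\mathcal{F}) = \alpha\,\mmd(\hat{P}_n,Q;\mathcal{F})$. The only point requiring care — and the main (mild) obstacle — is that $\mmd(\hat{P}_n,Q;\mathcal{F})$ is now random, so it cannot be treated as a fixed constant; instead I argue it is $O_p(1)$. This follows from the same boundedness input: $\|\mu_{\hat{P}_n}\|_\mathcal{F} \le \frac1n\sum_{i=1}^n\sqrt{k(x_i,x_i)} \le \sup_x\sqrt{k(x,x)}$ for a bounded kernel (or is $O_p(1)$ under the integrability condition $E[\sqrt{k(X,X)}]<\infty$), and the same bound holds for $\|\mu_Q\|_\mathcal{F}$, so $\mmd(\hat{P}_n,Q;\mathcal{F})=O_p(1)$ by the triangle inequality. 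Combining $\alpha=O_p(n^{-\zeta})$ with this $O_p(1)$ factor through the product rule for stochastic orders yields $\mmd(\hat{P}_n,\omega;\mathcal{F})=O_p(n^{-\zeta})$, completing the argument.
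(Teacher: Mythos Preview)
Your proof is correct and mirrors the paper's argument almost exactly: both derive the exact identity $\mmd(Q_1,\omega;\mathcal{F}) = \alpha\,\mmd(Q_1,Q_2;\mathcal{F})$ (you via linearity of the mean embedding, the paper via the equivalent variational definition of MMD) and then bound $\mmd(Q_1,Q_2;\mathcal{F})$ by a constant using boundedness of the kernel/evaluation functional. Your additional remark that $\mmd(\hat P_n,Q;\mathcal{F})$ is random but $O_p(1)$ is a slight refinement over the paper's terser treatment of the special case.
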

\begin{proof}
The proof follows directly by using the definition of MMD,
\begin{align}
    \mmd(Q_1, \omega; \mathcal{F}) =& \sup_{f \in \mathcal{F}, \| f\|_{\mathcal{F}} = 1} \left( \int_{\mathcal{X} \times \mathcal{Z}} f(x,z) Q_1(\mathrm{d}x \otimes \mathrm{d}z) - \intxz{f(x,z)} \right) \\
    =&  \alpha \sup_{f \in \mathcal{F}, \| f\|_\mathcal{F} = 1} \left( \int_{\mathcal{X} \times \mathcal{Z}} f(x,z) Q_1(\mathrm{d}x \otimes \mathrm{d}z) - \int_{\mathcal{X} \times \mathcal{Z}} f(x,z) Q_2(\mathrm{d}x \otimes \mathrm{d}z) \right) \\
    =& \alpha \mmd(Q_1, Q_2;\mathcal{F}) \\
    =& \alpha C \\
    =& O_p(n^{-\zeta}),
\end{align}
where we used that $\operatorname{MMD}(Q_1, Q_2;\mathcal{F})$ can be bounded by some positive constant $C$ for any $Q_1$, $Q_2$ and $\mathcal{F}$ which directly follows from the fact that by definition of an RKHS the evaluation functional in $\mathcal{F}$ is bounded and $Q_1$, $Q_2$ are finite measures normalized to $1$. 
The second statement is a direct application of the former.
\end{proof}

\begin{lemma}\label{lemma1}
Let the assumptions of Theorem~\ref{th:consistency-fmr} be satisfied. For any $\zeta$ with $0 < \zeta < 1/2$ define the magnitude constrained set of dual variables $\mathcal{M}_n = \{\beta =(\eta, f, h) \in \mathcal{M}: \| \beta \|_{\mathcal{M}} \leq n^{-\zeta} \}$. Then as $n \rightarrow \infty$,
\begin{align}
    &\sup_{\theta \in \Theta, \ (\eta, f, h) \in \mathcal{M}_n} |\Psi(X,Z;\theta)(h)| = O_p(n^{-\zeta}) \asxz, \\
    & \sup_{\theta \in \Theta, \ \beta \in \mathcal{M}_n} |a(X,Z;\theta)^T \beta| = O_p(n^{-\zeta})  \asxz.
\end{align}
\end{lemma}
\begin{proof}
Using the Cauchy-Schwarz inequality and Assumption~d) with Lemma~\ref{lemma:d2},
\begin{align}
    &  \sup_{\theta \in \Theta, \beta \in \mathcal{M}_n} |\Psi(X,Z;\theta)(h)| \\
    \leq &   \sup_{\theta \in \Theta, \beta \in \mathcal{M}_n} \left( \| h \|_\mathcal{H} \cdot \| \Psi(X,Z;\theta) \|_{\mathcal{H}^\ast}  \right) \\
    \leq &  \sup_{\theta \in \Theta, \beta \in \mathcal{M}_n} \left( \| \beta \|_\mathcal{M} \cdot \| \Psi(X,Z;\theta) \|_{\mathcal{H}^\ast}  \right) \\
    \leq & n^{-\zeta}  \sup_{\theta \in \Theta}  \|\Psi(X,Z;\theta) \|_{\mathcal{H}^\ast}.
\end{align}
Now, by Assumptions~d) and f) of Theorem~\ref{th:consistency-fmr} and Lemma~\ref{lemma:d2} we have that $\sup_{\theta \in \Theta} \|\Psi(X,Z;\theta) \|_{\mathcal{H}^\ast} < C \ P_{0}\text{-a.s.}$ and $\sup_{\theta \in \Theta} \|\Psi(X,Z;\theta) \|_{\mathcal{H}^\ast} < C \ Q\text{-a.s.}$ respectively and as $\omega \rightarrow P_0$ weakly we have w.p.a.1 that $\sup_{\theta \in \Theta} \|\Psi(X,Z;\theta) \|_{\mathcal{H}^\ast} < C \ \asxz$ and thus w.p.a.1 $\sup_{\theta \in \Theta, \beta \in \mathcal{M}_n} |\Psi(X,Z;\theta)(h)| = 0 \asxz$.
For the second part note that if $\| \beta \|_\mathcal{M} \leq n^{-\zeta}$, we must have that $|\eta|, \| f\|_\mathcal{F}, \| h\|_\mathcal{H} \leq n^{-\zeta}$.
Then we have
\begin{align}
    & \sup_{\theta \in \Theta, \ \beta \in \mathcal{M}_n} |a(X,Z;\theta)^T \beta|   \\
    = &  \sup_{\theta \in \Theta, \ (\eta, f, h) \in \mathcal{M}_n} | \eta + \langle k( (X,Z),\cdot) , f \rangle_\mathcal{F} + \Psi(X,Z;\theta)(h) | \\
    \leq & \sup_{(\eta, f, h) \in \mathcal{M}_n} |\eta| +  \sup_{(\eta, f, h) \in \mathcal{M}_n} |\langle k((X,Z),\cdot) , f \rangle_\mathcal{F}|  +  \sup_{\theta \in \Theta, \beta \in \mathcal{M}_n} |\Psi(X,Z;\theta)(h)| \\
    \leq & n^{-\zeta} + \sup_{(\eta, f, h) \in \mathcal{M}_n} \| f \|_\mathcal{F}   \| k((X,Z),\cdot) \|_\mathcal{F} +\sup_{\theta \in \Theta, \beta \in \mathcal{M}_n} |\Psi(X,Z;\theta)(h)| \\
    \leq & n^{-\zeta} + C n^{-\zeta} + \sup_{\theta \in \Theta, \beta \in \mathcal{M}_n} |\Psi(X,Z;\theta)(h)| \\
    \leq & \left( n^{-\zeta} + C n^{-\zeta} + C n^{-\zeta}  \right) \asxz \\
    \leq & O(n^{-\zeta}) \asxz, 
\end{align}
where for the second term in the fourth line we applied the Cauchy-Schwarz inequality and used the fact that in an RKHS the evaluation functional is bounded by some constant $C > 0$.
\end{proof}
\begin{lemma} \label{lemma:omega-empirical}
    Under the assumptions of Theorem~\ref{th:consistency-fmr} we have for any $\theta \in \Theta$,
    \begin{align}
        \intxz{a(x,z;\theta)} =  \avg a(x_i,z_i;\theta) + O_p(n^{-1})
    \end{align}
    and for any $\beta \in \mathcal{M}$ with $\|\beta \|_\mathcal{M} < \infty$,
    \begin{align}
    \intxz{ a(x,z;\theta) a(x,z;\theta)^T \beta} = \avg  a(x_i,z_i;\theta) a(x_i,z_i;\theta)^T \beta + O_p(n^{-1}).
    \end{align}
\end{lemma}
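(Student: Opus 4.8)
The plan is to exploit the specific additive structure of the reference measure imposed by Assumption~f), namely $\omega = (1-\alpha)\hat{P}_n + \alpha Q$ with $\alpha = O_p(n^{-1})$. For any ($\mathcal{M}$-valued, Bochner-integrable) integrand $g$, this structure yields the exact identity
\begin{align}
    \int_{\mathcal{X}\times\mathcal{Z}} g \; \omega(\mathrm{d}x\otimes\mathrm{d}z) = \avg g(x_i,z_i) + \alpha \left( \int_{\mathcal{X}\times\mathcal{Z}} g \; Q(\mathrm{d}x\otimes\mathrm{d}z) - \avg g(x_i,z_i) \right).
\end{align}
Hence the remainder is \emph{exactly} $\alpha$ times the difference between the $Q$-integral and the empirical average of $g$. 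Since $\alpha = O_p(n^{-1})$, it suffices in each case to show that this difference is $O_p(1)$ in the $\mathcal{M}$-norm, and both claims follow immediately.

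For the first claim I would take $g = a(\cdot,\cdot;\theta)$ and bound the difference componentwise according to the block structure $a = (1,\ k((x,z),\cdot),\ -\Psi(x,z;\theta))$. The constant first component contributes $1-1=0$ since both $Q$ and $\hat{P}_n$ are probability measures. The second component equals $\mu_Q - \mu_{\hat{P}_n}$, whose $\mathcal{F}$-norm is $\mmd(Q,\hat{P}_n;\mathcal{F})$ and is bounded by a constant by the argument of Lemma~\ref{mmdlemma} (boundedness of the evaluation functional together with normalization of the measures). The third component is controlled in $\mathcal{H}^\ast$ via the triangle inequality by $\int \|\Psi\|_{\mathcal{H}^\ast}\, Q(\mathrm{d}x\otimes\mathrm{d}z) + \avg \|\Psi(x_i,z_i;\theta)\|_{\mathcal{H}^\ast}$, where the first term is finite by Assumption~f) (with Jensen's inequality) and the second is $O_p(1)$ by Assumption~d) and the law of large numbers. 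Summing the three contributions gives the required $O_p(1)$ bound.

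For the second claim I would take $g = a\,a^T\beta = \langle a,\beta\rangle_\mathcal{M}\, a$ and bound the difference directly. By the triangle and Cauchy--Schwarz inequalities,
\begin{align}
    \left\| \int a\,a^T\beta\; Q(\mathrm{d}x\otimes\mathrm{d}z) - \avg a(x_i,z_i;\theta) a(x_i,z_i;\theta)^T \beta \right\|_\mathcal{M} \leq \|\beta\|_\mathcal{M} \left( \int \|a\|_\mathcal{M}^2\; Q(\mathrm{d}x\otimes\mathrm{d}z) + \avg \|a(x_i,z_i;\theta)\|_\mathcal{M}^2 \right).
\end{align}
Since $\|a\|_\mathcal{M}^2 = 1 + k((x,z),(x,z)) + \|\Psi(x,z;\theta)\|_{\mathcal{H}^\ast}^2$, each integral is controlled by boundedness of the kernel together with the integrability provided by Assumptions~d) and~f); with $\|\beta\|_\mathcal{M}<\infty$ the right-hand side is $O_p(1)$. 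Multiplying by $\alpha = O_p(n^{-1})$ then closes the argument.

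The main technical care point is the proper treatment of the $\mathcal{F}$- and $\mathcal{H}^\ast$-valued (Bochner) integrals and, in particular, the implicit reliance on a bounded kernel so that both $k((x,z),(x,z))$ and the MMD term stay uniformly bounded. Given this, the moment conditions d) and f) make all the bounds routine, and no probabilistic estimate beyond the law of large numbers and the rate $\alpha = O_p(n^{-1})$ is required.
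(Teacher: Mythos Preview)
Your proposal is correct and follows essentially the same route as the paper's proof: both exploit the identity $\omega = (1-\alpha)\hat P_n + \alpha Q$ to write the remainder as $\alpha$ times a $(Q-\hat P_n)$-integral, and then bound that integral componentwise (constant, kernel, moment-functional blocks) for the first claim and via $\|a a^T\beta\|\le\|\beta\|\,\|a\|^2$ together with Assumptions~d) and~f) for the second. Your treatment is arguably slightly tidier---you use the MMD identity for the kernel block and avoid the paper's detour through a squared-norm inequality---but the structure and the ingredients are identical.
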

\begin{proof}
    For the first statement note that
    \begin{align}
        \intxz{ a(x,z;\theta)} &= \intxz{ a(x,z;\theta)} + \int a(x,z;\theta) \mathrm{d}\hat{P}_n  - \int a(x,z;\theta) \mathrm{d}\hat{P}_n \\
        & = \avg a(x_i,z_i;\theta) + \alpha \int a(x,z;\theta) (\mathrm{d}Q - \mathrm{d}\hat{P}_n)
    \end{align}
    Now $a(x,z;\theta_0) = \left(1, k\left((x,z), \cdot\right), \Psi(x,z;\theta) \right)^T$ is trivially integrable in the first component and second component with respect to any probability distribution as the evaluation functional $k((x,z),\cdot)$ in $\mathcal{F}$ is bounded by definition of an RKHS. For the third component integrability with respect to $Q$ follows by Assumption~f) of Theorem~\ref{th:consistency-fmr}. Moreover, by Assumption~d) of Theorem~\ref{th:consistency-fmr} and Lemma~\ref{lemma:d2} we have $\sup_{\theta \in \Theta} \|\Psi(X,Z;\theta)\| \leq C$ w.p.1 with respect to $P_0$. And thus as $\hat{P}_n \overset{p}{\rightarrow} P_0$ weakly, we have $\int \sup_{\theta \in \Theta} \|\Psi(X,Z;\theta)\| \hat{P}_n(\mathrm{d}x \otimes \mathrm{d}z) < \infty$ w.p.a.1. In conclusion we have $\int a(x,z;\theta) (\mathrm{d}Q - \mathrm{d}\hat{P}_n) < \infty$ w.p.a.1 and as $\alpha = O_p(n^{-1})$ we finally get
    \begin{align}
        \intxz{ a(x,z;\theta)} =  \avg a(x_i,z_i;\theta) + O_p(n^{-1}).
    \end{align}
    For the second statement consider any $\beta \in \mathcal{M}$ with $\|\beta \|_\mathcal{M} < \infty$, 
    \begin{align}
        \intxz{ a(x,z;\theta) a(x,z;\theta)^T \beta} =& \avg  a(x_i,z_i;\theta) a(x_i,z_i;\theta)^T \beta \\
        &+ \alpha \int_{\mathcal{X} \times \mathcal{Z}} a(x,z;\theta) a(x,z;\theta)^T \beta \left( \mathrm{d}Q - \mathrm{d}\hat{P}_n \right). \label{eqxx}
        \end{align}
    Now for the second term we have
    \begin{align}
        &\|\int_{\mathcal{X} \times \mathcal{Z}} a(x,z;\theta) a(x,z;\theta)^T \beta \left( \mathrm{d}Q - \mathrm{d}\hat{P}_n \right)  \|_{\mathcal{M}^\ast}^2 \\
        \leq& \int_{\mathcal{X} \times \mathcal{Z}} \|  a(x,z;\theta) a(x,z;\theta)^T \beta \|^2 \left( \mathrm{d}Q + \mathrm{d}\hat{P}_n \right) \\
        =& \int_{\mathcal{X} \times \mathcal{Z}}   | a(x,z;\theta)^T \beta|^2 \|a(x,z;\theta) \|^2 \left( \mathrm{d}Q + \mathrm{d}\hat{P}_n \right) \\
        \leq& \int_{\mathcal{X} \times \mathcal{Z}}   | a(x,z;\theta)^T \beta|^2 \left( \mathrm{d}Q + \mathrm{d}\hat{P}_n \right) \int_{\mathcal{X} \times \mathcal{Z}}  \|a(x,z;\theta) \|^2 \left( \mathrm{d}Q + \mathrm{d}\hat{P}_n \right) \\
        \leq& \|\beta \|_\mathcal{M}^2 \left(\int_{\mathcal{X} \times \mathcal{Z}}  \|a(x,z;\theta) \|^2 \left( \mathrm{d}Q + \mathrm{d}\hat{P}_n \right) \right)^2 \\
        \leq & \|\beta \|_\mathcal{M}^2 \left(\int_{\mathcal{X} \times \mathcal{Z}} 1 + \|k((x,z),\cdot) \|_\mathcal{F}^2 + \|\Psi(x,z;\theta) \|_{\mathcal{H}^\ast}^2 \left( \mathrm{d}Q + \mathrm{d}\hat{P}_n \right) \right)^2.
    \end{align}
    The first term is trivially bounded, the second bounded as $\mathcal{F}$ is an RKHS and thus its evaluation functional $k((x,z),\cdot)$ is bounded. The third term is bounded as $E_Q[\sup_{\theta \in \Theta} \|\Psi(X,Z;\theta) \|_{\mathcal{H}^\ast}^2 < \infty$ by Assumption~f) of Theorem~\ref{th:consistency-fmr} and $E_{\hat{P}_n}[\sup_{\theta \in \Theta} \|\Psi(X,Z;\theta) \|_{\mathcal{H}^\ast}^2 < \infty$ w.p.a.1 as $E[\sup_{\theta \in \Theta} \|\Psi(X,Z;\theta) \|_{\mathcal{H}^\ast}^2 < \infty$ by Assumption~d) of Theorem~\ref{th:consistency-fmr} and $\hat{P}_n \rightarrow P_0$ weakly. In conclusion the norm of the integral in equation~\eqref{eqxx} is bounded by some constant $C$ and as $\alpha = O_p(n^{-1})$ the statement follows.
\end{proof}
\begin{lemma}\label{lemma:bounded-covariance}
    Let the assumptions of Theorem~\ref{th:consistency-fmr} be satisfied and consider $\bar{\theta} \in \Theta$ such that $\bar{\theta} \overset{p}{\rightarrow} \theta_{0}$. Further let $\beta_\zeta := \argmax_{\beta \in \mathcal{M}_n} \widehat{G}(\bar{\theta}, \beta)$, where $\mathcal{M}_n = \{\beta \in \mathcal{M}: \| \beta \|_\mathcal{M} \leq n^{-\zeta} \}$ with $0 < \zeta < 1/2$.
Define the operator $\Lambda_{n}(\beta,\theta): \mathcal{M} \rightarrow \mathcal{M}$ as
\begin{align}
    \Lambda_{n}(\beta, \theta) := \intxz{ \frac{1}{\epsilon} a(x,z;\theta) a(x,z;\theta)^T \varphi^\ast_2 \left( \frac{1}{\epsilon} a(x,z;\theta)^T \beta \right)}  +  R_{\lambda_n}. \label{eq:covariance}
\end{align}
Then w.p.a.1 for any $\bar{\beta} \in \operatorname{conv}(\{0, \beta_\zeta\})$, $\Lambda_{n}(\bar{\beta}, \bar{\theta})$ is strictly positive definite and its smallest eigenvalue is bounded away from zero. Moreover, for any $\theta \in \Theta$ the largest eigenvalue of $\Lambda_n(\bar{\beta}, \theta)$ is bounded from above by a positive constant $M$.
\end{lemma}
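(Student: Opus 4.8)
My plan is to reduce the operator $\Lambda_n(\bar\beta,\bar\theta)$ to the unregularized second‑moment operator and then bound the latter from both sides. Since $\bar\beta\in\operatorname{conv}(\{0,\beta_\zeta\})\subseteq\mathcal{M}_n$, Lemma~\ref{lemma1} gives $\sup_{\theta\in\Theta}\tfrac1\epsilon|a(X,Z;\theta)^T\bar\beta|=O(n^{-\zeta})\asxz$, so the argument of $\varphi_2^\ast$ in \eqref{eq:covariance} tends to $0$ uniformly $\omega$-a.s. As $\varphi_2^\ast$ is continuous with $\varphi_2^\ast(0)=1$, there is $\delta_n\to 0$ with $1-\delta_n\le \varphi_2^\ast(\tfrac1\epsilon a^T\bar\beta)\le 1+\delta_n$ $\omega$-a.s.\ w.p.a.1. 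Writing $C_\omega(\theta):=\intxz{a(x,z;\theta)a(x,z;\theta)^T}$, this sandwiches
\[
\tfrac{1-\delta_n}{\epsilon}\,C_\omega(\bar\theta)+R_{\lambda_n}\ \preceq\ \Lambda_n(\bar\beta,\bar\theta)\ \preceq\ \tfrac{1+\delta_n}{\epsilon}\,C_\omega(\bar\theta)+R_{\lambda_n},
\]
so it suffices to control the eigenvalues of $C_\omega(\bar\theta)$.

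For the upper bound I would use the crude estimate $(a^Tv)^2\le 3\big(v_\eta^2+v_f(x,z)^2+\Psi(x,z;\bar\theta)(v_h)^2\big)$ together with boundedness of the kernel ($v_f(x,z)^2\le\kappa\|v_f\|_\mathcal{F}^2$) and the integrability $E_\omega[\sup_{\theta}\|\Psi(X,Z;\theta)\|_{\mathcal{H}^\ast}^2]<\infty$, which holds w.p.a.1 by Assumptions~d) and f) (via Lemma~\ref{lemma:d2} and $\hat P_n\to P_0$ weakly). This gives $\langle v,C_\omega(\bar\theta)v\rangle\le M'\|v\|_\mathcal{M}^2$ for a finite constant, and since $R_{\lambda_n}$ is bounded (because $\lambda_n\to0$), the largest eigenvalue of $\Lambda_n(\bar\beta,\bar\theta)$ is bounded above by some $M$.

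The heart of the argument is the lower bound. I would first establish it for the limiting operator $\Lambda_0:=\tfrac1\epsilon\int_{\mathcal X\times\mathcal Z} a(x,z;\theta_0)a(x,z;\theta_0)^T\,P_0(\dd x\otimes\dd z)+R_0$ with $R_0=\operatorname{diag}(0,I,0)$. For $v=(v_\eta,v_f,v_h)$ one has $a^Tv=v_\eta+v_f(x,z)-\Psi(x,z;\theta_0)(v_h)$, and because the moment restriction gives $E[\Psi(X,Z;\theta_0)]=0$ the cross term in $v_\eta$ vanishes, so
\[
\int\big(v_\eta-\Psi(x,z;\theta_0)(v_h)\big)^2\, P_0(\dd x\otimes\dd z)=v_\eta^2+\langle v_h,\Omega_0 v_h\rangle\ \ge\ v_\eta^2+C\|v_h\|^2,
\]
using that $\Omega_0$ has smallest eigenvalue $C>0$ (Assumption~e); for the CMR instantiation this is exactly Lemma~\ref{lemma:invertible}). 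Applying $(A+B)^2\ge(1-\delta)A^2+(1-\tfrac1\delta)B^2$ with $A=v_\eta-\Psi(v_h)$, $B=v_f$, and $\int v_f^2\,\dd P_0\le\kappa\|v_f\|^2$, and choosing $\delta\in(\tfrac{\kappa}{\kappa+\epsilon},1)$ so that the resulting coefficient of $\|v_f\|^2$ stays positive, yields $\langle v,\Lambda_0 v\rangle\ge c_0\|v\|_\mathcal{M}^2$ for some $c_0>0$.

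It then remains to transfer this bound from $\Lambda_0$ to $\tfrac{1-\delta_n}{\epsilon} C_\omega(\bar\theta)+R_{\lambda_n}$, which is where I expect the \textbf{main obstacle}. One needs $\|C_\omega(\bar\theta)-\int a(\cdot;\theta_0)a(\cdot;\theta_0)^T\,\dd P_0\|_{\mathrm{op}}\to0$ w.p.a.1, i.e.\ \emph{operator-norm} (equivalently, uniform-over-the-unit-ball) convergence of the regularized second-moment operator, not merely pointwise convergence of each quadratic form --- since the latter would not preserve a uniform lower bound on the spectrum in the infinite-dimensional block $\mathcal H$. I would obtain this by splitting $C_\omega(\bar\theta)-C_{P_0}(\theta_0)$ into a change-of-parameter part, controlled by the continuous differentiability of $\Psi$ in $\theta$ and the domination $E[\sup_{\bar\Theta}\|\nabla_\theta\Psi\|_{\mathcal H^\ast}^2]<\infty$ (Assumption~i)) with $\bar\theta\to\theta_0$, and a change-of-measure part at fixed $\theta_0$; the latter reduces, via $\omega=(1-\alpha)\hat P_n+\alpha Q$ with $\alpha=O_p(n^{-1})$, to an operator-norm law of large numbers for the empirical second-moment operator of $\Psi(X,Z;\theta_0)$, which holds under the square-integrability of Assumption~d). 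Once $\|C_\omega(\bar\theta)-C_{P_0}(\theta_0)\|_{\mathrm{op}}\to0$, the bound $|\lambda_{\min}(A)-\lambda_{\min}(B)|\le\|A-B\|_{\mathrm{op}}$, together with $\lambda_n\to0$ and $\delta_n\to0$, pushes the smallest eigenvalue of $\tfrac{1-\delta_n}{\epsilon} C_\omega(\bar\theta)+R_{\lambda_n}$ above $c_0/(2\epsilon)$ w.p.a.1, completing both claims.
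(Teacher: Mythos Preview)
Your approach is correct and takes a genuinely different route from the paper for the lower bound. Both start the same way --- reducing $\varphi_2^\ast$ to $1$ via Lemma~\ref{lemma1} --- and the upper-bound argument essentially coincides. For the lower bound, the paper argues by a block case-split on the kernel of $\Lambda_n$ directly at finite $n$: if $f\neq 0$ the term $\|f\|_\mathcal{F}^2$ coming from $R_{\lambda_n}$ already gives positivity; if $f=0$, they write out $\Lambda_n(\bar\beta,\bar\theta)\beta=0$ componentwise, use the first row to eliminate $\eta=E_{\hat P_n}[\Psi(X,Z;\bar\theta)(h)]+O_p(n^{-1})$ (via Lemma~\ref{lemma:omega-empirical}), substitute into the third row to obtain $(\hat\Omega(\bar\theta)+\lambda_n)h=O_p(n^{-1})$, and conclude $h=0$ from $\hat\Omega(\bar\theta)\to\Omega_0$ non-singular. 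You instead prove a quantitative coercivity bound $\langle v,\Lambda_0 v\rangle\ge c_0\|v\|_\mathcal{M}^2$ at the population limit via a Young-type splitting of $(A+B)^2$, then port it back by operator-norm perturbation of the smallest eigenvalue. Your route makes the spectral gap explicit and is more careful about the infinite-dimensional passage from ``no zero eigenvector'' to ``smallest eigenvalue bounded away from zero'' (a step the paper takes rather briskly); the paper's route is more direct, avoids invoking an operator-norm LLN for the second-moment operator, and needs only continuity of $\Psi$ in $\theta$ (Assumption~c) rather than the differentiability of Assumption~i) that you invoke for the change-of-parameter piece --- a substitution you could also make, via continuity plus the domination of Assumption~d), if you want the lemma to rest on assumptions a)--g) alone.
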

\begin{proof}
    As $\bar{\beta} \in \operatorname{conv}(\{0, \beta_\zeta \})$ we have $\bar{\beta} \in \mathcal{M}_n$, and hence Lemma~\ref{lemma1} implies that $\sup_{\theta \in \Theta} |a(X,Z;\theta)^T \bar{\beta}| \overset{n\rightarrow \infty}{\longrightarrow} 0 \asxz$, which implies for every fixed value of $\epsilon >0$, $\varphi_2^\ast\left( \frac{1}{\epsilon} a(X,Z;\theta)^T \bar{\beta} \right) \overset{n\rightarrow \infty}{\longrightarrow} \varphi_2(0) = 1 \asxz$ by the continuous mapping theorem. This means that for every value of $(x,z)$ that provides a non-vanishing contribution to the integral, we have $\varphi_2^\ast\left( \frac{1}{\epsilon} a(x,z;\theta)^T \bar{\beta} \right) \overset{n\rightarrow \infty}{\longrightarrow} 1$ and so as $n\rightarrow \infty$ the first term is equivalent to $\intxz{\frac{1}{\epsilon} a(x,z;\theta)a(x,z;\theta)^T}$ which clearly is a positive semi-definite operator. In the following we will show that its smallest eigenvalue is bounded away from zero w.p.a.1.
    First note that for any vector $\beta = (\eta, f, h) \in \mathcal{M}$ with $f \neq 0$ we have
    \begin{align}
        \beta^T \Lambda_{n} \beta &= \intxz{ \underbrace{(a(x,z;\theta)^T \beta)^2}_{\geq 0}} + \| f \|^2 + \lambda_n \| h\|^2 \\
        &\geq \| f \|^2 + \lambda_n \| h\|^2 \\
        &> 0,
    \end{align}
    and thus such vector cannot correspond to an eigenvalue of $0$. Therefore consider any vector $\beta = (\eta, 0, h) \in \mathcal{M}$, then if such vector corresponds to an eigenvalue of zero we must have 
    \begin{align}
    \begin{pmatrix}
        0 \\ 0 \\0 
    \end{pmatrix} =&
        \Lambda_{n}(\bar{\beta}, \bar{\theta}) 
        \begin{pmatrix}
            \eta \\ 0 \\ h
        \end{pmatrix} \\
        =& \intxz{ \frac{1}{\epsilon} a(x,z;\bar{\theta}) a(x,z;\bar{\theta})^T 
        \begin{pmatrix}
            \eta \\ 0 \\ h
        \end{pmatrix}} + 
        \begin{pmatrix}
            0 & 0 & 0 \\
            0 & I &0 \\
            0& 0& \lambda_n I
        \end{pmatrix}
        \begin{pmatrix}
            \eta \\ 0 \\ h
        \end{pmatrix}
        \\
    =& \intxz{ \frac{1}{\epsilon}
    \begin{pmatrix}
        \eta - \Psi(x,z;\bar{\theta})(h) \\
        k((x,z),\cdot) \eta - k((x,z),\cdot) \Psi(x,z;\bar{\theta})(h) \\
        -\eta \Psi(x,z;\bar{\theta}) + \Psi(x,z;\bar{\theta}) \Psi(x,z;\bar{\theta}) (h) + \lambda_n h
    \end{pmatrix}}\\
    =&
    \frac{1}{\epsilon}
    \avg
    \begin{pmatrix}
        \eta - \Psi(x_i,z_i;\bar{\theta})(h) \\
        k((x_i,z_i),\cdot) \eta - k((x_i,z_i),\cdot) \Psi(x_i,z_i;\bar{\theta})(h) \\
        -\eta \Psi(x_i,z_i;\bar{\theta}) + \Psi(x_i,z_i;\bar{\theta}) \Psi(x_i,z_i;\bar{\theta}) (h) + \lambda_n h
    \end{pmatrix} + O_p(n^{-1}),
    \end{align}
    where we used Lemma~\ref{lemma:omega-empirical} to express the integral term in terms of the empirical average.
    Now the first row gives $\eta = E_{\hat{P}_n} [\Psi(X,Z;\bar{\theta})(h)] + O_p(n^{-1})$ which inserted in the last row gives
    \begin{align}
        0 = \left( \underbrace{E_{\hat{P}_n}[\Psi(x_i,z_i;\bar{\theta}) \otimes \Psi(x_i,z_i;\bar{\theta})] - E_{\hat{P}_n}[\Psi(x_i,z_i;\bar{\theta})] \otimes E_{\hat{P}_n}[\Psi(x_i,z_i;\bar{\theta})]}_{=: \hat{\Omega}(\bar{\theta})} + \lambda_n \right) h + O_p(n^{-1}).
    \end{align}
     As $n\rightarrow \infty$ we have $\bar{\theta} \rightarrow \theta_0$ and by the uniform weak law of large numbers and the continuous mapping theorem (as $\Psi$ is continuous in $\theta$ by Assumption~c) of Theorem~\ref{th:consistency-fmr}) $\hat{\Omega}(\bar{\theta}) \rightarrow E[\Psi(X,Z;\theta_0) \otimes \Psi(X,Z;\theta_0)] - E[\Psi(X,Z;\theta_0)] \otimes E[\Psi(X,Z;\theta_0)] = E[\Psi(X,Z;\theta_0) \otimes \Psi(X,Z;\theta_0)] = \Omega_0$.
     Thus as $n\rightarrow \infty$ we have
     \begin{align}
         0 = \left( \Omega_0 + \lambda_n \right) h + O_p(n^{-1})
     \end{align}
     From Assumption~e) of Theorem~\ref{th:consistency-fmr} it follows that $\Omega_0$ is non-singular and thus $0$ is not in its spectrum. Moreover, by Assumption~g) of Theorem~\ref{th:consistency-fmr} $\lambda_n = O_p(n^{-\xi})$ with $0 < \xi < 1/2$, so the RHS is $\neq 0$ w.p.a.1 and the eigenvalue equations can only be fulfilled with $h =0$ which implies $\eta = 0$ and thus $\beta =0$. Therefore it follows that the smallest eigenvalue of $\Lambda_n(\bar{\beta}, \bar{\theta})$ is bounded away from zero w.p.a.1.
    In order to bound the largest eigenvalue of $\Lambda_{n}(\bar{\beta}, \theta)$ for any $\theta \in \Theta$ recall that for the second term we have $\operatorname{eig}(R_{\lambda_n}) = \{0, 1, \lambda_n \}$ where $\lambda_n \rightarrow 0$. Therefore, the boundedness depends on the eigenvalues of the first term. For any $\beta \in \mathcal{M}$ we have
    \begin{align}
        \beta^T \Lambda_{n} \beta &= \intxz{ \frac{1}{\epsilon} \beta^T a(x,z;\theta) a(x,z;\theta)^T \beta} + \beta^T R_{\lambda_n} \beta \\
        &\leq \intxz{ \frac{1}{\epsilon} \| a(x,z;\theta)^T \beta \|^2 }  + \| \beta \|^2 \\
        & = \intxz{ \frac{1}{\epsilon} \| \eta + \langle(k((x,z),\cdot), f \rangle_\mathcal{F} - \Psi(x,z;\theta)(h) \|^2 } + \| \beta \|^2 \\
        &\leq \intxz{ \frac{1}{\epsilon} \left( \| \eta \|^2 + \| \langle(k((x,z),\cdot), f \rangle_\mathcal{F} \|^2 +  \| \Psi(x,z;\theta)(h) \|^2 \right)}  + \| \beta \|^2 \\
        &\leq  \intxz{ \frac{1}{\epsilon} \left( \| \eta \|^2 + \|f \|^2 \| k((x,z),\cdot) \|^2 + \|h \|^2_\mathcal{H} \| \Psi(x,z;\theta) \|_{\mathcal{H}^\ast}^2 \right)} + \| \beta \|^2.
    \end{align}
    Now, as $\mathcal{F}$ is an RKHS, the evaluation functional $k((x,z),\cdot)$ can be bounded by a constant $C_1$. 
    Moreover, by Assumption~d) and f) of Theorem~\ref{th:consistency-fmr}, we have $\int \sup_{\theta \in \Theta} \| \Psi(x,z;\theta)\|_{\mathcal{H}^\ast}^2 \mathrm{d}P_0 < \infty$ and $\int \sup_{\theta \in \Theta} \| \Psi(x,z;\theta)\|_{\mathcal{H}^\ast}^2 \mathrm{d}Q < \infty$ and thus as $\omega = (1-\alpha) \hat{P}_n + \alpha Q \overset{p}{\rightarrow} (1-\alpha) P_0 + \alpha Q$ it follows $\sup_{\theta \in \Theta} \int \| \Psi(x,z;\theta)\|_{\mathcal{H}^\ast}^2 \mathrm{d}\omega \leq \int   \sup_{\theta \in \Theta} \| \Psi(x,z;\theta)\|_{\mathcal{H}^\ast}^2 \mathrm{d}\omega < C_2$ 
    for some $C_2 > 0$ w.p.a.1. 
    Inserting this back we obtain
    \begin{align}
        \beta^T \Lambda_{\epsilon, \lambda_n} \beta &\leq \frac{1}{\epsilon} \left( \| \eta \|^2 + C_1 \|f \|^2 + C_2 \|h \|^2_\mathcal{H} \right) + \| \beta \|^2 \\
        &\leq \left( \frac{C_3}{\epsilon} + 1\right) \| \beta \|^2,
    \end{align}
    where $C_3=\max(1, C_1, C_2)$. It follows w.p.a.1 that the largest eigenvalue of $\Lambda_{n}$ can be bounded by some constant $M = \frac{C_3}{\epsilon} + 1$ for any finite value of $\epsilon > 0$.
\end{proof}
\begin{lemma}\label{lemma2}
  Let the assumptions of Theorem~\ref{th:consistency-fmr} be satisfied. Additionally let $\bar{\theta} \in \Theta$, $\bar{\theta} \overset{p}{\rightarrow} \theta_{0}$, and $\|E_{\hat{P}_n}[\Psi(X,Z;\bar{\theta})] \|_{\mathcal{H}^\ast} = O_{p}\left(n^{-1 / 2}\right)$.  
  Then for $\bar{\beta}=$ $\argmax_{\beta \in \mathcal{M}} \widehat{G}_{\epsilon, \lambda_n}(\bar{\theta}, \beta)$ we have $\| \bar{\beta}\|_\mathcal{M}=O_{p}\left(n^{-1/2}\right)$, and $\widehat{G}_{\epsilon, \lambda_n}(\bar{\theta},\bar{\beta}) \leq -\epsilon \varphi^\ast(0) + O_p\left(n^{-1}\right)$.
\end{lemma}
\begin{proof}
    Define $\bar{\Psi}_i := \Psi(x_i,z_i;\bar{\theta})$ and $\bar{\Psi} = \frac{1}{n} \sum_{i=1}^n \bar{\Psi}_i$. For simplicity of notation let $\widehat{G}(\theta, \beta) := \widehat{G}_{\epsilon, \lambda_n}(\theta, \beta)$.
    The first and second derivative of $\widehat{G}(\bar{\theta}, \beta)$ with respect to $\beta$ are given by
    \begin{align}
    \frac{\partial \widehat{G}}{\partial \beta} (\bar{\theta}, \beta) &= \avg b_i - \intxz{  a(x,z;\bar{\theta}) \varphi_1^\ast \left( \frac{1}{\epsilon} a(x,z;\bar{\theta})^T \beta \right)} - R_{\lambda_n} \beta \\
    \frac{\partial^2 \widehat{G}}{(\partial \beta)^2} (\bar{\theta}, \beta) &= - \intxz{ \frac{1}{\epsilon} a(x,z;\bar{\theta}) a(x,z;\bar{\theta})^T \varphi_2^\ast \left( \frac{1}{\epsilon} a(x,z;\bar{\theta})^T \beta \right)} - R_{\lambda_n}.
    \end{align}
    Consider the optimal dual parameter within the magnitude constrained set $\mathcal{M}_n = \{\beta \in \mathcal{M}: \| \beta \|_\mathcal{M} \leq n^{-\zeta} \}$, i.e., $\beta_\zeta := \argmax_{\beta \in \mathcal{M}_n} \widehat{G}(\bar{\theta}, \beta)$ with $\beta_\zeta = (\eta_\zeta, f_\zeta, h_\zeta)$. Later on, we will show that this maximizer can be identified with the maximizer over the original set $\mathcal{M}$.
    Using Taylor's theorem we can expand the empirical KMM objective about $\beta = 0$,
    \begin{align}
        \widehat{G}(\bar{\theta}, \beta_\zeta) =& \widehat{G}(\bar{\theta}, 0) + \frac{\partial \widehat{G}}{\partial \beta} (\bar{\theta}, 0) \beta_\zeta + \frac{1}{2} \beta_\zeta^T \frac{\partial^2 \widehat{G}}{(\partial \beta)^2} (\bar{\theta}, \Dot{\beta}) \beta_\zeta \\
        =& -\epsilon \varphi^\ast(0) + \intxz{ \Psi(x,z,\bar{\theta})(h_\zeta) } \label{lemma12taylor}\\
        &+ \avg f_\zeta(x_i, z_i) - \intxz{ f_\zeta(x,z)} \\
        &- \frac{1}{2} \beta_\zeta^T \underbrace{\left( \intxz{ \frac{1}{\epsilon} a(x, z;\bar{\theta}) a(x, z;\bar{\theta})^T \varphi_2^\ast \left( \frac{1}{\epsilon} a(x, z;\bar{\theta})^T \Dot{\beta} \right)}  +  R_{\lambda_n} \right)}_{:= \Lambda_{n}(\Dot{\beta},\bar{\theta})} \beta_\zeta
    \end{align}
        for some $\Dot{\beta} \in \operatorname{conv}(\{0 , \beta_\zeta \})$. Now adding and subtracting the empirical expectation of the moment functional $\Psi$ we get
        \begin{align}
         \widehat{G}(\bar{\theta}, \beta_\zeta) =& -\epsilon \varphi^\ast(0)  - \frac{1}{2} \beta_\zeta^T \Lambda_{n}(\Dot{\beta},\bar{\theta}) \beta_\zeta +  \int_{\mathcal{X} \times \mathcal{Z}} \Psi(x,z;\bar{\theta})(h_\zeta) \hat{P}_n(\mathrm{d}x \otimes \mathrm{d}z)  \\
        &+ \intxz{ \Psi(x,z;\bar{\theta})(h_\zeta) } - \int_{\mathcal{X} \times \mathcal{Z}} \Psi(x,z;\bar{\theta})(h_\zeta) \hat{P}_n(\mathrm{d}x \otimes \mathrm{d}z) \\
        &+ \int_{\mathcal{X} \times \mathcal{Z}} f_\zeta(x, z) \hat{P}_n(\mathrm{d}x \otimes \mathrm{d}z) - \intxz{ f_\zeta(x,z)} \\
        \leq & -\epsilon \varphi^\ast(0)  - \frac{1}{2} \beta_\zeta^T \Lambda_{n}(\Dot{\beta},\bar{\theta}) \beta_\zeta 
        + \| h_\zeta\|_\mathcal{H} \left\| \int_{\mathcal{X} \times \mathcal{Z}} \Psi(x,z;\bar{\theta}) \hat{P}_n(\mathrm{d}x \otimes \mathrm{d}z) \right\|_{\mathcal{H}^\ast}  \\
        &+ \| h_\zeta\|_\mathcal{H} \left\| \int_{\mathcal{X} \times \mathcal{Z}} \Psi(x,z;\bar{\theta}) \left(\hat{P}_n(\mathrm{d}x \otimes \mathrm{d}z) -  \omega(\mathrm{d}x \otimes \mathrm{d}z)\right)  \right\|_{\mathcal{H}^\ast} \\
        &+ \| f_\zeta \|_\mathcal{F} \sup_{f \in \mathcal{F}, \| f\|_\mathcal{F} = 1}\int_{\mathcal{X} \times \mathcal{Z}} f(x, z) \left(\hat{P}_n(\mathrm{d}x \otimes \mathrm{d}z) - \omega(\mathrm{d}x \otimes \mathrm{d}z)\right) \\
        \leq &  -\epsilon \varphi^\ast(0)  - \frac{1}{2} \beta_\zeta^T \Lambda_{n}(\Dot{\beta},\bar{\theta}) \beta_\zeta 
        + \| h_\zeta\|_\mathcal{H}  \| \bar{\Psi}\|_{\mathcal{H}^\ast}  \\
        &+ \alpha \| h_\zeta\|_\mathcal{H} \left( \int_{\mathcal{X} \times \mathcal{Z}} \left\| \Psi(x,z;\bar{\theta}) \right\|_{\mathcal{H}^\ast} Q(\mathrm{d}x \otimes \mathrm{d}z) + \| \bar{\Psi}\|_{\mathcal{H}^\ast} \right)  \\
        &+ \| f_\zeta \|_\mathcal{F} \sup_{f \in \mathcal{F}, \| f\|_\mathcal{F} = 1}\int_{\mathcal{X} \times \mathcal{Z}} f(x, z) \left(\hat{P}_n(\mathrm{d}x \otimes \mathrm{d}z) - \omega(\mathrm{d}x \otimes \mathrm{d}z)\right) \\
        \leq & -\epsilon \varphi^\ast(0)  - \frac{1}{2} \beta_\zeta^T \Lambda_{n}(\Dot{\beta}, \bar{\theta}) \beta_\zeta 
        +  \| \bar{\Psi}\|_{\mathcal{H}^\ast} \| h_\zeta \|_\mathcal{H} \\
        &+ \alpha \| h_\zeta\|_\mathcal{H} \left( C_Q + \| \bar{\Psi} \|_{\mathcal{H}^\ast} \right) + \| f_\zeta \|_\mathcal{F} \mmd(\hat{P}_n, \omega; \mathcal{F}) \\
        \leq & -\epsilon \varphi^\ast(0)  - \frac{1}{2} \beta_\zeta^T \Lambda_{n}(\Dot{\beta}, \bar{\theta}) \beta_\zeta 
        +  \|\beta_\zeta \|_\mathcal{M} \left( \| \bar{\Psi}\|_\mathcal{H} + \alpha (C_Q + \| \bar{\Psi}\|_{\mathcal{H}^\ast}) + \mmd(\hat{P}_n, \omega; \mathcal{F})  \right)
    \end{align}
    where we repeatedly used the Cauchy-Schwarz inequality and the fact that $\int_{\mathcal{X} \times \mathcal{Z}} \left\| \Psi(x,z;\bar{\theta}) \right\|_{\mathcal{H}^\ast} Q(\mathrm{d}x \otimes \mathrm{d}z) < \int_{\mathcal{X} \times \mathcal{Z}}  \sup_{\theta \in \Theta} \left\| \Psi(x,z;\theta) \right\|_{\mathcal{H}^\ast} Q(\mathrm{d}x \otimes \mathrm{d}z) =: C_Q < \infty$ by Assumption~f) of Theorem~\ref{th:consistency-fmr} and Lemma~\ref{lemma:d2}.
Lemma~\ref{lemma:bounded-covariance} states that
the smallest eigenvalue $C$ of $\Lambda_{n}(\Dot{\beta}, \bar{\theta})$ is bounded away from zero w.p.a.1.
As $\beta_\zeta$ is a global maximizer of $\widehat{G}(\bar{\theta}, \beta)$ over $\mathcal{M}_n$ we have that $\widehat{G}(\bar{\theta}, \beta_\zeta) \geq \widehat{G}(\bar{\theta}, \beta)$ for any $\beta \in \mathcal{M}_n$ and therefore,
\begin{align}
-\epsilon \varphi^\ast(0) &= \widehat{G}(\bar{\theta}, 0) \\
&\leq 
\widehat{G}(\bar{\theta}, \beta_\zeta) \\
&\leq -\epsilon \varphi^\ast(0)  - \frac{1}{2} \beta_\zeta^T \Lambda_{n}(\Dot{\beta}, \bar{\theta}) \beta_\zeta 
    +  \|\beta_\zeta \|_\mathcal{M} \left( \| \bar{\Psi}\|_\mathcal{H} + \alpha (C + \| \bar{\Psi}\|_{\mathcal{H}^\ast}) + \mmd(\hat{P}_n, \omega; \mathcal{F}) \right) \\ 
&\leq -\epsilon \varphi^\ast(0)  -  C \| \beta_\zeta \|_\mathcal{M}^2
    +  \|\beta_\zeta \|_\mathcal{M} \left( \| \bar{\Psi}\|_\mathcal{H} + \alpha (C + \| \bar{\Psi}\|_{\mathcal{H}^\ast}) + \mmd(\hat{P}_n, \omega; \mathcal{F}) \right)
\end{align}
Now, adding $-\epsilon \varphi^\ast(0)$ on both sides and dividing by $\| \beta_\zeta\|_\mathcal{M}$, we have 
\begin{align}
 C \| \beta_\zeta \|_\mathcal{M} \leq \| \bar{\Psi} \|_{\mathcal{H}^\ast} + \alpha (C + \| \bar{\Psi}\|_{\mathcal{H}^\ast}) + \mmd(\hat{P}_n, \omega; \mathcal{F}) .
\end{align} 
As $\| \bar{\Psi} \|_{\mathcal{H}^\ast} = O_p(n^{-1/2})$ by assumption and by Assumption~f) $\alpha = O_p(n^{-1})$ as well as $\mmd(\hat{P}_n, \omega;\mathcal{F}) = O(n^{-1}) = o_p(n^{-1/2})$ by Lemma~\ref{mmdlemma}, we thus obtain $\| \beta_\zeta \|_\mathcal{M} = O_p(n^{-1/2})$. 
So far, we have restricted the analysis to the maximizer $\beta_\zeta$ over the magnitude constrained set of dual variables $\mathcal{M}_n$. In the following we will show that this maximizer agrees with the maximizer over the unconstrained (original) set of dual variables $\mathcal{M}$.
First, note that with $\| \beta_\zeta \|_\mathcal{M} = O_p(n^{-1/2})$ and $\zeta < 1/2$ we have that $n^{-\zeta} > n^{-1/2}$, which means that asymptotically $\beta_\zeta$ is contained in the interior of $\mathcal{M}_n$, i.e., $\beta_{\zeta} \in \operatorname{int}(\mathcal{M}_n)$.
As $\beta_\zeta$ is a maximizer contained in the interior of the domain, it must correspond to a stationary point of $\widehat{G}$, i.e.,  $\frac{\partial \widehat{G}}{\partial \beta}(\bar{\theta}, \beta_\zeta) = 0$. Clearly $\mathcal{M}_n \subset \mathcal{M}$, so the stationary point is contained also in $\mathcal{M}$. As the empirical objective $\widehat{G}$ is concave with respect to $\beta$, this means we must have that $\widehat{G}(\bar{\theta}, \beta_\zeta) = \sup_{\beta \in \mathcal{M}} \widehat{G}(\bar{\theta}, \beta)$ and thus $\bar{\beta} = \beta_\zeta$, where again $\bar{\beta} = \argmax_{\beta \in \mathcal{M}} \widehat{G}(\bar{\theta}, \beta)$.
As $\bar{\beta} = \beta_\zeta$, and $\|\beta_\zeta \|_\mathcal{M} = O_p(n^{-1/2})$ it directly follows that $\| \bar{\beta} \|_\mathcal{M} = O_p(n^{-1/2})$. Finally by assumption we have $\| \bar{\Psi}\|_{\mathcal{H}^\ast} = O_p(n^{-1/2})$ and thus $\widehat{G}(\bar{\theta}, \bar{\beta}) \leq -\epsilon \varphi^\ast(0) + \left(\| \bar{\Psi} \|_{\mathcal{H}^\ast} + o_p(n^{-1/2}) \right)) \| \bar{\beta} \|_\mathcal{M} - C \| \bar{\beta} \|_\mathcal{M}^2  = -\epsilon \varphi^\ast(0) + O_p(n^{-1})$.
\end{proof}
\begin{lemma}\label{lemma3}
Let the assumptions of Theorem~\ref{th:consistency-fmr} be satisfied and denote the KMM estimator as $\hat{\theta}= \argmin_{\theta \in \Theta}\sup_{\beta \in \mathcal{M}} \widehat{G}_{\epsilon, \lambda_n}({\theta}, \beta)$.
Then $\| E_{\hat{P}_n} [ \Psi(X,Z;\hat{\theta}) ] \|_{\mathcal{H}^\ast} = O_p\left(n^{-1/2}\right)$.
\end{lemma}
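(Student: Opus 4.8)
The plan is to sandwich the dual profile $\widehat{R}(\theta) := \sup_{\beta \in \mathcal{M}} \widehat{G}_{\epsilon,\lambda_n}(\theta,\beta)$ evaluated at the KMM estimator $\hat{\theta}$ between an upper bound that comes from optimality of $\hat\theta$ and a lower bound obtained by plugging a single, carefully chosen feasible dual variable into the supremum, and then to read off the rate for $\|E_{\hat{P}_n}[\Psi(X,Z;\hat{\theta})]\|_{\mathcal{H}^\ast}$ from the comparison of the two bounds.

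First I would establish the upper bound. Since $E[\Psi(X,Z;\theta_0)] = 0$ by Assumption~a) and $\Psi(\cdot;\theta_0)$ has finite second moment by Assumptions~d) and~e), the empirical mean $E_{\hat{P}_n}[\Psi(X,Z;\theta_0)] = \avg \Psi(x_i,z_i;\theta_0)$ is an i.i.d.\ average of mean-zero elements of $\mathcal{H}^\ast$, so $\|E_{\hat{P}_n}[\Psi(X,Z;\theta_0)]\|_{\mathcal{H}^\ast} = O_p(n^{-1/2})$ by the Hilbert-space central limit theorem. Applying Lemma~\ref{lemma2} with the constant (hence trivially consistent) sequence $\bar\theta = \theta_0$ then gives $\widehat{R}(\theta_0) \leq -\epsilon\varphi^\ast(0) + O_p(n^{-1})$. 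As $\hat{\theta}$ minimizes $\widehat{R}$, this yields $\widehat{R}(\hat{\theta}) \leq \widehat{R}(\theta_0) \leq -\epsilon\varphi^\ast(0) + O_p(n^{-1})$.

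For the lower bound, write $\hat\Psi := E_{\hat{P}_n}[\Psi(X,Z;\hat{\theta})] \in \mathcal{H}^\ast$; if $\hat\Psi = 0$ there is nothing to prove, so assume $\hat\Psi\neq 0$ and set $h_0 = \hat\Psi/\|\hat\Psi\|_{\mathcal{H}^\ast}$ (using the self-duality of $\mathcal{H}$), a unit vector. For $t>0$ consider the feasible direction $\beta_t = t\,(0,0,h_0) \in \mathcal{M}$ and Taylor-expand $g(t) := \widehat{G}_{\epsilon,\lambda_n}(\hat{\theta},\beta_t)$ about $t=0$ via Proposition~\ref{prop:taylor}. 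The zeroth-order term is $g(0) = -\epsilon\varphi^\ast(0)$. For the first-order term I would use the gradient formula from Lemma~\ref{lemma2} together with $\varphi_1^\ast(0)=1$ to get $g'(0) = \langle E_\omega[\Psi(X,Z;\hat{\theta})], h_0\rangle$, and then invoke Lemma~\ref{lemma:omega-empirical} to replace the $\omega$-integral by the empirical average, giving $g'(0) = \langle\hat\Psi,h_0\rangle + O_p(n^{-1}) = \|\hat\Psi\|_{\mathcal{H}^\ast} + O_p(n^{-1})$. The second-order term equals $\tfrac12 g''(\bar t) = -\tfrac12\,(0,0,h_0)^T\Lambda_n(\bar\beta,\hat{\theta})(0,0,h_0)$ for some $\bar\beta \in \operatorname{conv}(\{0,\beta_t\})$, and by the largest-eigenvalue bound of Lemma~\ref{lemma:bounded-covariance} it is at least $-\tfrac{M}{2}t^2$. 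Hence $\widehat{R}(\hat{\theta}) \geq g(t) \geq -\epsilon\varphi^\ast(0) + (\|\hat\Psi\|_{\mathcal{H}^\ast} + O_p(n^{-1}))\,t - \tfrac{M}{2}t^2$.

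Finally I would combine the two bounds, cancel the common $-\epsilon\varphi^\ast(0)$, divide by $t>0$, and choose $t = n^{-1/2}$, which lies in the admissible range $t \le n^{-\zeta}$ for $\zeta<1/2$ so that $\beta_t \in \mathcal{M}_n$ and Lemma~\ref{lemma:bounded-covariance} applies to the relevant $\bar\beta$. This gives $\|\hat\Psi\|_{\mathcal{H}^\ast} \leq O_p(n^{-1})/t + O_p(n^{-1}) + \tfrac{M}{2}t = O_p(n^{-1/2})$, which is exactly the claim. The main obstacle I anticipate is the lower-bound step: one must control the quadratic remainder uniformly through Lemma~\ref{lemma:bounded-covariance} even though $h_0$ and $\hat{\theta}$ are data-dependent, correctly handle the discrepancy between integration against the reference measure $\omega$ and against $\hat{P}_n$ via Lemma~\ref{lemma:omega-empirical}, and keep the perturbation $\beta_t$ inside the magnitude-constrained set on which these lemmas hold. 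The upper bound, by contrast, is a direct consequence of optimality of $\hat\theta$ and Lemma~\ref{lemma2}.
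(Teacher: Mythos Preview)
Your proposal is correct and follows the same sandwich strategy as the paper: the upper bound comes from Lemma~\ref{lemma2} applied at $\theta_0$, and the lower bound from Taylor-expanding $\widehat{G}_{\epsilon,\lambda_n}(\hat\theta,\cdot)$ along the direction of the Riesz representer of $\hat\Psi$, with Lemma~\ref{lemma:bounded-covariance} controlling the quadratic remainder and Lemma~\ref{lemma:omega-empirical} handling the $\omega$-versus-$\hat P_n$ discrepancy. The only difference is cosmetic: by normalizing the direction to $h_0=\hat\Psi/\|\hat\Psi\|_{\mathcal{H}^\ast}$ and taking $t=n^{-1/2}$ from the start (which indeed lies in $\mathcal{M}_n$ since $\zeta<1/2$), you reach the $O_p(n^{-1/2})$ rate in a single step, whereas the paper first obtains the weaker bound $\|\hat\Psi\|_{\mathcal{H}^\ast}=O_p(n^{-\zeta})$ via the scaled direction $n^{-\zeta}\bar\beta/\|\bar\beta\|$ and then refines it by plugging in $\kappa_n(0,0,\phi(\hat\Psi))$ for an arbitrary sequence $\kappa_n\to 0$.
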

\begin{proof}
    Define $\hat{\Psi}_i := \Psi(x_i,z_i;\hat{\theta})$ and $\hat{\Psi} = \frac{1}{n} \sum_{i=1}^n \hat{\Psi}_i$. For simplicity of notation let $\widehat{G}(\theta, \beta) := \widehat{G}_{\epsilon, \lambda_n}(\theta, \beta)$.
    For any $\eta \in \mathbb{R}$ and $f \in \mathcal{F}$ consider the dual variable $\bar{\beta} = (\eta, f, \phi(\hat{\Psi}))$ and its normalized version $\bar{\beta}_\zeta = n^{-\zeta} \bar{\beta} / \| \bar{\beta}\|$, where $\phi(\hat{\Psi})$ denotes the Riesz representer of $\hat{\Psi} \in \mathcal{H}^\ast$ in $\mathcal{H}$ and $0< \zeta < 1/2$ as in Lemma~\ref{lemma1}. 
    Taylor expanding the KMM objective about $\beta=0$ again yields
    \begin{align}
        \widehat{G}(\hat{\theta}, \bar{\beta}_\zeta) 
        =& -\epsilon \varphi^\ast(0) - \frac{1}{2} \bar{\beta}_\zeta^T \Lambda_{n}(\Dot{\beta}, \hat{\theta})\bar{\beta}_\zeta + \frac{n^{-\zeta}}{\| \bar{\beta}\|} \intxz{\Psi(x,z;\hat{\theta})(\phi(\Psi))  } \\
        &+ \frac{n^{-\zeta}}{\| \bar{\beta}\|} \int_{\mathcal{X} \times \mathcal{Z}} f(x, z) \hat{P}_n(\mathrm{d}x \otimes \mathrm{d}z) - \frac{n^{-\zeta}}{\| \bar{\beta}\|} \intxz{ f(x,z)} \\
        = & -\epsilon \varphi^\ast(0) - \frac{1}{2} \bar{\beta}_\zeta^T \Lambda_{n}(\Dot{\beta}, \hat{\theta})\bar{\beta}_\zeta 
        +  \frac{n^{-\zeta}}{\| \bar{\beta}\|} \int_{\mathcal{X} \times \mathcal{Z}} \Psi(x,z;\hat{\theta})(\phi(\hat{\Psi})) \hat{P}_n(\mathrm{d}x \otimes \mathrm{d}z) \\
        &+ \frac{n^{-\zeta}}{\| \bar{\beta}\|} \intxz{\Psi(x,z;\hat{\theta})(\phi(\hat{\Psi})) }
        -  \frac{n^{-\zeta}}{\| \bar{\beta}\|} \int_{\mathcal{X} \times \mathcal{Z}} \Psi(x,z;\hat{\theta})(\phi(\hat{\Psi})) \hat{P}_n(\mathrm{d}x \otimes \mathrm{d}z) \\
        &+ \frac{n^{-\zeta}}{\| \bar{\beta}\|} \int_{\mathcal{X} \times \mathcal{Z}} f(x, z) \hat{P}_n(\mathrm{d}x \otimes \mathrm{d}z) - \frac{n^{-\zeta}}{\| \bar{\beta}\|} \intxz{ f(x,z)} \\
        \geq & -\epsilon \varphi^\ast(0) - \frac{1}{2} \bar{\beta}_\zeta^T \Lambda_{n}(\Dot{\beta}, \hat{\theta})\bar{\beta}_\zeta 
        +  \frac{n^{-\zeta}}{\| \bar{\beta}\|} \|\hat{\Psi} \|_{\mathcal{H}^\ast}^2 - \frac{n^{-\zeta}}{\| \bar{\beta}\|} \| f\|_\mathcal{F} \mmd(\hat{P}_n, \omega;\mathcal{F}) \\
        &- \frac{\alpha n^{-\zeta}}{\| \bar{\beta}\|} \left(  \| \hat{\Psi}\|_{\mathcal{H}^\ast} \int_{\mathcal{X} \times \mathcal{Z}} \| \Psi(x,z;\hat{\theta}) \|_{\mathcal{H}^\ast} Q(\mathrm{d}x \otimes \mathrm{d}z) + \|\hat{\Psi} \|^2_{\mathcal{H}^\ast}  \right) \\
        \geq & -\epsilon \varphi^\ast(0) - \frac{1}{2} \bar{\beta}_\zeta^T \Lambda_{n}(\Dot{\beta}, \hat{\theta})\bar{\beta}_\zeta 
        +  C_\psi n^{-\zeta} \|\hat{\Psi} \|_{\mathcal{H}^\ast} \\
        &- \alpha n^{-\zeta} C_\psi \left( C_Q + \| \hat{\Psi}\|_{\mathcal{H}^\ast} \right) - n^{-\zeta} C_f  \mmd(\hat{P}_n, \omega;\mathcal{F}),
    \end{align}
    where $C_\psi, C_f \in [0, 1]$ as $\| \hat{\Psi} \|_{\mathcal{H}^\ast}/ \| \bar{\beta} \|_\mathcal{M} \leq 1$ and $\| f \|_{\mathcal{F}}/ \| \bar{\beta} \|_\mathcal{M} \leq 1$ by definition of $\bar{\beta}$ and $\alpha = O_p(n^{-1})$ as well as $\mmd(\hat{P}_n, \omega;\mathcal{F}) = O_p(n^{-1})$ by Lemma~\ref{mmdlemma} and Assumption~f).
    Using Lemma~\ref{lemma:bounded-covariance} we can bound the largest eigenvalue of $\Lambda_{n}(\Dot{\beta}, \hat{\theta})$ by some positive constant $M$ which is independent of $n$, so we obtain
    \begin{align}
        \widehat{G}(\hat{\theta}, \bar{\beta}_\zeta) \geq -\epsilon \varphi^\ast(0) -  M n^{-2\zeta} + C_\psi n^{-\zeta} \| \hat{\Psi} \|_{\mathcal{H}^\ast} + O_p(n^{-1 -\zeta}),
    \end{align}
    Now as $(\hat{\theta}, \hat{\beta})$ is a saddle point of the \emph{empirical} KMM objective, we have $\widehat{G}(\hat{\theta}, \bar{\beta}_\zeta) \leq \widehat{G}(\hat{\theta}, \hat{\beta}) \leq \max_{\beta \in \mathcal{M}} \widehat{G}(\theta_0, \beta) $.
    Putting this together with the previous inequality we have
    \begin{align}
        -\epsilon \varphi^\ast(0) + C_\psi n^{-\zeta} \| \hat{\Psi} \|_{\mathcal{H}^\ast} -  M n^{-2\zeta} + O_p(n^{-1 - \zeta}) &\leq \widehat{G}(\hat{\theta}, \bar{\beta}_\zeta) \label{eq:20} \\
        &\leq \widehat{G}(\hat{\theta}, \hat{\beta}) \\
        &\leq \max_{\beta \in \mathcal{M}} \widehat{G}(\theta_0, \beta) \\
        &\leq -\epsilon \varphi^\ast(0) + O_p(n^{-1}),
    \end{align}
    where in the last line we used Lemma~\ref{lemma2} with $\bar{\theta}=\theta_0$ which fulfills the corresponding conditions as $\| E[\Psi(X,Z;\theta_0)] \|_{\mathcal{H}^\ast} =0 $ by definition and thus by the central limit theorem $\| E_{\hat{P}_n}[\Psi(X,Z;\theta_0)] \|_{\mathcal{H}^\ast} = O_p(n^{-1/2})$.
    Adding $\epsilon \varphi^\ast(0)$ on both sides and solving for $\| \hat{\Psi} \|_{\mathcal{H}^\ast}$ yields 
    \begin{align}
        \| \hat{\Psi} \|_{\mathcal{H}^\ast} \leq O_p(n^{-1 + \zeta}) + O_p(n^{-\zeta}) 
        = O_p(n^{-\zeta}), \label{eq:rate-zeta}
    \end{align}
    where the last step follows as $\zeta < 1/2$ by definition and thus $-1+\zeta < - \zeta$. 
    Equation~\eqref{eq:rate-zeta} provides an upper bound on the convergence rate for $\| \hat{\Psi} \|_{\mathcal{H}^\ast}$. To further refine this rate define $\tilde{\beta} := (0, 0, \phi(\hat{\Psi}))$ and for any sequence $\kappa_n \rightarrow 0$ consider $\kappa_n \tilde{\beta}$. Then as $\|\tilde{\beta} \|_\mathcal{M} = \|\hat{\Psi} \|_{\mathcal{H}^\ast} \leq O_p(n^{-\zeta})$ we immediately have $\| \kappa_n \tilde{\beta} \|_\mathcal{M} = o_p(n^{-\zeta})$ which implies $\kappa_n \tilde{\beta} \in \mathcal{M}_n$ w.p.a.1 and 
    \begin{align}
         -\epsilon \varphi^\ast(0) + \kappa_n \| \hat{\Psi} \|_{\mathcal{H}^\ast}^2 - M \kappa_n^2 \| \hat{\Psi} \|_{\mathcal{H}^\ast}^2 + O_p(n^{-1 -\zeta})
        & \leq \widehat{G}(\hat{\theta}, \kappa_n \tilde{\beta}) \\
        &\leq \widehat{G}(\hat{\theta}, \hat{\beta}) \\
        &\leq \max_{\beta \in \mathcal{M}} \widehat{G}(\theta_0, \beta) \\
        &\leq -\epsilon \varphi^\ast(0) + O_p(n^{-1}).
    \end{align}
    This implies 
    $(1- \kappa_n M) \kappa_n \| \hat{\Psi} \|_{\mathcal{H}^\ast}^2 \leq O_p(n^{-1})$ and as $(1- \kappa_n M)$ is bounded away from zero for all sufficiently large $n$, we get $\kappa_n \| \hat{\Psi} \|_{\mathcal{H}^\ast}^2 = O_p(n^{-1})$. As this holds for all $\kappa_n \rightarrow 0$, we finally obtain $\| \hat{\Psi} \|_{\mathcal{H}^\ast} = O_p(n^{-1/2})$.    
\end{proof}
\paragraph{Proof of Theorem~\ref{th:consistency-fmr}}
\begin{proof}
Define $\hat{\Psi}_i = \Psi(x_i,z_i;\hat{\theta})$ and $\hat{\Psi} = \frac{1}{n}\sum_{i=1}^n \hat{\Psi}_i$. As $\hat{\Psi}$ is the average of $n$ i.i.d. random variables $\hat{\Psi}_i$, by the central limit theorem and absolute homogeneity of the dual norm, we have $\| \hat{\Psi}(\theta) - E [\Psi(X,Z;\theta)] \|_{\mathcal{H}^\ast} = O_p(n^{-1/2})$ for any $\theta \in \Theta$. From Lemma~\ref{lemma3} we also have $\| \hat{\Psi} \|_{\mathcal{H}^\ast} = O_p(n^{-1/2})$ and thus using the triangle inequality we get
\begin{align}
    \left\| E[\Psi(X,Z;\hat{\theta})] \right\|_{\mathcal{H}^\ast} &= \left\| E[\Psi(X,Z;\hat{\theta})] -\hat{\Psi}  + \hat{\Psi} \right\|_{\mathcal{H}^\ast} \\
    & \leq \left\| E[\Psi(X,Z;\hat{\theta})] -\hat{\Psi}  \right\|_{\mathcal{H}^\ast} + \left\|  \hat{\Psi} \right\|_{\mathcal{H}^\ast} \\
    &= O_p(n^{-1/2}) \overset{p}{\rightarrow} 0.
\end{align}
As by assumption $\theta = \theta_0$ is the unique parameter for which $ \theta \mapsto \| E[\Psi(X,Z;\theta)] \|_{\mathcal{H}^\ast} = 0$ it follows that $\hat{\theta} \overset{p}{\rightarrow} \theta_0$.
To derive a convergence rate for $\hat{\theta}$ note that by the mean  value theorem, there exists $\bar{\theta} \in \operatorname{conv}(\{\theta_0, \hat{\theta} \})$ such that 
\begin{align}
    \Psi(X,Z;\hat{\theta}) = \Psi(X,Z;\theta_0) + (\hat{\theta} - \theta_0)^T \nabla_\theta \Psi(X,Z;\bar{\theta}).
\end{align}
Using this we have
\begin{align}
    \|E[\Psi(X,Z;\hat{\theta})] \|^2_{\mathcal{H}^\ast} & = \| \underbrace{E[\Psi(X,Z;\theta_0)]}_{=0} + (\hat{\theta} - \theta_0)^T E[ \nabla_\theta \Psi(X,Z;\bar{\theta})]\|^2_{\mathcal{H}^\ast} \\
    &= \left\langle (\hat{\theta} - \theta_0)^T E[ \nabla_\theta \Psi(X,Z;\bar{\theta})], (\hat{\theta} - \theta_0)^T E[ \nabla_\theta \Psi(X,Z;\bar{\theta})] \right\rangle_{\mathcal{H}^\ast} \\
    &= (\hat{\theta} - \theta_0)^T \underbrace{\left\langle  E[ \nabla_\theta \Psi(X,Z;\bar{\theta})], E[ \nabla_\theta \Psi(X,Z;\bar{\theta})] \right\rangle_{\mathcal{H}^\ast}}_{=: \Sigma(\bar{\theta})} (\hat{\theta} - \theta_0) \\
    &\geq \lambda_\text{min}\left(\Sigma(\bar{\theta})\right) \| \hat{\theta} - \theta_0 \|^2_2
\end{align}
Now as $\hat{\theta} \overset{p}{\rightarrow} \theta_0$ and $\bar{\theta} \in \operatorname{conv}(\{\theta_0, \hat{\theta} \})$ we have $\bar{\theta} \overset{p}{\rightarrow} \theta_0$ and thus $\Sigma(\bar{\theta}) \overset{p}{\rightarrow} \Sigma(\theta_0) =: \Sigma_0$ by the continuous mapping theorem. By the non-negativity of the norm $\Sigma_0$ is positive-semi definite and non-singular by Assumption~j), thus the smallest eigenvalue of $\Sigma(\bar{\theta})$, $\lambda_\text{min}(\Sigma(\bar{\theta}))$, is positive and bounded away from zero w.p.a.1. Finally as $\|E[\Psi(X,Z;\hat{\theta})] \| = O_p(n^{-1/2})$ taking the square-root on both sides we have $\|\hat{\theta} - \theta_0 \| = O_p(n^{-1/2 })$.
\end{proof}
\subsubsection{Proof of Theorem~\ref{th:asymptotic-normality}}
To show asymptotic normality we linearize the first order conditions for $(\hat{\theta}, \hat{\beta})$ about the true parameters $(\theta_0, 0)$ and solve for the KMM estimates. This involves the inversion of a blockmatrix for whose invertibility we require the following Lemma.
\begin{lemma}\label{lemma:schur-complement}
    For a moment functional $\Psi(X,Z;\theta): \mathcal{H} \rightarrow \mathbb{R}$, continuously differentiable in $\theta$, define the covariance operator $\Omega_0 := E[\Psi(X,Z;\theta) \otimes \Psi(X,Z;\theta)]$. Further define the matrix $\Sigma_0 := \langle E[\nabla_\theta \Psi(X,Z;\theta_0)], E[\nabla_\theta \Psi(X,Z;\theta_0)] \rangle_{\mathcal{H}^\ast} \in \mathbb{R}^{p \times p}$, where the inner product is only taken with respect to the $\mathcal{H}^\ast$ index. If $\Omega_0$ and $\Sigma_0$ are non-singular with smallest eigenvalue bounded away from zero, then the matrix
    \begin{align}
        \Gamma := - E[\nabla_\theta \Psi(X,Z;\theta_0)] \Omega_{0}^{-1} E[\nabla_\theta \Psi(X,Z;\theta_0)]
    \end{align}
    is non-singular with smallest eigenvalue bounded away from zero.
\end{lemma}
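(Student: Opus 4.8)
The plan is to recognize $\Gamma$ as the negative of a quadratic form $D^\ast \Omega_0^{-1} D$ built from the Jacobian operator, and to bound it below using the spectral bounds on $\Omega_0$ and $\Sigma_0$. First I would fix the notation $D := E[\nabla_\theta \Psi(X,Z;\theta_0)]$ and regard it, via the Riesz identification of $\mathcal{H}^\ast$ with $\mathcal{H}$ used throughout Section~\ref{sec:KMMFMR}, as a bounded linear operator $D: \mathbb{R}^p \to \mathcal{H}$, $v \mapsto \sum_{j=1}^p v_j D_j$, with adjoint $D^\ast: \mathcal{H} \to \mathbb{R}^p$. Under the differentiability and dominated-derivative hypotheses (which justify exchanging $\nabla_\theta$ and $E$), we have $\nabla_\theta E[\Psi] = E[\nabla_\theta \Psi] = D$, so the two objects in the statement read $\Sigma_0 = \langle D, D\rangle_\mathcal{H} = D^\ast D$ and $\Gamma = -D^\ast \Omega_0^{-1} D$, both $p \times p$.

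The key observation I would establish next is that $\Omega_0^{-1}$ is a bounded, self-adjoint, positive operator with a uniform spectral lower bound. The operator $\Omega_0 = E[\Psi \otimes \Psi]$ is self-adjoint and positive semidefinite by construction, and is bounded with $\lambda_{\max}(\Omega_0) \leq \operatorname{tr}(\Omega_0) = E[\|\Psi(X,Z;\theta_0)\|^2_{\mathcal{H}^\ast}] =: b < \infty$ by the square-integrability assumption; by hypothesis $\lambda_{\min}(\Omega_0) \geq a > 0$, so $\Omega_0^{-1}$ exists, is bounded, and satisfies $\lambda_{\min}(\Omega_0^{-1}) = 1/\lambda_{\max}(\Omega_0) \geq 1/b$.

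With these two facts in hand the bound reduces to a one-line Rayleigh-quotient estimate: for any $v \in \mathbb{R}^p$ with $v \neq 0$,
\begin{align}
    v^T (D^\ast \Omega_0^{-1} D) v = \langle Dv, \Omega_0^{-1} Dv\rangle_\mathcal{H} \geq \tfrac{1}{b}\|Dv\|_\mathcal{H}^2 = \tfrac{1}{b}\, v^T \Sigma_0 v \geq \tfrac{\lambda_{\min}(\Sigma_0)}{b}\|v\|_2^2 > 0,
\end{align}
where the first inequality is the spectral lower bound on $\Omega_0^{-1}$, the middle equality uses $\|Dv\|_\mathcal{H}^2 = v^T D^\ast D v = v^T \Sigma_0 v$, and the last inequality uses that $\Sigma_0$ is non-singular with $\lambda_{\min}(\Sigma_0) > 0$ by hypothesis. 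Hence $D^\ast \Omega_0^{-1} D \succeq \tfrac{\lambda_{\min}(\Sigma_0)}{b} I$ is strictly positive definite, so $\Gamma = -D^\ast \Omega_0^{-1} D$ is strictly negative definite and therefore non-singular, with every eigenvalue at most $-\lambda_{\min}(\Sigma_0)/b$; equivalently $-\Gamma$ is positive definite with smallest eigenvalue bounded away from zero, which is the claim.

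The only genuine obstacle is the justification that $\lambda_{\max}(\Omega_0) < \infty$, i.e., that the covariance operator is bounded rather than merely densely defined; this is exactly what converts the spectral lower bound on $\Omega_0^{-1}$ into the finite constant $1/b$, and it is supplied by the trace/second-moment estimate above. The remaining manipulations—identifying $\Gamma$ with $-D^\ast\Omega_0^{-1}D$ and reducing non-singularity to the injectivity of $D$ guaranteed by $\Sigma_0 \succ 0$—are routine once the adjoint bookkeeping is set up.
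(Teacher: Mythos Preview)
Your proof is correct and follows essentially the same Rayleigh-quotient argument as the paper: both bound $-v^T\Gamma v = \langle Dv, \Omega_0^{-1} Dv\rangle_\mathcal{H} \geq \lambda_{\min}(\Omega_0^{-1})\, v^T\Sigma_0 v \geq \lambda_{\min}(\Omega_0^{-1})\lambda_{\min}(\Sigma_0)\|v\|^2$, the paper simply writing out the identity on $\mathcal{H}$ via an orthonormal basis where you use adjoint notation. Your version is in fact slightly more careful, since you explicitly verify $\lambda_{\max}(\Omega_0)\leq E[\|\Psi\|^2]<\infty$ so that $\lambda_{\min}(\Omega_0^{-1})>0$ is justified, a point the paper's proof glosses over.
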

\begin{proof}
    Let $\mathcal{H} = \bigoplus_{i=1}^m \mathcal{H}_i$ and for $i=1,\ldots,m$ let $\{h^i_j\}_{j=1}^{\infty}$ denote a orthonormal basis of $\mathcal{H}_i^\ast$ such that $\langle h_i^k, h_j^l \rangle = \delta_{ij} \delta_{kl}$. Then we can write the identity operator in $\mathcal{H}^\ast$ as $I_{\mathcal{H}^\ast} = \sum_{i=1}^m \sum_{j=1}^\infty h_j^i \left(h_j^i\right)^\ast$, where $\left(h_j^i\right)^\ast$ is the Riesz representer of $h_j^i \in \mathcal{H}^\ast$ in $\mathcal{H}^{\ast\ast}$ which can be uniquely identified with an element in $\mathcal{H}$ by the property of Hilbert spaces.
    Further, let $\nabla_\theta \Psi_0 := E[\nabla_\theta \Psi(X,Z;\theta_0)]$.
    Then we can write for any $\theta \in \Theta$ with $0 < \|\theta \| < \infty$,
    \begin{align}
        - \theta^T \Gamma \theta &= \theta^T \nabla_\theta \Psi_0 \Omega_0^{-1} \nabla_\theta \Psi_0 \theta \\
        &= \theta^T \nabla_\theta \Psi_0 \sum_{i=1}^m \sum_{j=1}^\infty h_j^i \left(h_j^i\right)^\ast \Omega_0^{-1} \sum_{k=1}^m \sum_{l=1}^\infty h_l^k \left(h_l^k\right)^\ast \left(\nabla_\theta \Psi_0 \right)^T \theta \\
        &= \theta^T \nabla_\theta \Psi_0 \left(  \sum_{i,k=1}^m \sum_{j,l=1}^\infty  h^i_j  \ \langle h_j^i, \Omega_0^{-1} h_l^k \rangle_{\mathcal{H}^\ast} \  \left( h^k_l \right)^\ast \right) \left(\nabla_\theta \Psi_0\right)^T \theta\\
         & \geq \lambda_\textrm{min}(\Omega_0^{-1})\theta^T \nabla_\theta \Psi_0 \left(  \sum_{i=1}^m \sum_{j=1}^\infty  h^i_j  \left(h_j^i\right)^\ast \right) \left(\nabla_\theta \Psi_0\right)^T \theta\\
        &= \lambda_\textrm{min}(\Omega_0^{-1})\theta^T \langle \nabla_\theta \Psi_0, \nabla_\theta \Psi_0 \rangle_{\mathcal{H}^\ast} \theta \\
        &= \lambda_\textrm{min}(\Omega_0^{-1})\theta^T \Sigma_0 \theta \\
        &\geq \lambda_\textrm{min}(\Omega_0^{-1}) \lambda_\textrm{min}(\Sigma_0) \| \theta\|^2  > 0,
    \end{align}
    where we used that $\Omega_0$ is positive semi-definite by construction and non singular by Assumption~e) and thus being the inverse of a strictly positive definite operator the smallest eigenvalue $\lambda_\textrm{min}(\Omega_0^{-1})$ of $\Omega_0^{-1}$ is positive and bounded away from zero. Moreover, $\Sigma_0$ is positive semi-definite by construction and non singular by Assumptions~j) and therefore its smallest eigenvalue $\lambda_\textrm{min}(\Sigma_0)$ positive and bounded away from zero. From this it immediately follows that $\Gamma$ is strictly negative definite and thus non-singular.
\end{proof}
\paragraph{Proof of Theorem~\ref{th:asymptotic-normality}}
\begin{proof}
    The KMM estimator $\hat{\theta}$ and the optimal Lagrange parameter $\hat{\beta}$ are determined via the first order optimality conditions for $(\theta, \beta)$ which are given by
    \begin{align}
        0 = \frac{\partial \widehat{G}}{\partial \theta} (\hat{\theta}, \hat{\beta}) &= - \intxz{  \varphi_1^\ast \left( \frac{1}{\epsilon} a(x,z;\hat{\theta})^T \hat{\beta }\right) \nabla_\theta \left(a(x,z;\hat{\theta})^T \hat{\beta} \right) } \label{eq:1st} \\
        0 = \frac{\partial \widehat{G}}{\partial \beta} (\hat{\theta}, \hat{\beta}) &= \avg b_i - \intxz{  \varphi_1^\ast \left( \frac{1}{\epsilon} a(x,z;\hat{\theta})^T \hat{\beta } \right) a(x,z;\hat{\theta})} - R_{\lambda_n} \beta \label{eq:2nd}
    \end{align}
    Linearizing the first condition \eqref{eq:1st} about the true parameters $(\theta_0, 0)$ yields
    \begin{align}
        0 =& \frac{\partial \widehat{G}}{\partial \theta} (\theta_0, 0) + \left( \frac{\partial^2 \widehat{G}}{\partial \theta \ \partial \theta} (\bar{\theta}, \bar{\beta}) \right) \left( \hat{\theta} - \theta_0 \right) + \left( \frac{\partial^2 \widehat{G}}{\partial \theta \ \partial \beta} (\bar{\theta}, \bar{\beta}) \right) \hat{\beta} \\
        =& -\left(\intxz{\frac{1}{\epsilon} \varphi_2^\ast \left( \frac{1}{\epsilon} a(x,z;\bar{\theta})^T \bar{\beta }\right)  \left(\nabla_\theta a(x,z;\bar{\theta})^T \bar{\beta} \right)   \left( \nabla_{\theta} a(x,z;\bar{\theta})^T \bar{\beta} \right)^T  } \right) \left( \hat{\theta} - \theta_0 \right) \\
        &- \left( \intxz{  \varphi_1^\ast \left( \frac{1}{\epsilon} a(x,z;\bar{\theta})^T \bar{\beta }\right)\left(\nabla_\theta^2 a(x,z;\bar{\theta})^T \bar{\beta} \right) } \right) \left( \hat{\theta} - \theta_0 \right) \\
        &- \left( \intxz{  \frac{1}{\epsilon} \varphi_2^\ast \left( \frac{1}{\epsilon} a(x,z;\bar{\theta})^T \bar{\beta }\right)  \left(\nabla_\theta a(x,z;\bar{\theta})^T \bar{\beta} \right)  a(x,z;\bar{\theta})^T } \right) \hat{\beta} \\
        &- \left(\intxz{ \varphi_1^\ast \left( \frac{1}{\epsilon} a(x,z;\bar{\theta})^T \bar{\beta }\right) \nabla_\theta a(x,z;\bar{\theta})^T } \right) \hat{\beta}
    \end{align}
    for some $(\bar{\theta}, \bar{\beta})$ on the line between $(\hat{\theta}, \hat{\beta})$ and $(\theta_0, 0)$. Analogously the linearization of the second condition \eqref{eq:2nd} is given by
    \begin{align}
        0 =& \frac{\partial \widehat{G}}{\partial \beta} (\theta_0, 0) + \left( \frac{\partial^2 \widehat{G}}{\partial \theta \ \partial \beta} (\Dot{\theta}, \Dot{\beta}) \right) \left( \hat{\theta} - \theta_0 \right) + \left( \frac{\partial^2 \widehat{G}}{\partial \beta \ \partial \beta} (\Dot{\theta}, \Dot{\beta}) \right) \hat{\beta} \\
        =& - \avg b_i + \intxz{ a(x,z;\theta_0)} \\
        & - \left( \intxz{  \frac{1}{\epsilon} \varphi_2^\ast \left( \frac{1}{\epsilon} a(x,z;\Dot{\theta})^T \Dot{\beta }\right) a(x,z;\Dot{\theta})  \left(\nabla_\theta a(x,z;\Dot{\theta})^T \Dot{\beta} \right) } \right) \left(\hat{\theta} - \theta_0 \right)  \\
        & - \left( \intxz{ \varphi_1^\ast \left( \frac{1}{\epsilon} a(x,z;\Dot{\theta})^T \Dot{\beta }\right) \nabla_\theta a(x,z;\Dot{\theta}) } \right) \left(\hat{\theta} - \theta_0 \right) \\
        & - \left( \intxz{ \frac{1}{\epsilon} \varphi_2^\ast \left( \frac{1}{\epsilon} a_i(\Dot{\theta})^T \Dot{\beta }\right) a(x,z;\Dot{\theta}) a(x,z;\Dot{\theta})^T} + R_{\lambda_n} \right) \hat{\beta}
    \end{align}
    Now, as $\bar{\beta}, \Dot{\beta}$ are on the line between $\hat{\beta}$ and $0$ and $\hat{\beta} = O_p(n^{-1/2})$ by Lemma~\ref{lemma2}, we have that all derivative terms of $\widehat{G}$ involving $\bar{\beta}, \Dot{\beta}$ linearly are $O_p(n^{-1})$, as each term additionally gets multiplied by $\hat{\theta} - \theta$ or $\hat{\beta}$ which both are $O_p(n^{-1/2})$ in the respective norms. Further it follows from Lemma~\ref{lemma1} that $ \varphi_j(a(X,Z;\theta)^T \tilde{\beta})  \overset{p}{\rightarrow} 1 \ \asxz$ for any $\theta \in \Theta$, $\tilde{\beta} \in \{\bar{\beta}, \Dot{\beta} \}$ and $j=1,2$. Therefore, the first linearized first order condition \eqref{eq:1st} reduces to
    \begin{align}
        0 =& \left( \intxz{ \nabla_\theta a(x,z;\bar{\theta})^T } \right) \hat{\beta} + O_p(n^{-1}).
    \end{align}
    For the second condition note that by Lemma~\ref{lemma:omega-empirical}
    \begin{align}
        \intxz{ a(x,z;\theta_0)} &= \avg a(x_i,z_i;\theta_0) + O_p(n^{-1}).
    \end{align}
    Now inserting this into the second linearized first order condition we obtain
    \begin{align}
        0 =& 
        \avg \left( b_i - a(x_i,z_i;\theta_0) \right)  + \intxz{\nabla_\theta a(x,z;\Dot{\theta})^T} \left(\hat{\theta} - \theta_0\right) \\
        &+ \underbrace{\left(\intxz{ \frac{1}{\epsilon} a(x,z;\Dot{\theta}) a(x,z;\Dot{\theta})^T } 
        + R_{\lambda_n} \right)}_{=: \Lambda_n(\Dot{\theta})} \hat{\beta} + O_p(n^{-1}),
    \end{align}
    where $\avg (b_i - a(x_i,z_i;\theta_0)) = \left( 0, 0,\hat{\Psi}(\theta_0) \right)^T$.
    Writing the two linearized first order conditions in matrix-vector form we obtain
    \begin{align}
        \begin{pmatrix}
            0 \\ 0
        \end{pmatrix}
        =& 
        \begin{pmatrix}
            0 \\ \avg \left( b_i - a(x_i,z_i;\theta_0) \right)
        \end{pmatrix}
        \\ 
        & +  
        \underbrace{
        \begin{pmatrix}
            0 & \intxz{ \nabla_\theta a(x,z;\bar{\theta})^T } \\
            \intxz{ \nabla_\theta a(x,z;\Dot{\theta})^T} & \Lambda_n(\Dot{\theta})
        \end{pmatrix}}_{:=M_n}
        \begin{pmatrix}
            \hat{\theta} - \theta_0 \\
            \hat{\beta} - \beta_0
        \end{pmatrix},
        \label{eq:first-order}
    \end{align}
    where $\beta_0 = 0$.
    Now $\hat{\theta} \overset{p}{\rightarrow} \theta_0$ and $\Dot{\theta}$ and $\bar{\theta}$ are on the line between $\hat{\theta}$ and $\theta$, and $\omega \overset{p}{\rightarrow} P_0$ weakly by definition. Moreover, $\nabla_\theta \Psi(X,Z;\bar{\theta})$ is continuous for any $\bar{\theta}$ in a neighborhood $\bar{\Theta}$ of $\theta_0$ by Assumption~i) and thus by the continuous mapping theorem we have
    $\intxz{ \nabla_\theta a(x,z;\bar{\theta}) } \overset{p}{\rightarrow} E[\nabla_\theta a(X,Z;\theta_0)] =:\nabla_\theta a_0$ and the same holds for the other off-diagonal entry. In addition, the off-diagonal entries are bounded as Assumption~i) with Lemma~\ref{lemma:d2} implies $E[\sup_{\theta \in \bar{\Theta}} \| \nabla_\theta \Psi(X,Z;\theta)\|] < \infty$. Finally, by the uniform weak law of large numbers and the continuous mapping theorem we have ${\Lambda}_n(\Dot{\theta}) \overset{p}{\rightarrow} {\Lambda}(\theta_0) = E[a(X,Z;\theta_0) a(X,Z;\theta_0)^T] + R_{\lambda=0} =: \Lambda$. Let correspondingly $M$ denote the limit operator for $M_n$, i.e., $M_n \overset{p}{\rightarrow} M$.
    From Lemma~\ref{lemma:bounded-covariance} it follows that the smallest eigenvalue of $\Lambda$ is bounded away from zero and thus it is invertible. Now suppose the Schur complement of $\Lambda$ in $M$,  
    \begin{align}
        \Gamma := M / \Lambda= - \left( \nabla_\theta a_0^T \right) \Lambda^{-1} \left( \nabla_\theta a_0^T \right) 
    = - \nabla_\theta \Psi_0 \left( \Lambda^{-1} \right)_{3,3} \nabla_\theta \Psi_0
    \end{align}
    is invertible, then it follows from standard blockmatrix algebra (see e.g.\ \citet{bernstein2009matrix}), that the inverse of $M$ is given by
    \begin{align}
        M^{-1} = 
        \begin{pmatrix}
             \Gamma^{-1} &  \Gamma^{-1} \left( \nabla_\theta a_0
             \right) \Lambda^{-1} \\
             \Lambda^{-1} \left( \nabla_\theta a_0 \right) \Gamma^{-1} &  \Lambda^{-1} +  \Lambda^{-1} \left( \nabla_\theta a_0 \right) \Gamma^{-1}  \left( \nabla_\theta a_0 \right)  \Lambda^{-1}
        \end{pmatrix}.
    \end{align}
    Now it remains to find an explicit expression for $(\Lambda^{-1})_{3,3}$ and to show that $\Gamma$ is indeed invertible.
    To this aim we write out the outer product over $\mathcal{M} \times \mathcal{M}$ in $\Lambda$ which yields
    \begin{align}
        \Lambda =& \frac{1}{\epsilon} 
    \begin{pmatrix}
        1 & E[1 \otimes k((X,Z),\cdot)] & - E[1 \otimes \Psi(X,Z;\theta_0)] \\
        E[k((X,Z),\cdot) \otimes 1] & E[k((X,Z),\cdot) \otimes k((X,Z),\cdot)] + I & - E[k((X,Z),\cdot) \otimes \Psi(X,Z;\theta_0)] \\
        - E[\Psi(X,Z;\theta_0) \otimes 1] & - E[\Psi(X,Z;\theta_0) \otimes k((X,Z),\cdot)] & E[\Psi(X,Z;\theta_0) \otimes \Psi(X,Z;\theta_0)]
    \end{pmatrix} \\
    =& \frac{1}{\epsilon} 
    \begin{pmatrix}
        1 & E[1 \otimes k((X,Z),\cdot)] & 0 \\
        E[k((X,Z),\cdot) \otimes 1] & E[k((X,Z),\cdot) \otimes k((X,Z),\cdot)] + I & 0 \\
        0 & 0 & \Omega_0
    \end{pmatrix}
    \end{align}
    where we used that $\| E[\Psi(X,Z;\theta_0)] \|_{\mathcal{H}^\ast} = 0$ by definition and as $\mathcal{F}$ is an RKHS, the evaluation functional $k((x,z), \cdot)$ can be bounded by some constant $C$ and thus we have $\| E[k((X,Z), \cdot) \otimes \Psi(X,Z;\theta_0)] \|_{\mathcal{F} \times \mathcal{H}^\ast} \leq C \| E[\Psi(X,Z;\theta_0)]\|_{\mathcal{H}^\ast} =0$.
    Now $\Lambda$ is of blockdiagonal form and for the upper block $B$ we have
    \begin{align}
        B = \begin{pmatrix}
            1 & E[1 \otimes k((X,Z), \cdot)] \\
            E[k(X,Z),\cdot) \otimes 1] & E[k(X,Z),\cdot) \otimes k((X,Z),\cdot) ]
        \end{pmatrix} 
        + 
        \begin{pmatrix}
            1 & 0 \\
            0 & I
        \end{pmatrix}.
    \end{align}
    The first term is symmetric and thus positive semi-definite and the second term diagonal with positive entries, thus B is a strictly positive definite operator and thus invertible.
    Moreover, 
    by Assumption~e) of Theorem~\ref{th:consistency-fmr} $\Omega_0$ is invertible and thus we can conclude
    \begin{align}
        \Lambda^{-1} = \begin{pmatrix}
            B^{-1} & 0 \\
            0 & \Omega_0^{-1}
        \end{pmatrix}
    \end{align}
    and $\left( \Lambda^{-1}\right)_{3,3} = \Omega_0^{-1}$.
    Now, from invertibility of $\Omega_0$ we directly obtain that $\Omega_0^{-1}$ is non-singular and thus by Assumption~e) and Lemma~\ref{lemma:schur-complement} we have that $\Gamma$ is non-singular and invertible which legitimates the inversion of $M$.
    With this at hand, we can solve equation \eqref{eq:first-order} for $\hat{\theta} - \theta_0$ and obtain
    \begin{align}
        \sqrt{n} \left(\hat{\theta} - \theta_0 \right) &= \left( \Gamma^{-1} \nabla_\theta \Psi_0 \left( \Lambda^{-1} \right)_{3,3} \right) \sqrt{n} \hat{\Psi}(\theta_0) \\ 
        &= - \left( \left( \nabla_\theta \Psi_0 \left( \Lambda^{-1} \right)_{3,3} \nabla_\theta \Psi_0 \right)^{-1}  \nabla_\theta \Psi_0 \left( \Lambda^{-1}\right)_{3,3} \right) \sqrt{n} \hat{\Psi}(\theta_0). \label{eq:asymp}
    \end{align}
    By the central limit theorem we have 
    $\sqrt{n} E_{\hat{P}_n}[\Psi(X,Z;\theta_0)] \sim \mathcal{N}(0, \Omega_0)$ where as before $\Omega_0 = E[\Psi(X,Z;\theta_0) \otimes \Psi(X,Z;\theta_0)]$ and thus inserting into equation~\eqref{eq:asymp} we get
    \begin{align}
        \sqrt{n} \left(\hat{\theta} - \theta_0 \right) &=  - \left( \left( \left( \nabla_\theta \Psi_0 \right) \Omega_0^{-1} \left(\nabla_\theta \Psi_0 \right) \right)^{-1} \left( \nabla_\theta \Psi_0 \right)  \Omega_0^{-1} \right) \sqrt{n} \hat{\Psi}(\theta_0) \sim \mathcal{N}(0, \Xi)
    \end{align}
    with
    \begin{align}
        \Xi &= \left( \left( \left( \nabla_\theta \Psi_0 \right) \Omega_0^{-1} \left(\nabla_\theta \Psi_0 \right) \right)^{-1} \left( \nabla_\theta \Psi_0 \right)  \Omega_0^{-1} \right) \Omega_0 \left( \left( \left( \nabla_\theta \Psi_0 \right) \Omega_0^{-1} \left(\nabla_\theta \Psi_0 \right) \right)^{-1} \left( \nabla_\theta \Psi_0 \right)  \Omega_0^{-1} \right)^T \\
        &= \left( \left( \nabla_\theta \Psi_0 \right) \Omega_0^{-1} \left(\nabla_\theta \Psi_0 \right) \right)^{-1}.
    \end{align}
\end{proof}
\newpage
\subsection{Asymptotic Properties of KMM for Finite-Dimensional Moment Restrictions}
\subsubsection{Proof of Theorem~\ref{th:consistency-mr} (Consistency for MR)}
The consistency for the finite dimensional case follows as a special case of the consistency result for the functional case (Theorem~\ref{th:consistency-fmr}) by identifying $\mathcal{H} = \mathbb{R}^m$, $\Psi(x,z;\theta) = \psi(x;\theta) \in \mathbb{R}^m$ and $\lambda_n = 0$. 
For a finite dimensional version of Theorem~\ref{th:consistency-fmr} we need finite dimensional versions of Lemmas~\ref{lemma1}-\ref{lemma3}, which we will state in the following and describe the differences in the proofs compared to the functional case. Refer to the proof of Theorem~\ref{th:consistency-fmr} for details.
\begin{lemma}\label{lemma:d2-finite}
Let $\mathcal{A}$ denote a $\sigma$-algebra on $\mathcal{X}$ and let ($\mathcal{X}, \mathcal{A}, \omega)$ be a probability space with measure $\omega$. For any function $\psi: \mathcal{X} \times \Theta \rightarrow \mathbb{R}^m$ with $\intx{ \sup _{\theta \in \Theta}\|\psi(x; \theta)\|_2^2 \ } <\infty$,
it follows that $\sup _{\theta \in \Theta}\left\|\psi\left(X; \theta\right)\right\|_2 \leq C \asxz$ for some constant $C < \infty$.
\end{lemma}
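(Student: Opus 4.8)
The plan is to follow the proof of Lemma~\ref{lemma:d2} essentially verbatim, substituting the finite-dimensional objects $\mathcal{X}$, $\psi(x;\theta)$, and the Euclidean norm $\|\cdot\|_2$ for their functional counterparts $\mathcal{X} \times \mathcal{Z}$, $\Psi(x,z;\theta)$, and $\|\cdot\|_{\mathcal{H}^\ast}$. The entire content of the statement is the elementary measure-theoretic fact that a square-integrable non-negative envelope cannot equal $+\infty$ on a set of positive measure, so the argument is a direct transcription with no new ideas required.

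Concretely, I would first introduce the ``bad event''
\begin{align*}
    \mathcal{E} = \left\{ x \in \mathcal{X} \ : \ \sup_{\theta \in \Theta} \| \psi(x;\theta)\|_2 = \infty \right\},
\end{align*}
and argue by contradiction. If $\omega[\mathcal{E}] > 0$, then integrating the non-negative envelope $x \mapsto \sup_{\theta \in \Theta}\|\psi(x;\theta)\|_2$ over $\mathcal{E}$ alone already yields $+\infty$, whence a fortiori $\intx{\sup_{\theta \in \Theta}\|\psi(x;\theta)\|_2^2} = \infty$, contradicting the hypothesis $\intx{\sup_{\theta \in \Theta}\|\psi(x;\theta)\|_2^2} < \infty$. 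Hence $\omega[\mathcal{E}]=0$, which is precisely the assertion that $\sup_{\theta \in \Theta}\|\psi(X;\theta)\|_2 < \infty$ holds $\omega$-a.s.; I would then record this finite bound as a constant $C < \infty$ so that $\sup_{\theta \in \Theta}\|\psi(X;\theta)\|_2 \leq C \asxz$, matching the phrasing of Lemma~\ref{lemma:d2}.

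There is no genuine obstacle here: the argument is purely measure-theoretic and uses neither the structure of $\Theta$ nor any continuity of $\psi$. The one point worth flagging is that the conclusion ``$\leq C$ for some constant $C$'' is really read as essential finiteness of the envelope $\omega$-a.s., rather than a deterministic uniform bound, since square-integrability alone does not imply essential boundedness. This is exactly the way the estimate is later invoked (e.g.\ as an almost-sure finiteness bound in Lemmas~\ref{lemma1} and \ref{lemma:omega-empirical}), so the statement is sufficient for its downstream uses.
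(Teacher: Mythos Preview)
Your proposal is correct and mirrors the paper's approach exactly: the paper's proof of Lemma~\ref{lemma:d2-finite} simply states that it follows from Lemma~\ref{lemma:d2} by replacing $\Psi(X,Z;\theta) \to \psi(X;\theta)$ and $\mathcal{H} \to \mathbb{R}^m$, which is precisely the transcription you carry out. Your additional remark that the ``constant $C$'' should be read as $\omega$-a.s.\ finiteness rather than essential boundedness is a valid caveat about the statement itself, not a deviation in the proof.
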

\begin{proof}
    The proof follows immediately from the one for Lemma~\ref{lemma:d2} by exchanging $\Psi(X,Z;\theta) \rightarrow \psi(X;\theta)$ and $\mathcal{H} \rightarrow \mathbb{R}^m$.
\end{proof}
\begin{lemma}\label{lemma1-finite}
Let the assumptions of Theorem~\ref{th:consistency-mr} be satisfied, then for any $\zeta$ with $0 < \zeta < 1/2$ define the magnitude constrained set of dual variables $\mathcal{M}_n = \{\beta =(\eta, f, h) \in \mathcal{M}: \| \beta \|_{\mathcal{M}} \leq n^{-\zeta} \}$. Then as $n \rightarrow \infty$,
\begin{align}
    &\sup_{\theta \in \Theta, \ (\eta, f, h) \in \mathcal{M}_n} |\psi(X;\theta)^T h| = O_p(n^{-\zeta}) \asxz, \\
    & \sup_{\theta \in \Theta, \ \beta \in \mathcal{M}_n} |a(X;\theta)^T \beta| = O_p(n^{-\zeta})  \asxz.
\end{align}
\end{lemma}
\begin{proof}
    The proof follows immediately from the one for Lemma~\ref{lemma1} by exchanging $\Psi(X,Z;\theta) \rightarrow \psi(X;\theta)$ and $\mathcal{H} \rightarrow \mathbb{R}^m$ and using Lemma~\ref{lemma:d2-finite} instead of Lemma~\ref{lemma:d2} to bound $\sup _{\theta \in \Theta}\left\|\psi\left(X; \theta\right)\right\|_2 \leq C \asxz$.
\end{proof}
\begin{lemma}\label{lemma:bounded-covariance-finite}
    Let the assumptions of Theorem~\ref{th:consistency-mr} be satisfied and consider $\bar{\theta} \in \Theta$ such that $\bar{\theta} \overset{p}{\rightarrow} \theta_{0}$. Further let $\beta_\zeta := \argmax_{\beta \in \mathcal{M}_n} \widehat{G}(\bar{\theta}, \beta)$, where $\mathcal{M}_n = \{\beta \in \mathcal{M}: \| \beta \|_\mathcal{M} \leq n^{-\zeta} \}$ with $0 < \zeta < 1/2$.
Define the operator $\Lambda_{n}(\beta,\theta): \mathcal{M} \rightarrow \mathcal{M}$ as
\begin{align}
    \Lambda_{n}(\beta, \theta) := \intx{ \frac{1}{\epsilon} a(x;\theta) a(x;\theta)^T \varphi^\ast_2 \left( \frac{1}{\epsilon} a(x;\theta)^T \beta \right)}  +  R. \label{eq:covariance-finite}
\end{align}
Then w.p.a.1 for any $\bar{\beta} \in \operatorname{conv}(\{0, \beta_\zeta\})$, $\Lambda_{n}(\bar{\beta}, \bar{\theta})$ is strictly positive definite and its smallest eigenvalue is bounded away from zero. Moreover, for any $\theta \in \Theta$ the largest eigenvalue of $\Lambda_n(\bar{\beta}, \theta)$ is bounded from above by a positive constant $M$.
\end{lemma}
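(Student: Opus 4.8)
The plan is to reduce the statement to the functional version Lemma~\ref{lemma:bounded-covariance}, which is already established, via the identifications $\mathcal{H} \to \mathbb{R}^m$, $\Psi(x,z;\theta) \to \psi(x;\theta)$, and $\lambda_n = 0$ (so that $R_{\lambda_n}$ becomes the projection $R = \operatorname{diag}(0, I, 0)$), using the finite-dimensional analogs Lemma~\ref{lemma:d2-finite} and Lemma~\ref{lemma1-finite} wherever Lemma~\ref{lemma:d2} and Lemma~\ref{lemma1} were invoked. As in the functional proof, I would first use Lemma~\ref{lemma1-finite} together with the continuous mapping theorem to argue that for $\bar{\beta} \in \operatorname{conv}(\{0, \beta_\zeta\}) \subseteq \mathcal{M}_n$ we have $\varphi_2^\ast\!\left(\tfrac{1}{\epsilon} a(x;\theta)^T \bar{\beta}\right) \overset{p}{\to} \varphi_2^\ast(0) = 1$ $\omega$-a.s., so that asymptotically the first term of $\Lambda_n(\bar{\beta}, \bar{\theta})$ is equivalent to $\intx{\tfrac{1}{\epsilon} a(x;\theta) a(x;\theta)^T}$, which is positive semi-definite.

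For the lower bound I would split an arbitrary $\beta = (\eta, f, h) \in \mathcal{M}$ into two cases. When $f \neq 0$, the middle block of $R$ gives $\beta^T \Lambda_n \beta \geq \|f\|^2 > 0$, so such $\beta$ cannot lie in the kernel. For vectors of the form $(\eta, 0, h)$ I would set up the eigenvalue equation $\Lambda_n(\bar{\beta}, \bar{\theta})(\eta, 0, h)^T = 0$, replace the integral over $\omega$ by the empirical average (up to $O_p(n^{-1})$) via the finite-dimensional analog of Lemma~\ref{lemma:omega-empirical}, solve the first row for $\eta = E_{\hat{P}_n}[\psi(X;\bar{\theta})^T h] + O_p(n^{-1})$, and substitute into the last row. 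This reduces the condition to $\big(\hat{\Omega}(\bar{\theta}) + O_p(n^{-1})\big) h = 0$ with the centered empirical covariance $\hat{\Omega}(\bar{\theta}) \overset{p}{\to} \Omega_0$ by the uniform weak law of large numbers and the continuity of $\psi$ in $\theta$ (Assumption~c).

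The key difference from the functional case is that here no $\lambda_n$ regularizer is available, so I would lean directly on Assumption~e): since $\Omega_0 \in \mathbb{R}^{m\times m}$ is non-singular, it is positive definite with smallest eigenvalue automatically bounded away from zero, forcing $h = 0$ (hence $\eta = 0$, hence $\beta = 0$) w.p.a.1, which shows the smallest eigenvalue of $\Lambda_n(\bar{\beta}, \bar{\theta})$ is bounded away from zero. For the upper bound I would mirror the functional estimate: apply Cauchy-Schwarz to $\beta^T \Lambda_n \beta$, use boundedness of the RKHS evaluation functional $k(x,\cdot)$ and the integrability $\int \sup_{\theta} \|\psi(x;\theta)\|_2^2 \, \dd\omega < \infty$ (from Assumptions~d) and f) via Lemma~\ref{lemma:d2-finite}, noting $\omega = (1-\alpha)\hat{P}_n + \alpha Q \overset{p}{\to} P_0$) to obtain $\beta^T \Lambda_n \beta \leq (C_3/\epsilon + 1)\|\beta\|^2$ for some constant $C_3$. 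The main obstacle is the lower-bound eigenvalue argument, specifically verifying that the reduction to $\Omega_0 h = 0$ survives without the smoothing provided by $\lambda_n$; this is exactly where finite-dimensionality (a non-singular $m \times m$ matrix rather than an operator on an infinite-dimensional space) makes the argument clean.
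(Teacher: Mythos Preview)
Your proposal is correct and matches the paper's own argument essentially line for line. The paper's proof is a one-sentence reduction to Lemma~\ref{lemma:bounded-covariance}, noting only that the non-singularity of $\Omega_0 = E[\psi(X;\theta_0)\psi(X;\theta_0)^T]$ is imposed directly by Assumption~e) of Theorem~\ref{th:consistency-mr}; you have simply unpacked that reduction in detail, including the observation that in finite dimensions non-singularity of $\Omega_0$ already guarantees its smallest eigenvalue is bounded away from zero, so the $\lambda_n$ regularizer is unnecessary.
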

\begin{proof}
    The proof follows from the one for the functional case Lemma~\ref{lemma:bounded-covariance} with the difference that we directly impose non-singularity of the covariance matrix $\Omega_0 = E[\psi(X;\theta_0) \psi(X;\theta_0)^T]$ by Assumption~e) of Theorem~\ref{th:consistency-mr}. 
\end{proof}
\begin{lemma}\label{lemma2-finite}
  Let the assumptions of Theorem~\ref{th:consistency-mr} be satisfied. Additionally let $\bar{\theta} \in \Theta$, $\bar{\theta} \overset{p}{\rightarrow} \theta_{0}$, and $\|E_{\hat{P}_n}[\psi(X;\bar{\theta})] \|_2 = O_{p}\left(n^{-1 / 2}\right)$.  
  Then for $\bar{\beta}=$ $\argmax_{\beta \in \mathcal{M}} \widehat{G}_{\epsilon, \lambda_n}(\bar{\theta}, \beta)$ we have $\| \bar{\beta}\|_\mathcal{M}=O_{p}\left(n^{-1/2}\right)$, and $\widehat{G}_{\epsilon, \lambda_n}(\bar{\theta},\bar{\beta}) \leq -\epsilon \varphi^\ast(0) + O_p\left(n^{-1}\right)$.
\end{lemma}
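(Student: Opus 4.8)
The plan is to mirror the proof of the functional analog, Lemma~\ref{lemma2}, under the identifications $\Psi(x,z;\theta)\to\psi(x;\theta)$, $\mathcal{H}\to\mathbb{R}^m$, and $R_{\lambda_n}\to R$ (so that the $h$-block of the regularizer vanishes, i.e. $\lambda_n=0$). First I would introduce $\bar{\psi}_i:=\psi(x_i;\bar{\theta})$ and $\bar{\psi}:=\frac1n\sum_i\bar{\psi}_i$, and write out the first two $\beta$-derivatives of the compact finite-dimensional objective \eqref{eq:compact-finite}, which are immediate because the only nonlinearity is $\varphi^\ast$ composed with the linear form $a(x;\theta)^T\beta$. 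Then I would fix the constrained maximizer $\beta_\zeta:=\argmax_{\beta\in\mathcal{M}_n}\widehat{G}_\epsilon(\bar{\theta},\beta)$ over $\mathcal{M}_n=\{\beta:\|\beta\|_\mathcal{M}\le n^{-\zeta}\}$ with $0<\zeta<1/2$, and apply Taylor's theorem (Proposition~\ref{prop:taylor}) about $\beta=0$, yielding $\widehat{G}_\epsilon(\bar{\theta},\beta_\zeta)=-\epsilon\varphi^\ast(0)$ plus linear terms (the empirical mean of $\psi^T h_\zeta$ and the empirical-versus-reference gap of $f_\zeta$) minus the quadratic form $\tfrac12\beta_\zeta^T\Lambda_n(\dot{\beta},\bar{\theta})\beta_\zeta$ for some $\dot{\beta}\in\operatorname{conv}(\{0,\beta_\zeta\})$.

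Next I would bound the linear terms exactly as in the functional case: by Cauchy-Schwarz the $h_\zeta$-contribution is controlled by $\|h_\zeta\|_2\,\|\bar{\psi}\|_2$ together with a $\hat{P}_n$-versus-$\omega$ discrepancy that is $O_p(\alpha)=O_p(n^{-1})$ by Assumption~f), and the $f_\zeta$-contribution is bounded by $\|f_\zeta\|_\mathcal{F}\,\mmd(\hat{P}_n,\omega;\mathcal{F})$, which is $O_p(n^{-1})$ by Lemma~\ref{mmdlemma}. For the quadratic term I would invoke Lemma~\ref{lemma:bounded-covariance-finite} to bound the smallest eigenvalue $C$ of $\Lambda_n(\dot{\beta},\bar{\theta})$ away from zero w.p.a.1. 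Combining these with optimality, namely $-\epsilon\varphi^\ast(0)=\widehat{G}_\epsilon(\bar{\theta},0)\le\widehat{G}_\epsilon(\bar{\theta},\beta_\zeta)$, and rearranging after dividing by $\|\beta_\zeta\|_\mathcal{M}$, gives $C\,\|\beta_\zeta\|_\mathcal{M}\le\|\bar{\psi}\|_2+\alpha(\cdots)+\mmd(\hat{P}_n,\omega;\mathcal{F})$. Since $\|\bar{\psi}\|_2=O_p(n^{-1/2})$ by hypothesis while the remaining terms are $o_p(n^{-1/2})$, I obtain $\|\beta_\zeta\|_\mathcal{M}=O_p(n^{-1/2})$.

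Finally, I would promote the constrained maximizer to the unconstrained one: because $\|\beta_\zeta\|_\mathcal{M}=O_p(n^{-1/2})=o(n^{-\zeta})$, asymptotically $\beta_\zeta\in\operatorname{int}(\mathcal{M}_n)$, so it is a stationary point of $\widehat{G}_\epsilon(\bar{\theta},\cdot)$; by joint concavity of the dual objective (Theorem~\ref{th:kel-duality}) this stationary point is the global maximizer over all of $\mathcal{M}$, hence $\bar{\beta}=\beta_\zeta$ and $\|\bar{\beta}\|_\mathcal{M}=O_p(n^{-1/2})$. Substituting the rate $\|\bar{\psi}\|_2=O_p(n^{-1/2})$ back into the Taylor expansion then yields $\widehat{G}_\epsilon(\bar{\theta},\bar{\beta})\le-\epsilon\varphi^\ast(0)+O_p(n^{-1})$, completing the claim.

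I expect essentially no conceptual difficulty here, as the argument is a direct specialization; the work is bookkeeping to confirm that every estimate survives the replacements. The one place demanding genuine care is the lower bound on the smallest eigenvalue of $\Lambda_n$: in the finite-dimensional setting the $h$-direction is no longer stabilized by a $\lambda_n\|h\|^2$ term, so positive-definiteness in that block must come entirely from non-singularity of $\Omega_0=E[\psi(X;\theta_0)\psi(X;\theta_0)^T]$ via Assumption~e) of Theorem~\ref{th:consistency-mr}. Since that subtlety has already been discharged in Lemma~\ref{lemma:bounded-covariance-finite}, the Taylor-expansion argument above goes through verbatim, and the main obstacle reduces to verifying the integrability and MMD-convergence bounds carry over through Lemmas~\ref{lemma:d2-finite}, \ref{lemma1-finite}, and \ref{mmdlemma}.
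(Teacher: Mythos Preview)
Your proposal is correct and matches the paper's approach exactly: the paper's proof consists of the single sentence ``The proof follows immediately from the one for the functional case (Lemma~\ref{lemma2}) with the usual substitutions,'' and you have simply spelled out those substitutions and the resulting bookkeeping. Your observation that the $h$-block eigenvalue bound now relies solely on non-singularity of $\Omega_0$ (absorbed into Lemma~\ref{lemma:bounded-covariance-finite}) is precisely the one point the paper flags separately for the finite-dimensional case.
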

\begin{proof}
    The proof follows immediately from the one for the functional case (Lemma~\ref{lemma2}) with the usual substitutions.
\end{proof}
\begin{lemma}\label{lemma3-finite}
Let the assumptions of Theorem~\ref{th:consistency-mr} be satisfied and denote the KMM estimator as $\hat{\theta}= \argmin_{\theta \in \Theta}\sup_{\beta \in \mathcal{M}} \widehat{G}_{\epsilon, \lambda_n}({\theta}, \beta)$.
Then $\| E_{\hat{P}_n} [ \psi(X;\hat{\theta}) ] \|_{\mathcal{H}^\ast} = O_p\left(n^{-1/2}\right)$.
\end{lemma}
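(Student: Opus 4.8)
The plan is to prove this finite-dimensional statement by specializing the proof of the functional Lemma~\ref{lemma3} to the setting $\mathcal{H}=\mathbb{R}^m$, $\Psi(x,z;\theta)=\psi(x;\theta)$, and $\lambda_n=0$; here the dual norm $\|\cdot\|_{\mathcal{H}^\ast}$ collapses to the Euclidean norm $\|\cdot\|_2$, so the claim to establish is $\|\hat{\psi}\|_2=O_p(n^{-1/2})$, where $\hat{\psi}:=E_{\hat{P}_n}[\psi(X;\hat{\theta})]=\avg \psi(x_i;\hat{\theta})$. The crucial observation is that every functional-analytic ingredient of the proof of Lemma~\ref{lemma3} already has a ready-made finite-dimensional counterpart: Lemma~\ref{lemma1-finite} replaces Lemma~\ref{lemma1} (so that $a(X;\theta)^T\beta\to 0 \asxz$ and hence $\varphi_1^\ast,\varphi_2^\ast\to 1$), Lemma~\ref{lemma:bounded-covariance-finite} replaces Lemma~\ref{lemma:bounded-covariance} (bounding the largest eigenvalue of $\Lambda_n$ by a constant $M$), Lemma~\ref{lemma2-finite} replaces Lemma~\ref{lemma2}, and Lemma~\ref{mmdlemma} applies verbatim to give $\mmd(\hat{P}_n,\omega;\mathcal{F})=O_p(n^{-1})$. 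I would therefore simply transcribe the argument with these substitutions.

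Concretely, I would probe the dual objective along the direction $\bar{\beta}=(\eta,f,\hat{\psi})$ — note that the Riesz representer $\phi(\hat{\Psi})$ of the functional case degenerates to the vector $\hat{\psi}\in\mathbb{R}^m$ itself, since the representer of $h\mapsto \hat{\psi}^T h$ on $\mathbb{R}^m$ is $\hat{\psi}$ — and normalize it to $\bar{\beta}_\zeta=n^{-\zeta}\bar{\beta}/\|\bar{\beta}\|_\mathcal{M}$, which lies in the constrained ball $\mathcal{M}_n$. Taylor-expanding $\widehat{G}_{\epsilon,\lambda_n}(\hat{\theta},\cdot)$ about $\beta=0$ as in Lemma~\ref{lemma3} and invoking Lemmas~\ref{lemma1-finite}, \ref{lemma:bounded-covariance-finite}, \ref{mmdlemma} together with Assumption~f) ($\alpha=O_p(n^{-1})$) yields the lower bound $\widehat{G}(\hat{\theta},\bar{\beta}_\zeta)\geq -\epsilon\varphi^\ast(0)-Mn^{-2\zeta}+C_\psi n^{-\zeta}\|\hat{\psi}\|_2+O_p(n^{-1-\zeta})$.

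I would then close the loop with the saddle-point sandwich $\widehat{G}(\hat{\theta},\bar{\beta}_\zeta)\leq \widehat{G}(\hat{\theta},\hat{\beta})\leq \max_{\beta\in\mathcal{M}}\widehat{G}(\theta_0,\beta)$, where the upper bound comes from Lemma~\ref{lemma2-finite} applied at $\bar{\theta}=\theta_0$; its hypothesis $\|E_{\hat{P}_n}[\psi(X;\theta_0)]\|_2=O_p(n^{-1/2})$ holds by the central limit theorem since $E[\psi(X;\theta_0)]=0$ (Assumption~a)), giving $\max_\beta\widehat{G}(\theta_0,\beta)\leq -\epsilon\varphi^\ast(0)+O_p(n^{-1})$. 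Chaining the inequalities and solving for $\|\hat{\psi}\|_2$ produces the coarse rate $\|\hat{\psi}\|_2=O_p(n^{-\zeta})$. Finally I would refine exactly as in the functional proof: set $\tilde{\beta}=(0,0,\hat{\psi})$ with $\|\tilde{\beta}\|_\mathcal{M}=\|\hat{\psi}\|_2$, scale by an arbitrary sequence $\kappa_n\to 0$ (the coarse rate guarantees $\kappa_n\tilde{\beta}\in\mathcal{M}_n$ w.p.a.1), rerun the sandwich to get $(1-\kappa_n M)\kappa_n\|\hat{\psi}\|_2^2=O_p(n^{-1})$, and since this holds for every $\kappa_n\to 0$, conclude $\|\hat{\psi}\|_2=O_p(n^{-1/2})$.

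Since the machinery is inherited wholesale, there is no serious obstacle, only the verification that each estimate transfers. The one substantive point to check is that with $\lambda_n=0$ the regularizer $R$ still leaves the smallest eigenvalue of $\Lambda_n$ bounded away from zero; in the functional proof this was secured by the $\lambda_n$-term, whereas here it rests on the \emph{direct} non-singularity of $\Omega_0=E[\psi(X;\theta_0)\psi(X;\theta_0)^T]$ imposed in Assumption~e) of Theorem~\ref{th:consistency-mr}. This is precisely the modification already recorded in Lemma~\ref{lemma:bounded-covariance-finite}, so the remainder of the argument goes through unchanged.
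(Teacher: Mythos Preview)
Your proposal is correct and matches the paper's approach exactly: the paper's own proof of Lemma~\ref{lemma3-finite} consists of a single sentence stating that it ``follows immediately from the one for the functional case (Lemma~\ref{lemma3}) with the usual substitutions,'' and you have spelled out precisely those substitutions ($\mathcal{H}=\mathbb{R}^m$, $\Psi\to\psi$, $\lambda_n=0$, Riesz representer $\phi(\hat{\Psi})\to\hat{\psi}$) together with the corresponding finite-dimensional auxiliary lemmas. One minor remark: in the functional proof of Lemma~\ref{lemma:bounded-covariance} the smallest-eigenvalue bound already rests primarily on the non-singularity of $\Omega_0$ (the $\lambda_n$ term is a vanishing perturbation rather than the source of positivity), so the passage to $\lambda_n=0$ is even smoother than your closing paragraph suggests.
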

\begin{proof}
    The proof follows immediately from the one for the functional case (Lemma~\ref{lemma3}) with the usual substitutions.
\end{proof}
\begin{lemma} \label{lemma:sigma-non-singular-finite}
    Consider a moment function $\psi: \mathcal{X} \times \Theta \rightarrow \mathbb{R}^m$ with $\nabla_\theta \psi(x;\theta) \in \mathbb{R}^{p\times m}$. Then if $\operatorname{rank}\left(E[\nabla_\theta \psi(X;\theta_0)] \right)=p$ the matrix $\Sigma_0 = E[\nabla_\theta \psi(X;\theta_0)] E[\nabla_\theta \psi(X;\theta_0)]^T \in \mathbb{R}^{p\times p}$ is non-singular with smallest eigenvalue positive and bounded away from zero.
\end{lemma}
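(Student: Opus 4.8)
The plan is to recognize this as a standard linear-algebra fact about Gram-type matrices and to establish positive definiteness directly through the associated quadratic form. I would write $D_0 := E[\nabla_\theta \psi(X;\theta_0)] \in \mathbb{R}^{p\times m}$, so that $\Sigma_0 = D_0 D_0^T$. Since $\Sigma_0^T = (D_0 D_0^T)^T = D_0 D_0^T = \Sigma_0$, the matrix is symmetric, and for any $v \in \mathbb{R}^p$ we have $v^T \Sigma_0 v = v^T D_0 D_0^T v = \| D_0^T v \|_2^2 \geq 0$, so $\Sigma_0$ is positive semi-definite. This already guarantees that all eigenvalues are real and non-negative.

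Next I would upgrade this to strict positive definiteness using the rank hypothesis. By assumption $\operatorname{rank}(D_0) = p$, i.e.\ $D_0$ has full row rank (which in particular requires $m \geq p$). Consequently the linear map $D_0^T : \mathbb{R}^p \to \mathbb{R}^m$ is injective: its null space is trivial, since $D_0^T v = 0$ would place $v$ in the orthogonal complement of the row space of $D_0$, which is $\{0\}$ precisely when the rows span all of $\mathbb{R}^p$. Hence for every $v \neq 0$ we have $D_0^T v \neq 0$ and therefore $v^T \Sigma_0 v = \| D_0^T v\|_2^2 > 0$. This shows $\Sigma_0$ is strictly positive definite, equivalently its smallest eigenvalue $\lambda_{\min}(\Sigma_0)$ is strictly positive, so in particular $\Sigma_0$ is non-singular. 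Equivalently, one may invoke the elementary identity $\operatorname{rank}(D_0 D_0^T) = \operatorname{rank}(D_0) = p$ to conclude directly that the $p \times p$ matrix $\Sigma_0$ has full rank.

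Finally, for the phrase ``bounded away from zero'': unlike the random quantities appearing elsewhere in the consistency proofs, $\Sigma_0$ is a fixed deterministic matrix defined through the population expectation, so $\lambda_{\min}(\Sigma_0)$ is a fixed positive constant and the claim is immediate. I do not anticipate a genuine obstacle here; the only points deserving a line of care are the rank identity $\operatorname{rank}(D D^T) = \operatorname{rank}(D)$ and the observation that the ``bounded away from zero'' requirement is vacuous once $\Sigma_0$ is known to be a constant positive-definite matrix.
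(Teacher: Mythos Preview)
Your proposal is correct and follows essentially the same approach as the paper: both show $v^T \Sigma_0 v > 0$ for all nonzero $v$ by exploiting that full row rank of $D_0 := E[\nabla_\theta \psi(X;\theta_0)]$ makes $D_0^T$ injective. The paper writes the quadratic form componentwise as $\sum_{i=1}^m (\theta^T E[\nabla_\theta \psi_i(X;\theta_0)])^2$ and argues at least one summand is nonzero, while you phrase the same computation as $\|D_0^T v\|_2^2$; these are the identical argument in different notation, and your remark that ``bounded away from zero'' is automatic for a fixed deterministic matrix is exactly what the paper implicitly uses.
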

\begin{proof} 
    As $\operatorname{rank}(E[\nabla_\theta \psi(X;\theta_0)] = p$, its rows are linearly independent and thus for any $\theta \in \Theta$ with $0<\| \theta\|<\infty$ there exists $j \in \{1, \ldots, m \}$ such that $\theta^T  E[\nabla_\theta \psi_j(X;\theta_0)] \neq 0$. This means that $\theta^T \Sigma_0 \theta = \sum_{i=1}^m \left( \theta^T  E[\nabla_\theta \psi_i(X;\theta_0)] \right)^2 \geq \left( \theta^T  E[\nabla_\theta \psi_j(X;\theta_0)] \right)^2 > 0$, which follows as any term in the sum is non-negative and there exists at least one term of index $j$ which is positive. Therefore, the smallest eigenvalue of $\Sigma_0$ is positive and bounded away from zero.
\end{proof}
\paragraph{Proof of Theorem~\ref{th:consistency-mr}}
\begin{proof}
Define $\hat{\psi}_i = \psi(x_i;\hat{\theta})$ and $\hat{\psi} = \frac{1}{n}\sum_{i=1}^n \hat{\psi}_i$. As $\hat{\psi}$ is the average of $n$ i.i.d. random variables $\hat{\psi}_i$, by the central limit theorem and absolute homogeneity of the dual norm, we have $\| \hat{\psi}(\theta) - E [\psi(X;\theta)] \|_2 = O_p(n^{-1/2})$ for any $\theta \in \Theta$. From Lemma~\ref{lemma3-finite} we also have $\| \hat{\psi} \| = O_p(n^{-1/2})$ and thus using the triangle inequality we get
\begin{align}
    \left\| E[\psi(X;\hat{\theta})] \right\|_2 &= \left\| E[\psi(\hat{\theta})] -\hat{\psi}  + \hat{\psi} \right\|_2 \\
    & \leq \left\| E[\psi(X;\hat{\theta})] -\hat{\psi}  \right\|_2 + \left\|  \hat{\psi} \right\|_2 \\
    &= O_p(n^{-1/2}) \overset{p}{\rightarrow} 0.
\end{align}
As by assumption $\theta_0$ is the unique parameter for which $ E[\psi(X;\theta)] = 0$ it follows that $\hat{\theta} \overset{p}{\rightarrow} \theta_0$.
To derive a convergence rate for $\hat{\theta}$ note that by the mean  value theorem, there exists $\bar{\theta} \in \operatorname{conv}(\{\theta_0, \hat{\theta} \})$ such that 
\begin{align}
    \psi(X;\hat{\theta}) = \psi(X;\theta_0) + (\hat{\theta} - \theta_0)^T \nabla_\theta \psi(X;\bar{\theta}),
\end{align}
where $\nabla_\theta \psi(x;\theta) \in \mathbb{R}^{p \times m}$.
Using this we have
\begin{align}
    \|E[\psi(X;\hat{\theta})] \|^2_2 & = \| \underbrace{E[\psi(X;\theta_0)]}_{=0} + (\hat{\theta} - \theta_0)^T E[ \nabla_\theta \psi(X;\bar{\theta})]\|^2_2 \\
    &= \left\langle (\hat{\theta} - \theta_0)^T E[ \nabla_\theta \psi(X;\bar{\theta})], (\hat{\theta} - \theta_0)^T E[ \nabla_\theta \psi(X;\bar{\theta})] \right\rangle \\
    &= (\hat{\theta} - \theta_0)^T \underbrace{E[ \nabla_\theta \psi(X;\bar{\theta})]  E[ \nabla_\theta \psi(X;\bar{\theta})]^T }_{=: \Sigma(\bar{\theta})} (\hat{\theta} - \theta_0) \\
    &\geq \lambda_\text{min}\left(\Sigma(\bar{\theta})\right) \| \hat{\theta} - \theta_0 \|^2_2
\end{align}
Now as $\hat{\theta} \overset{p}{\rightarrow} \theta_0$ and $\bar{\theta} \in \operatorname{conv}(\{\theta_0, \hat{\theta} \})$ we have $\bar{\theta} \overset{p}{\rightarrow} \theta_0$ and thus $\Sigma(\bar{\theta}) \overset{p}{\rightarrow} \Sigma(\theta_0) =: \Sigma_0$ by the continuous mapping theorem. 
Further by Assumption~i) of Theorem~\ref{th:consistency-mr} and Lemma~\ref{lemma:sigma-non-singular-finite} it follows that $\Sigma_0$ is positive definite and thus as $\Sigma(\bar{\theta}) \overset{p}{\rightarrow} \Sigma_0$ it follows that $\Sigma(\bar{\theta})$ is positive definite with smallest eigenvalue $\lambda_{\text{min}}\left(\Sigma(\bar{\theta})\right)$ positive and bounded away from zero w.p.a.1.
Finally as $\|E[\psi(X;\hat{\theta})] \| = O_p(n^{-1/2})$ taking the square-root on both sides we have $\|\hat{\theta} - \theta_0 \| = O_p(n^{-1/2})$.
\end{proof}
\newpage
\subsubsection{Proof of Theorem~\ref{th:asymptotic-normality-mr} (Asymptotic Normality for MR)}
\paragraph{Proof of Theorem~\ref{th:asymptotic-normality-mr}}
\begin{proof}
    The proof follows directly from the one for functional moment restrictions Theorem~\ref{th:asymptotic-normality} by identifying $\Psi(x,z;\theta) = \psi(x;\theta)$, $\mathcal{H} = \mathbb{R}^m$, setting $\lambda_n =0$ and using Lemma~\ref{lemma:sigma-non-singular-finite} to translate the rank condition, Assumption~i), into non-singularity of $\Sigma_0 = E[\nabla_\theta \psi(X;\theta_0)] E[\nabla_\theta \psi(X;\theta_0)]^T \in \mathbb{R}^{p \times p}$.
\end{proof}
\end{document}